\DeclareMathOperator*{\argmin}{arg\,min}
\newcommand{\R}{\mathbb{R}}
\newcommand{\N}{\mathbb{N}}
\newcommand{\E}{\mathbb{E}}
\renewcommand\d{d}
\DeclareMathOperator{\Div}{div}
\theoremstyle{plain}
\newtheorem{lemma}{Lemma}
\newtheorem{theorem}[lemma]{Theorem}
\newtheorem{corollary}[lemma]{Corollary}
\newtheorem{proposition}[lemma]{Proposition}
\theoremstyle{definition}
\newtheorem{remark}[lemma]{Remark}
\newtheorem{example}[lemma]{Example}
\newtheorem{definition}[lemma]{Definition}
\title{On the Relation between Rectified Flows and Optimal Transport}
\author{%
  Johannes Hertrich\\
  Universit\'e Paris Dauphine - PSL\\
  \& Inria Mokaplan, Paris\\
  \texttt{johannes.hertrich@dauphine.psl.eu}
  \And
  Antonin Chambolle \\
  Universit\'e Paris Dauphine - PSL\\
  \& Inria Mokaplan, Paris \\
  \texttt{chambolle@ceremade.dauphine.fr}
  \And
  Julie Delon \\
  ENS Paris \\
  \texttt{julie.delon@ens.fr} \\
}
\begin{document}

\maketitle

\begin{abstract}
This paper investigates the connections between rectified flows, flow matching, and optimal transport. 
Flow matching is a recent approach to learning generative models by estimating velocity fields that guide transformations from a source to a target distribution. Rectified flow matching aims to straighten the learned transport paths, yielding more direct flows between distributions.
Our first contribution is a set of invariance properties of rectified flows and explicit velocity fields. In addition, we also provide explicit constructions and analysis in the Gaussian (not necessarily independent) and Gaussian mixture settings and study the relation to optimal transport. 
Our second contribution addresses recent claims suggesting that rectified flows, when constrained such that the learned velocity field is a gradient, can yield (asymptotically) solutions to optimal transport problems. We study the existence of solutions for this problem and demonstrate that they only relate to optimal transport under assumptions that are significantly stronger than those previously acknowledged.
In particular, we present several counterexamples that invalidate earlier equivalence results in the literature, and we argue that enforcing a gradient constraint on rectified flows is, in general, not a reliable method for computing optimal transport maps.
\end{abstract}

\section{Introduction}

Optimal transport is the problem of transporting a probability distribution $\mu_0$ to another distribution $\mu_1$ such that a given cost function has minimal expected value. More precisely, we aim to find a coupling $(X_0,X_1)$ of random variables $X_0\sim \mu_0$ and $X_1\sim\mu_1$ such that
$
\E[c(X_0,X_1)]
$
is minimal, where $c$ is a cost function, most commonly $c(x,y)=\|x-y\|$ (\textit{$1$-Wasserstein or earth-mover distance}) or $c(x,y)=\|x-y\|^2$ (\textit{squared $2$-Wasserstein distance}).
The problem was originally formulated by Monge \cite{M1781} in terms of optimal transport maps and later generalized by Kantorovich \cite{K1942} using more general couplings. We refer to \cite{PC2019,S2015,Villani2003} for an overview. Nowadays, it is heavily used in machine learning for clustering~\cite{ho2017multilevel,mi2018variational}, domain adaptation~\cite{courty2016optimal}, generative modeling~\cite{arjovsky2017wasserstein,korotin2022wasserstein} or model selection~\cite{Backhoff-Veraguas:2018aa}, to cite just a few references.

Computing solutions to the optimal transport problem can be very challenging in practice. In the discrete case, it amounts to solving a linear program and can be efficiently accelerated using entropic regularization and the Sinkhorn algorithm \cite{C2013,SK1967}.
A dynamic formulation of optimal transport, introduced by Benamou and Brenier \cite{BB2000},  characterizes optimal solutions as those induced by a velocity field $v_t$ of minimal $L^2$-norm (for the $2$-Wasserstein distance) among fields transporting $\mu_0$ to $\mu_1$ via the ordinary differential equation $\dot z_t(x)=v_t(z_t(x))$.
From an analytic viewpoint this corresponds to finding a curve $t\mapsto\mu_t$ interpolating between $\mu_0$ to $\mu_1$ in the space of probability measures, such that the continuity equation
$
\partial_t \mu_t + \mathrm{div}(v_t\mu_t)=0
$
holds and $\int_t\int \|v_t\|^2d\mu_t dt$ is minimal.

This dynamic perspective on transporting probability measures is also used in flow-based generative models, such as continuous normalizing flows~\cite{CRBD2018,GCBSD2018} or denoising diffusion models \cite{HJA2020,SSKKEP2021}.
However, the transport maps learned by these models are in general not optimal. 

Recently, Liu et al.~\cite{LCL2023} proposed to learn such a velocity field by starting from an arbitrary coupling and averaging up all linear interpolations between $X_0$ and $X_1$. Similar ideas were introduced at the same time in \cite{AV2021,LCBNL2023} under the name flow matching and stochastic interpolants. Building on this framework, rectified flow matching seeks to straighten the learned transport paths, leading to more direct flows between distributions.

\paragraph{Outline and Contributions}
In this paper, we study the relation between rectified flows and optimal transport. In particular, we show that the iterative rectification proposed in~\cite{L2022} is not, in general, a suitable tool for computing optimal transport maps.
In Section~\ref{sec:backgrounds}, we start with revisiting the backgrounds on rectified flows.
Additionally, we show some invariance properties under affine transformations and derive the optimal velocity fields for the Gaussian case.
Then, we show in Section~\ref{sec:gradient_conditions} how these properties change if we constrain the velocity fields to be a gradient.
In particular, we prove that a solution of the constrained problem always exists in a weak form.
Afterwards, in Section~\ref{sec:main}, we construct the following two counterexamples, where the relation between optimal transport couplings and fixed points of the rectification procedure is false~:
\begin{itemize}
    \item[-] First, in Section~\ref{sec:disconnected_support}, we consider an example where the support of the interpolated distributions $\mu_t$ is disconnected. In this case, we can divide the space in several disconnected subdomains. Then, we can construct a transport which is optimal on each subdomain, but not globally optimal. This example shows in particular, that the claim from \cite[Theorem~5.6]{L2022} is not true without additionally assuming that $X_t$ has connected support. Since datasets in applications are often disconnected this massively restricts the applicability of rectified flows for computing the optimal transport.
    \item[-] Second, we pay attention to so-called non-rectifiable couplings in Section~\ref{sec:deg_velocity}. These are couplings such that the velocity field $v_t$ learned by the rectification does not lead to a unique solution of the ODE $\dot Z_t=v_t(Z_t)$. We show an example that such couplings exist and can have zero loss even though they are not optimal.
\end{itemize}
We prove a sufficient criterion for a coupling to be rectifiable based on the smoothness of the conditional probability of $X_0$ given $X_1$.
This criterion is in particular fulfilled for the independent coupling with smooth initial distribution $\mu_0$ or when a small amount of noise is injected into $X_0$.
In Section~\ref{sec:noise_injection} we consider the special case where $\mu_0$ is a Gaussian distribution. Then, we modify the iterative rectification by injecting noise in each step to ensure that the iterates remain rectifiable. We show that the arising procedure is still marginal preserving and has the same decay of the loss function as without noise injection.
We discuss the implications of our results and draw conclusions in Section~\ref{sec:discussion}.

\paragraph{Related Work}

Rectified flows or flow matching was introduced in \cite{AV2021,LCBNL2023,LCL2023} and further investigated in \cite{ABV2023}, see \cite{LHHS2024,WS2025} for an overview. Error of the generated distribution by the training error in the velocity field were proven by \cite{BDD2023,RBSR2024}. The basic idea of defining an interpolation path of latent and target measure was previously used in diffusion models and several other models \cite{HHS2022,WKN2020}. Applications and improvements of the training procedure of rectified flows were considered in \cite{CHSW2024,LKY2023,LLF2024,LZMP2028,MGHS2025}.

In order to compute the optimal transport using a flow-based generative model, \cite{HG2024,OFLR2021,XCX2023} include a regularization term of the velocity field into the loss function of a continuous normalizing flow \cite{CRBD2018,GCBSD2018}. However, this approach alters the marginals of the solution.
Other papers \cite{HCTC2021, KMGK2024} propose to compute optimal transport maps by using Brenier's theorem \cite{B1987} and representing the convex potential of the transport map by an input convex neural network. The authors of \cite{KSB2023} use the dual formulation of optimal transport and solve the arising saddle point problem with neural networks. 
The authors of \cite{ABV2023,AV2021} relate flow matching with (entropic) optimal transport by considering non-linear interpolants between the marginals. 

To initialize rectified flows close to the optimal coupling, \cite{PBDALC2023,TFMH2024} propose minibatch OT. That is, they draw a minibatches from both marginals $\mu_0$ and $\mu_1$ and pair the data points within these batches by computing the discrete optimal transport between them. However, the coupling generated by minibatch OT is again not  optimal in general.

In \cite{ABV2023} the authors propose a noisy version of flow matching. Iterating this procedure is related to the Schrödinger bridge problem under the name Diffusion Schrödinger Bridge Matching (DSBM) \cite{DKMD2024,SDCD2023}. This corresponds to computing the entropically regularized optimal transport plan, see \cite{L2014} for an overview. While convergence is guaranteed for DSBM under mild assumptions, our counterexamples suggest that the convergence rate becomes arbitrary slow when the entropic regularization parameter tends to zero. We include some numerical tests in this direction in Appendix~\ref{app:entropic}.

\section{Rectified Flows}\label{sec:backgrounds}

In this section, we provide an overview of the backgrounds of rectified flows.
Afterwards, we derive invariance properties of the rectification procedure and study the case where
all involved measures are (mixtures of) Gaussians.
We will see that these invariance properties already have some similarities to optimal transport.
Additionally, they will be needed in the proofs later in the paper.

\subsection{Backgrounds}

\paragraph{Wasserstein Distance}

We define a coupling between two probability measures $\mu_0$ and $\mu_1$ on $\R^d$ as a pair $(X_0,X_1)$ of
random variables with $X_0\sim\mu_0$ and $X_1\sim\mu_1$.
The Wasserstein-$2$ distance is given by
$$
W_2^2(\mu_0,\mu_1)=\inf_{X_0\sim\mu_0,X_1\sim\mu_1}\E[\|X_0-X_1\|^2].
$$
In addition we say that a coupling $(X_0,X_1)$ between $\mu_0$ and $\mu_1$ is optimal if it fulfills $W_2^2(\mu_0,\mu_1)=\E[\|X_0-X_1\|^2]$.
The Wasserstein distance $W_2$ defines a metric on the space of probability measures with finite second moment. Throughout the paper, we assume that all considered probability measures belong to this space.
The distance $W_2$ belongs to the family of optimal transport discrepancies, which are defined in the same way by replacing the squared 
Euclidean norm by some more general cost function $c$.

\paragraph{Rectified Flows}

In order to build a generative model for some target measure $\mu_1$ based on some latent measure $\mu_0$,
the authors of \cite{L2022,LCL2023} propose rectified flows, which are also known by the name flow matching \cite{LCBNL2023,LHHS2024} or stochastic interpolants \cite{ABV2023,AV2021}.
Given a coupling $(X_0,X_1)$ between $\mu_0$ and $\mu_1$, we consider the 
interpolations $X_t=(1-t)X_0+tX_1$ and denote by $\mu_t$ the distribution of $X_t$.
Then, we construct a velocity field $(v_t)_{t\in[0,1]}$ for $\mu_t$ by minimizing the loss function
\begin{equation}\label{eq:flow_matching_loss}
v_t\in\argmin_{w_t\in L^2(\mu_t)}\mathcal L(w_t|X_0,X_1),\quad \mathcal L(w_t|X_0,X_1)\coloneqq\int_0^1\E[\|w_t(X_t)-X_1+X_0\|^2]\d t.
\end{equation}
The authors of \cite{AV2021,LCBNL2023,LCL2023} show that the minimizer of this problem exists and is unique. Using the optimal prediction property of conditional expectations, the solution $v_t$ of \eqref{eq:flow_matching_loss} 
can be formulated as the conditional expectation
\begin{equation}\label{eq:cond_exp}
v_t(x)=\E[X_1-X_0|X_t=x]=\frac{1}{1-t}\E[X_1-X_t|X_t=x]=\frac{1}{1-t}\left(\E[X_1|X_t=x]-x\right).
\end{equation}
Additionally, they show that the solution $v_t$ fulfills the continuity equation with respect to $\mu_t$, i.e., that
\begin{equation}\label{eq:continuityequation}
\partial_t\mu_t+\Div(v_t\mu_t)=0
\end{equation}
in a distributional sense.
In particular, under the assumption that $v_t$ is smooth enough, $v_t$ defines a transport from $\mu_0$ to $\mu_1$
in the sense that $\mu_1={z_1}_\#\mu_0$, where $z_t(x)$ is the solution of the ODE 
\begin{equation}\label{eq:ODE}
\dot z_t(x)=v_t(z_t(x)),\quad\text{with initial condition}\quad z_0(x)=x.
\end{equation}
If the ODE \eqref{eq:ODE} has a unique solution, we can sample 
from $\mu_1$ by sampling from $\mu_0$ and solving the ODE \eqref{eq:ODE}.
In the literature, the arising generative model produces state-of-the-art results \cite{AV2021,LCBNL2023,LCL2023}.

\paragraph{Iterative Rectification}

In order to obtain simpler velocity fields, Liu et al.~\cite{LCL2023} propose to construct a new ``rectified'' coupling as follows.
\begin{definition}\label{def:rec}
Let $(X_0,X_1)$ be a coupling between $\mu_0$ and $\mu_1$ and denote by $v_t$ the minimizer of the loss function \eqref{eq:flow_matching_loss}. Then,
we call $(X_0,X_1)$ ``rectifiable'' if the ODE \eqref{eq:ODE} has a unique solution and define $(Z_0,Z_1)=\mathcal{R}(X_0,X_1)$ with $Z_0\sim\mu_0$ and $Z_1\sim\mu_1$
by setting $Z_0=X_0$ and $Z_1$ to the solution of $\dot Z_t=v_t(Z_t)$ at time $1$.
\end{definition}
The authors of \cite{LCL2023} prove that this procedure always reduces the transport distance of the coupling,
that is, it holds $\E[\|Z_0-Z_1\|^2]\leq\E[\|X_0-X_1\|^2]$.
Moreover, we can iterate this procedure by generating a sequence $(X_0^{(k+1)},X_1^{(k+1)})=\mathcal{R}(X_0^{(k)},X_1^{(k)})$.
Then, the minimal loss function over the first $K$ iteration converges to zero, i.e., it holds 
$\min_{k=0,...,K}\{\min_{w_t} \mathcal L(w_t|X_0^{(k)},X_1^{(k)})\}\to 0$ as $K\to\infty$.
Additionally, any coupling $(X_0,X_1)$ with velocity field $v_t$ such that $\mathcal L(v_t|X_0,X_1)=0$ is a fixed point of $\mathcal{R}$.
Intuitively, these plans can be characterized by the property that the paths of the ODE \eqref{eq:ODE} are straight.
That is, for $\mu_0$-almost every $x$, the solution path $t\mapsto z_t(x)$ of the ODE $\dot z_t=v_t(z_t(x))$ has constant velocity in time $v_t(z_t(x))$.

\subsection{Affine Invariance and Gaussian Case}\label{sec:invariances}

Next, we consider some equivariances of the rectification step $\mathcal R$ with respect to translations
and scalings of one or both marginals of the argument $\gamma$. The proofs are given in Appendix~\ref{proofs:invariances}.

\begin{theorem}[Affine Transformations]\label{thm:invariances}
Let $(X_0,X_1)$, be a coupling between $\mu_0$ and $\mu_1$, let $v_t=\argmin_{w_t}\mathcal L(w_t|X_0,X_1)$ be the minimizer of the loss
function \eqref{eq:flow_matching_loss} and let $A\in\R^{d\times d}$ be invertible, $b\in\R^d$ and $c\in\R_{>0}$. Then, the following holds true.
\begin{itemize}
\item[(i)] The velocity
$v_t^{A,b}=\argmin_{w_t}\mathcal L(w_t|AX_0+b,AX_1+b)$ is given by $v_t^{A,b}(x)=Av_t(A^{-1}(x-b))$.
\item[(ii)] The velocity $v_t^{b}=\argmin_{w_t}\mathcal L(w_t|X_0,X_1+b)$ is given by $v_t^{b}(x)=v_t(x-tb)+b$.
\item[(iii)] The velocity $v_t^{c}=\argmin_{w_t}\mathcal L(w_t|X_0,cX_1)$ is given by 
$$v_t^c=\frac{c}{1-t+ct}v_r\left(\frac{x}{1-t+tc}\right)+\frac{c-1}{1-t+tc}x,\quad\text{with}\quad r=\frac{tc}{1-t+tc}.$$
\end{itemize} 
If $(X_0,X_1)$ is in addition rectifiable with $(Z_0,Z_1)=\mathcal R(X_0,X_1)$, then it holds that
$\mathcal R(AX_0+b,AX_1+b)=(AZ_0+b,AZ_1+b)$,
$\mathcal R(X_0,X_1+b)=(Z_0,Z_1+b)$
and 
$\mathcal R(X_0,cX_1)=(Z_0,cZ_1)$.
\end{theorem}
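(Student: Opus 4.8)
The plan is to read off the three velocity formulas directly from the conditional-expectation characterization \eqref{eq:cond_exp}, exploiting that linear interpolation commutes nicely with affine maps of the marginals, and then to deduce the equivariance of $\mathcal R$ by exhibiting an explicit (possibly time-reparametrized) ansatz for the solution of the transformed ODE \eqref{eq:ODE} and appealing to uniqueness. For (i), the interpolant of $(AX_0+b,AX_1+b)$ is $AX_t+b$, and conditioning on $\{AX_t+b=x\}$ is the same as conditioning on $\{X_t=A^{-1}(x-b)\}$ since $A$ is invertible; as $(AX_1+b)-(AX_0+b)=A(X_1-X_0)$, \eqref{eq:cond_exp} gives $v_t^{A,b}(x)=A\,\E[X_1-X_0\mid X_t=A^{-1}(x-b)]=Av_t(A^{-1}(x-b))$. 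Case (ii) is the same computation with interpolant $X_t+tb$ and increment $(X_1+b)-X_0$.

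Case (iii) is the only subtle one, because the interpolant $(1-t)X_0+tcX_1$ is not an affine image of $X_t$ but a rescaled, time-reparametrized copy: matching coefficients gives $(1-t)X_0+tcX_1=s(t)\,X_{r(t)}$ with $s(t)=1-t+tc$ and $r(t)=tc/s(t)$, which is the value of $r$ in the statement. One then expresses the increment $cX_1-X_0$ through $X_{r(t)}$: from $(1-r)X_0+rX_1=X_r$ and $\E[X_1-X_0\mid X_r]=v_r(X_r)$ one obtains $\E[X_0\mid X_r=x]=x-rv_r(x)$ and $\E[X_1\mid X_r=x]=x+(1-r)v_r(x)$, hence $\E[cX_1-X_0\mid X_r=x]=(c-1)x+\bigl(c-(c-1)r\bigr)v_r(x)$; substituting $x=y/s(t)$ and simplifying, with the key identity $c-(c-1)r=c/s(t)$ when $r=tc/s(t)$, collects the terms into the closed form for $v_t^c$.

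For the equivariance of $\mathcal R$, assume $(X_0,X_1)$ is rectifiable with flow $z_t$ and $Z_1=z_1(X_0)$. In case (i) the ansatz $z_t^{A,b}(x)=Az_t(A^{-1}(x-b))+b$ has the right initial value and, using the formula for $v_t^{A,b}$, solves $\dot z_t^{A,b}=v_t^{A,b}(z_t^{A,b})$; any competing solution would, under $x\mapsto A^{-1}(x-b)$, produce a second solution of the original ODE, contradicting rectifiability, so $z_t^{A,b}$ is the unique flow and evaluating at $t=1$, $x=AX_0+b$ yields $\mathcal R(AX_0+b,AX_1+b)=(AZ_0+b,AZ_1+b)$. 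Case (ii) runs identically with $z_t^b(x)=z_t(x)+tb$. For (iii) one takes the time-reparametrized ansatz $z_t^c(x)=s(t)\,z_{r(t)}(x)$, which satisfies $z_0^c(x)=x$ and $z_1^c(x)=c\,z_1(x)$; the chain rule together with $\dot s=c-1$, $\dot r=c/s^2$ and $c-(c-1)r=c/s$ shows it solves $\dot z_t^c=v_t^c(z_t^c)$, and uniqueness again follows from that of $z_t$, giving $\mathcal R(X_0,cX_1)=(Z_0,cZ_1)$.

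The routine parts are the conditional-expectation manipulations in (i) and (ii) and the verifications that the proposed affine flows solve the relevant ODEs. The main obstacle is case (iii): one must identify the time change $r(t)=tc/(1-t+tc)$ and the scaling $s(t)=1-t+tc$, keep track of the fact that the transformed velocity is evaluated at $x/s(t)$ rather than at an affine image of $z_t$, and check that the differential and algebraic identities relating $s$, $r$ and $v_r$ conspire so that $s(t)z_{r(t)}$ really solves the rescaled ODE. A subsidiary point to handle cleanly is that rectifiability of $(X_0,X_1)$ must first be transported to the transformed couplings before one may speak of ``the'' rectified coupling $\mathcal R(\cdot)$; this is precisely the inverse-transformation argument used above.
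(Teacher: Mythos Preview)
Your proposal is correct and follows essentially the same approach as the paper: both read off the transformed velocities directly from the conditional-expectation formula \eqref{eq:cond_exp} and then verify that the naturally transformed (in case (iii), time-reparametrized and rescaled) flow $Z_t^c=(1-t+tc)Z_{r(t)}$ solves the new ODE. Your handling of (iii) computes $\E[cX_1-X_0\mid X_r]$ via $\E[X_0\mid X_r]$ and $\E[X_1\mid X_r]$ whereas the paper works through $\tfrac{1}{1-t}(\E[X_1\mid X_t^c=x]-x)$, but both arrive at $\tfrac{c}{1-t+tc}\,v_r\bigl(x/(1-t+tc)\bigr)+\tfrac{c-1}{1-t+tc}\,x$ (the bare $c$ in front of $v_r$ in the displayed statement is a typo, as the paper's own proof confirms); your explicit inverse-transformation argument for transporting rectifiability to the new coupling is a shade more careful than the paper, which leaves that step implicit.
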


The invariances (ii) and (iii) hold also true for the optimal transport and the corresponding velocity fields from the Bernamou-Brenier theorem. Part (i) is false for optimal transport and we will see in Remark~\ref{rem:inv_potential} that it is no longer true if we use the loss function \eqref{eq:loss_potential} instead of \eqref{eq:flow_matching_loss}.
In the specific case that $\mu_0$ and $\mu_1$ are Gaussian and the joint distribution of $(X_0,X_1)$ is a Gaussian as well,
we can write down analytically the velocity field $v_t$ which solves \eqref{eq:flow_matching_loss}.
Additionally, for the independent coupling of two Gaussians sharing the same eigenvectors, we can show that 
already the first rectification step leads to the optimal coupling.
The proof is given in Appendix~\ref{proofs:invariances}.

\begin{theorem}[Gaussian Case]\label{thm:Gauss}
Assume that $(X_0,X_1)\sim\mathcal N(0,\Sigma)$ with
$
\Sigma=\left(\begin{array}{cc}\Sigma_{0}&\Sigma_{10}\\\Sigma_{01}&\Sigma_{1}\end{array}\right),
$
for positive definite $\Sigma_{0}$ and $\Sigma_{1}$.
Then, the following holds true.
\begin{itemize}
\item[(i)] The minimizer $v_t=\argmin_{w_t}\mathcal L(w_t|X_0,X_1)$ of the loss function \eqref{eq:flow_matching_loss} is given by
\begin{equation}
\label{eq:velocity_gaussian}
v_t(x)=\frac{1}{1-t}\left(((1-t)\Sigma_{01}+t\Sigma_1)\Sigma_t^{-1}-\mathrm{Id}\right)x,
\end{equation}
where $\Sigma_t=\mathrm{Cov}(X_t)=(1-t)^2\Sigma_0+(1-t)t(\Sigma_{01}+\Sigma_{10})+t^2\Sigma_1$.
\item[(ii)] Let $\Sigma_{01}=\Sigma_{10}=0$ and assume that $\Sigma_0$ and $\Sigma_1$ can be jointly diagonalized. Then, $(Z_0,Z_1)\coloneqq\mathcal R(X_0,X_1)$ is the unique
optimal coupling between $\mu_0$ and $\mu_1$.
\end{itemize}
\end{theorem}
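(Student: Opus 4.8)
The plan is to prove both parts by direct computation, leveraging the joint-Gaussian structure so that all conditional expectations are linear and all interpolating measures remain Gaussian.

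For part (i), I would start from the characterization \eqref{eq:cond_exp}, namely $v_t(x)=\tfrac{1}{1-t}(\E[X_1\mid X_t=x]-x)$. Since $(X_0,X_1)$ is jointly Gaussian, the pair $(X_t,X_1)=((1-t)X_0+tX_1,\,X_1)$ is jointly Gaussian and centered, so $\E[X_1\mid X_t=x]=\mathrm{Cov}(X_1,X_t)\,\mathrm{Cov}(X_t)^{-1}x$. The only computation is then to expand $\mathrm{Cov}(X_1,X_t)=(1-t)\Sigma_{10}+t\Sigma_1$ — wait, one must be careful with the ordering of $\Sigma_{01}$ versus $\Sigma_{10}$; using $\mathrm{Cov}(X_1,X_0)=\Sigma_{10}$ I get $\mathrm{Cov}(X_1,X_t)=(1-t)\Sigma_{10}+t\Sigma_1$, and since the stated formula has $\Sigma_{01}$ I would just track the convention $\Sigma_{01}=\Sigma_{10}^{\tT}$ carefully and match it. Expanding $\mathrm{Cov}(X_t)=(1-t)^2\Sigma_0+(1-t)t(\Sigma_{01}+\Sigma_{10})+t^2\Sigma_1$ is immediate from bilinearity. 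Substituting into \eqref{eq:cond_exp} and factoring out $\tfrac{1}{1-t}$ yields \eqref{eq:velocity_gaussian}; the term $-\mathrm{Id}$ comes from the $-x$ inside the bracket. I should also note that $v_t$ is linear in $x$, hence smooth, so the ODE \eqref{eq:ODE} has a unique solution and $(X_0,X_1)$ is rectifiable — this is needed to even speak of $\mathcal R(X_0,X_1)$ in part (ii).

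For part (ii), set $\Sigma_{01}=\Sigma_{10}=0$, so $X_0\perp X_1$ and $\Sigma_t=(1-t)^2\Sigma_0+t^2\Sigma_1$. By the joint-diagonalizability hypothesis there is an orthogonal $Q$ with $\Sigma_0=Q D_0 Q^{\tT}$, $\Sigma_1=Q D_1 Q^{\tT}$, $D_i$ diagonal with positive entries; after the orthogonal change of variables $x\mapsto Q^{\tT}x$ (which, by Theorem~\ref{thm:invariances}(i), commutes with $\mathcal R$), the problem decouples into $d$ independent one-dimensional problems with variances $\sigma_{0,j}^2,\sigma_{1,j}^2$. In dimension one, from part (i) the velocity is $v_t(x)=\tfrac{1}{1-t}\big(\tfrac{t\sigma_1^2}{(1-t)^2\sigma_0^2+t^2\sigma_1^2}-1\big)x=:a(t)x$ for a scalar function $a(t)$, so the ODE $\dot z_t=a(t)z_t$ is linear and solves as $z_t(x)=x\exp(\int_0^t a(s)\,ds)$. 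I would compute $\int_0^1 a(s)\,ds$ explicitly — recognizing $a(t)=\tfrac{d}{dt}\log\big((1-t)\sigma_0 + \text{(something)}\big)$ or, more robustly, noting $z_t(x)$ must push $\mathcal N(0,\sigma_0^2)$ to $\mathcal N(0,\sigma_t^2)$ with $\sigma_t^2=(1-t)^2\sigma_0^2+t^2\sigma_1^2$, and since $z_t$ is linear it must be $z_t(x)=\tfrac{\sigma_t}{\sigma_0}x$; hence $z_1(x)=\tfrac{\sigma_1}{\sigma_0}x$. Thus $Z_1=\tfrac{\sigma_1}{\sigma_0}Z_0$ componentwise, which is exactly the monotone (Brenier) map between one-dimensional Gaussians, and reassembling the $d$ coordinates gives $Z_1 = Q\,\mathrm{diag}(\sigma_{1,j}/\sigma_{0,j})\,Q^{\tT} Z_0 = \Sigma_1^{1/2}\Sigma_0^{-1/2} Z_0$ (using joint diagonalizability to commute the square roots), which is the known unique optimal transport map between centered Gaussians sharing eigenvectors. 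Uniqueness of the optimal coupling follows since $\mu_0$ is absolutely continuous, so Brenier's theorem applies.

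The main obstacle is not conceptual but bookkeeping: verifying that the linear ODE solution $z_t(x)=(\sigma_t/\sigma_0)x$ is consistent with $\dot z_t = a(t) z_t$ — i.e., checking $\tfrac{d}{dt}\log\sigma_t = a(t)$ — and then correctly assembling the one-dimensional maps back through $Q$ while making sure the resulting matrix $\Sigma_1^{1/2}\Sigma_0^{-1/2}$ is symmetric positive definite (which needs the shared eigenbasis) so that it is a gradient of a convex quadratic and hence genuinely the Brenier map. Care with the $\Sigma_{01}$ versus $\Sigma_{10}$ convention in part (i) is the other place where a sign or transpose error could creep in.
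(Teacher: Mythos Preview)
Your proof plan is correct. Part~(i) is essentially identical to the paper's argument: both compute the joint Gaussian law of $(X_t,X_1)$, read off $\E[X_1\mid X_t=x]=\Sigma_{t1}\Sigma_t^{-1}x$, and insert it into \eqref{eq:cond_exp}.

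For part~(ii) your route differs from the paper's. The paper cites an external result \cite[Prop.~4.12]{RBSR2024} for the special case $\Sigma_0=\mathrm{Id}$ and then invokes Theorem~\ref{thm:invariances}(i) with $A=\Sigma_0^{-1/2}$ to reduce the jointly-diagonalizable case to that one. You instead use Theorem~\ref{thm:invariances}(i) with the orthogonal matrix $A=Q^{\tT}$ to reduce to diagonal $\Sigma_0,\Sigma_1$, observe that the linear velocity field then acts componentwise, and solve the resulting one-dimensional ODEs directly (your shortcut $z_t(x)=(\sigma_t/\sigma_0)x$ via the pushforward constraint is clean and avoids computing $\int_0^t a(s)\,ds$). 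This buys you a fully self-contained argument with no external citation, at the cost of a little more computation; the paper's version is shorter but outsources the core step. Both reductions ultimately rely on the joint diagonalizability: in the paper's case to ensure that conjugating the $\Sigma_0=\mathrm{Id}$ optimal map by $\Sigma_0^{1/2}$ still yields an optimal map, in yours to make the problem decouple.
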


In the special case where $\Sigma_0=\mathrm{Id}$, part (ii) was already proven in \cite[Prop 4.12]{RBSR2024}. Note that as a direct consequence of Theorem~\ref{thm:invariances} (i) and the explicit representation of the optimal transport for Gaussian measures, part (ii) of the previous theorem is no longer true if we skip the assumption that $\Sigma_0$ and $\Sigma_1$ can be jointly diagonalized. 
The authors of \cite{RBSR2024} also emphasize the one-dimensional case. However, it is straightforward to see that in the one-dimensional case any rectifiable coupling leads to the optimal transport after one step. For completeness, we formalize the result in the following proposition and include a proof in Appendix~\ref{proofs:invariances}.

\begin{proposition}\label{prop:1d}
Consider the one-dimensional case and let $(X_0,X_1)$ be a rectifiable coupling between $\mu_0$ and $\mu_1$. Then $(Z_0,Z_1)=\mathcal R(X_0,X_1)$ is the optimal coupling between $\mu_0$ and $\mu_1$.
\end{proposition}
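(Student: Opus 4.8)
The plan is to show that the rectified coupling $(Z_0,Z_1)$ is concentrated on a monotone subset of $\R\times\R$, and then to invoke the one-dimensional theory of optimal transport, according to which being concentrated on a monotone set characterizes optimality — and uniqueness — for the quadratic cost. First I would fix notation: since $(X_0,X_1)$ is rectifiable, for $\mu_0$-almost every $x$ the ODE \eqref{eq:ODE} with initial value $x$ has a unique solution $t\mapsto z_t(x)$ on $[0,1]$, and $\mu_1=(z_1)_\#\mu_0$. Writing $T:=z_1$ and letting $E$ denote the (full $\mu_0$-measure) set of admissible initial values, this means $(Z_0,Z_1)=(X_0,T(X_0))$ with $T_\#\mu_0=\mu_1$, so it remains only to prove that $T$ is non-decreasing on $E$.

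The core of the argument is a non-crossing property of the trajectories. Let $x,x'\in E$ with $x<x'$. The map $t\mapsto z_t(x)-z_t(x')$ is continuous and equals $x-x'<0$ at $t=0$; if $z_1(x)-z_1(x')>0$, the intermediate value theorem would produce $s\in(0,1)$ with $z_s(x)=z_s(x')=:y$. But then $t\mapsto z_t(x)$ and $t\mapsto z_t(x')$ are both solutions of \eqref{eq:ODE} on $[s,1]$ taking the value $y$ at $t=s$, so by the uniqueness built into rectifiability they coincide on $[s,1]$, contradicting $z_1(x)>z_1(x')$. Hence $T(x)\le T(x')$, i.e. $T$ is non-decreasing on $E$, and therefore the law of $(Z_0,Z_1)$ is concentrated on the monotone set $\{(x,T(x)):x\in E\}$.

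Finally I would conclude by the classical one-dimensional result (monotone rearrangement; see e.g. \cite[Ch.~2]{S2015} or \cite{Villani2003}): a coupling between $\mu_0$ and $\mu_1$ that is concentrated on a monotone set is optimal for the quadratic cost, and since $x\mapsto x^2$ is strictly convex this optimal coupling is unique. Thus $(Z_0,Z_1)$ coincides with the optimal coupling between $\mu_0$ and $\mu_1$, which is the claim.

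I expect the only genuinely delicate point to be the non-crossing step: this is exactly where rectifiability enters, and one should emphasize that nothing more than forward uniqueness of solutions of \eqref{eq:ODE} from an intermediate time is used — no Lipschitz or continuity assumption on $v_t$, and no backward-uniqueness statement, is required. The one thing to be careful about is to restrict every statement to the full-measure set $E$ of initial values for which the flow is well defined, so that "$T$ non-decreasing" and "law concentrated on a monotone set" are meaningful even if the velocity field is irregular on a $\mu_0$-null set.
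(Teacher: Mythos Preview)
Your proof is correct and follows essentially the same approach as the paper: both argue by the intermediate value theorem that two flow trajectories starting from distinct initial points cannot cross (since a crossing would contradict the uniqueness in the definition of rectifiability), conclude that $z_1$ is non-decreasing, and then invoke the one-dimensional characterization of optimal couplings (the paper via Brenier's theorem, you via the monotone-set formulation, which is equivalent). Your additional care about restricting to the full-$\mu_0$-measure set $E$ and your remark that only forward uniqueness from an intermediate time is needed are welcome clarifications but do not change the argument.
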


The explicit representation of $v_t$ in the case where the coupling $(X_0,X_1)$ follows a Gaussian distribution can be generalized to Gaussian mixture models by averaging the vector fields induced by the components, as outlined in the following theorem. The proof is included in Appendix~\ref{proofs:invariances}.

\begin{theorem}[Gaussian Mixture Case]\label{thm:GMM}
Assume that $(X_0,X_1)\sim\sum_{k=1}^K\pi_k\mathcal N(m^k,\Sigma^k)$ with
$m^k = \left(\begin{array}{c}m^k_{0}\\m^k_{1}\end{array}\right)$ and $
\Sigma^k=\left(\begin{array}{cc}\Sigma^k_{0}&\Sigma^k_{10}\\\Sigma^k_{01}&\Sigma^k_{1}\end{array}\right)
$
for positive definite $\Sigma^k_{0}$ and $\Sigma^k_{1}$.
Write $v_t^k$ the velocity field \eqref{eq:velocity_gaussian} for the covariance matrix $\Sigma^k$ and write $w_t^k(x) = v_t^k(x - tm_1^k-(1-t)m_0^k)+ m_1^k - m_0^k$.
Then, the minimizer $v_t=\argmin_{w_t}\mathcal L(w_t|X_0,X_1)$ of the loss function \eqref{eq:flow_matching_loss} is given by
\begin{equation}
\label{eq:gmm_velocity}
    v_t(x)= \sum_{k=1}^K \alpha^k(x) w_t^k(x), 
\end{equation}
 where $\alpha^k(x) = \frac{\pi_k p_t^k(x)}{\sum_{j=1}^K \pi_j p_t^j(x)}$, with $p_t^j$ the Gaussian density of $\mathcal{N}(m^j_t,\Sigma^j_t)$ with $m^j_t = tm_1^j+(1-t)m_0^j$ and $\Sigma^j_t = t^2\Sigma_1^j+(1-t)^2\Sigma_0^j+t(1-t)(\Sigma_{10}^j+\Sigma_{01}^j)$.
\end{theorem}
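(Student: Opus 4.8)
The plan is to introduce the latent mixture index as an auxiliary random variable and then combine the tower property of conditional expectation with the explicit single-Gaussian velocity field of Theorem~\ref{thm:Gauss}(i).

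First I would let $Z$ be a discrete random variable on $\{1,\dots,K\}$ with $\Pr(Z=k)=\pi_k$ and $(X_0,X_1)\mid Z=k\sim\mathcal N(m^k,\Sigma^k)$, so that the marginal law of $(X_0,X_1)$ is the prescribed mixture. Conditioned on $\{Z=k\}$, the interpolation $X_t=(1-t)X_0+tX_1$ is an affine image of a Gaussian vector, hence $X_t\mid Z=k\sim\mathcal N(m_t^k,\Sigma_t^k)$ with exactly the $m_t^k$ and $\Sigma_t^k$ stated in the theorem; in particular $X_t\mid Z=k$ has density $p_t^k$. Starting from the characterization $v_t(x)=\E[X_1-X_0\mid X_t=x]$ in \eqref{eq:cond_exp} and applying the tower property over $Z$, I would obtain
\begin{equation}
v_t(x)=\sum_{k=1}^K \Pr(Z=k\mid X_t=x)\,\E[X_1-X_0\mid X_t=x,\,Z=k],
\end{equation}
so that it remains to identify the two factors.

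For the weights, Bayes' rule together with the density $p_t^k$ of $X_t\mid Z=k$ gives $\Pr(Z=k\mid X_t=x)=\pi_k p_t^k(x)\big/\sum_{j}\pi_j p_t^j(x)=\alpha^k(x)$. For the conditional velocities I would center each component: on $\{Z=k\}$ write $(X_0,X_1)=(m_0^k,m_1^k)+(\tilde X_0,\tilde X_1)$ with $(\tilde X_0,\tilde X_1)\sim\mathcal N(0,\Sigma^k)$; then $X_t=m_t^k+\tilde X_t$ and $X_1-X_0=(m_1^k-m_0^k)+(\tilde X_1-\tilde X_0)$, so conditioning on $X_t=x$ amounts to conditioning on $\tilde X_t=x-m_t^k$, whence $\E[X_1-X_0\mid X_t=x,Z=k]=(m_1^k-m_0^k)+v_t^k(x-m_t^k)=w_t^k(x)$, where $v_t^k$ is the zero-mean Gaussian velocity of Theorem~\ref{thm:Gauss}(i) for $\Sigma^k$ and the last equality is the definition of $w_t^k$ (using $m_t^k=tm_1^k+(1-t)m_0^k$). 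Substituting both identities gives $v_t(x)=\sum_k\alpha^k(x)w_t^k(x)$.

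The affine-pushforward computation for $(m_t^k,\Sigma_t^k)$ and the Bayes step are routine. The only point requiring care is that these manipulations presuppose the densities $p_t^k$ to exist, i.e. $\Sigma_t^k$ to be invertible for $t\in(0,1)$ (and at $t=1$ the formulas degenerate exactly as in the single-Gaussian case of Theorem~\ref{thm:Gauss}); I would either impose a mild nondegeneracy assumption on the joint covariances $\Sigma^k$ or deduce invertibility from positive definiteness of $\Sigma_0^k,\Sigma_1^k$, and carry out the conditioning via regular conditional distributions so that the tower-property step is rigorous. A fully equivalent alternative, avoiding explicit conditioning, is to verify directly that the pair $\big(\sum_k\alpha^k w_t^k,\ \mu_t=\sum_k\pi_k\mathcal N(m_t^k,\Sigma_t^k)\big)$ solves the continuity equation \eqref{eq:continuityequation} and then invoke uniqueness of the flow-matching minimizer.
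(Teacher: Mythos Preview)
Your proposal is correct and follows essentially the same route as the paper: introduce the latent mixture index $Z$, apply the tower property to $\E[X_1-X_0\mid X_t=x]$, identify the posterior weights $\Pr(Z=k\mid X_t=x)=\alpha^k(x)$ via Bayes' rule, and reduce each component to the zero-mean Gaussian velocity of Theorem~\ref{thm:Gauss}(i). Your centering argument making the identification $\E[X_1-X_0\mid X_t=x,Z=k]=w_t^k(x)$ explicit is in fact more detailed than the paper's own proof, which compresses this step.
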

Note that the same result holds for degenerated GMMs where $\Sigma_0^k$ and $\Sigma_1^k$ are only positive semi-definite.
In the case of generative models, it is classical to assume that $X_0\sim \mathcal{N}(0,\mathrm{Id})$ and $X_1 \sim \sum_{k=1}^K \frac 1 K \delta_{m^k}$. If $X_0$ and $X_1$ are independent, then $(X_0,X_1)$ follows a (degenerated) GMM and the velocity field $v_t$ solution is explicit and given by~\eqref{eq:gmm_velocity} with $w_t^k(x) = \frac{m^k - x}{1-t}$, $m^k_t = tm^k$ and $\Sigma^k_t = (1-t)^2\Sigma_0$.  
In these cases, it is evident that our goal is not to compute the velocity field exactly, but rather to rely on its approximation by a neural network, a key element which gives the model its generalization properties.

\section{Rectified Flows and Optimal Transport}\label{sec:gradient_conditions}

Next, we are interested how rectified flows are related to optimal transport.
To this end, we first provide an overview over \cite{L2022} which relates the optimality of the velocity
in rectified flows with the condition that they are a gradient field.
Afterwards, we study how this condition effects the solutions of \eqref{eq:flow_matching_loss}.

\subsection{Backgrounds on Rectified Flows with Gradient Fields}

As already observed in \cite{LCL2023}, a coupling $(X_0,X_1)$ with velocity field $v_t$ such that $\mathcal L(v_t|X_0,X_1)=0$
does not necessarily define an optimal transport.
Based on the observation that the optimal velocity field from the Benamou-Brenier theorem~\cite{BB2000} always 
admits a potential~\cite[Thm.~8.3.1]{AGS2008}, one of the authors of~\cite{LCL2023} suggests in~\cite{L2022} 
to impose the additional
constraint that the velocity field $v_t$ from \eqref{eq:flow_matching_loss} has a potential.
More precisely, the loss function \eqref{eq:flow_matching_loss} is altered to
\begin{equation}\label{eq:loss_potential}
v_t\in\argmin_{w_t\in L^2(\mu_t)}\mathcal L(w_t|X_0,X_1)\quad \colorbox{blue!20}{\text{subject to $w_t=\nabla \varphi_t$ for some $\varphi_t\colon\R^d\to\R$}}.
\end{equation}
This leads to a rectification step analogously to Definition~\ref{def:rec} where the loss \eqref{eq:flow_matching_loss} is
replaced by \eqref{eq:loss_potential}.
\begin{definition}
Let $(X_0,X_1)$ be a coupling between $\mu_0$ and $\mu_1$ and denote by $v_t$ the minimizer of the loss function \eqref{eq:loss_potential}.
We denote by $(Z_0,Z_1)=\mathcal R_p(X_0,X_1)$ the rectification step with potential, where $Z_0=X_0$ and where $Z_1$ is the solution of $\dot Z_t = v_t(Z_t)$ at time $1$.
\end{definition}
Note that it is unclear whether the minimum in \eqref{eq:loss_potential} really exists and that this question is not addressed in~\cite{L2022}. 
In Proposition~\ref{prop:projection}, we will show that such solutions always exist in a weak form and relate them to the minimal-norm solution of the continuity equation as defined in~\cite[Thm.~8.3.1]{AGS2008}.

Now, the author of \cite{L2022} claims that
\begin{equation}\label{eq:claim_liu}
(X_0,X_1)=\mathcal R_p(X_0,X_1)\ \Leftrightarrow\ \exists v_t=\nabla\varphi_t:\mathcal L(v_t|X_0,X_1)=0\ \Leftrightarrow\ (X_0,X_1)\text{ is optimal}.
\end{equation}
We will see later in Section~\ref{sec:deg_velocity} that this result requires several assumptions. In particular, we have to assume that $X_t$ has full support for all $t\in (0,1)$, that the minimizer of \eqref{eq:loss_potential} is
sufficiently smooth and that $(X_0,X_1)$ is rectifiable. While the last two assumptions are stated in \cite{L2022},
the first one is missing and we show that without this assumption the claim \eqref{eq:claim_liu} is indeed false. 

Let us stress that Liu~\cite{L2022} also considers rectified flows for more general cost functions
$c(x,y)$ than just the quadratic cost. Considering that our examples already appear for the ``simple'' case of the quadratic cost, 
we only consider this case.

\subsection{Velocity Fields with Potential}\label{sec:potential}

In the following, we are interested in the effects of imposing that the velocity field admits a gradient.
More precisely, we study, how the solutions of problems \eqref{eq:flow_matching_loss} and \eqref{eq:loss_potential} differ.
To this end, we first consider how the affine invariances from Theorem~\ref{thm:invariances} change in this case. 
Afterwards, we prove existence of solutions in \eqref{eq:loss_potential} in a weak form.

\begin{remark}[Affine Invariances with Potential]\label{rem:inv_potential}
It is straightforward to show that parts (ii) and (iii) of Theorem~\ref{thm:invariances} are also true if we replace
$\mathcal R$ by $\mathcal R_p$.
To this end, we just have to show that $v_t=\nabla\varphi_t$ implies that 
there exist $\varphi_t^b$ and $\varphi_t^c$ such that $v_t^b=\nabla\varphi^b$ and $v_t^c=\nabla \varphi^c$.
This is fulfilled for 
$$
\varphi_t^b=\varphi_t(x-tb)+\langle b,x\rangle,\quad\text{and}\quad\varphi_t^c=c\varphi_r\left(\frac{x}{1-t+tc}\right)+\frac{c-1}{2(1-t+tc)}\|x\|^2.
$$
However, item (i) is not true for $\mathcal R_p$. Indeed, a velocity field on $\R^d$ has a potential if and only if 
the Jacobian is symmetric, see \cite[Thm 6.6.3]{GM2012}. For the specific choice of $v_t^{A,b}$, this is the case if and only if $A^TAJv_t(x)=Jv_t(x)A^TA$ for all $x$.
\end{remark}

In general, it is not clear whether solutions of \eqref{eq:loss_potential} exist. However, in the following
we prove existence in a weak form. To this end, we consider the space 
$
{\rm T}_{\mu_t}\coloneqq \overline{\{\nabla\varphi:\varphi\in C_c^\infty(\R^d)\}},
$
where the closure is taken in $L^2(\mu_t)$. Now, we weaken the constraint in \eqref{eq:loss_potential}, by only
assuming that $v_t\in{\rm T_{\mu_t}}$. More precisely, we consider the problem
\begin{equation}\label{eq:loss_potential_weak}
v_t\in\argmin_{w_t\in L^2(\mu_t)}\mathcal L(w_t|X_0,X_1)\quad \colorbox{blue!20}{\text{subject to $w_t\in{\rm T}_{\mu_t}$}}.
\end{equation}
The next proposition shows that the solution of \eqref{eq:loss_potential_weak} is the limit of a minimizing sequence in the optimization problem from \eqref{eq:loss_potential}. In particular, both solutions coincide whenever the minimizer in \eqref{eq:loss_potential} exists. To this end, we first show that the solution of \eqref{eq:loss_potential_weak} is the orthogonal projection onto
$T_{\mu_t}$ of the solution of \eqref{eq:flow_matching_loss}.
The proof is given in Appendix~\ref{proofs:invariances}.

\begin{proposition}\label{prop:projection}
Let $v_t$ and $v_t^p$ be the solutions of \eqref{eq:flow_matching_loss} and \eqref{eq:loss_potential_weak}. Then, the following holds true.
\begin{itemize}
\item[(i)] For any $t\in[0,1]$, we have that $v_t^p=\argmin_{w_t\in {\rm T}_{\mu_t}}\|v_t-w_t\|_{L^2(\mu_t)}$.
\item[(ii)] There exist $\varphi^n \in C_c^\infty([0,1]\times \R^d)$ such that $\nabla\varphi^n\to v^p$ in $L^2(\d t \otimes \mu_t)$ and $\mathcal L(\nabla\varphi_t^n|X_0,X_1)\to\inf_{w_t=\nabla \varphi_t}\mathcal L(w_t|X_0,X_1)$.
\item[(iii)] The vector field $v_t^p$ is the minimal-norm solution of the continuity equation $\partial_t\mu_t+\Div(v_t^p\mu_t)=0$. That is, it minimizes the norm $\int_0^1\int \|v_t\|^2 \d\mu_t\d t$ among all solutions of the continuity equation wrt.~$\mu_t$.
\item[(iv)] If the minimizer in \eqref{eq:loss_potential} exists, then it coincides with $v_t^p$.
\end{itemize}
\end{proposition}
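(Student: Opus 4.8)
The plan is to turn all three minimization problems into orthogonal projections in one Hilbert space. Since $v_t(x)=\E[X_1-X_0\mid X_t=x]$, conditioning on $X_t$ kills the cross term and yields, for every $w$ in $\mathbf{H}:=L^2(\d t\otimes\mu_t;\R^d)$, the identity
$$
\mathcal L(w\mid X_0,X_1)=\int_0^1\|w_t-v_t\|_{L^2(\mu_t)}^2\,\d t+\mathcal L(v\mid X_0,X_1),
$$
where $\mathcal L(v\mid X_0,X_1)\le\E[\|X_1-X_0\|^2]<\infty$ (take the zero predictor), so in particular $v\in\mathbf H$. Hence, up to that additive constant, \eqref{eq:flow_matching_loss}, \eqref{eq:loss_potential} and \eqref{eq:loss_potential_weak} minimize $\|w-v\|_{\mathbf H}^2$ over $\mathbf H$, over the linear set $G$ of time-dependent spatial gradients, and over $\mathbf T:=\{w\in\mathbf H:w_t\in\tT_{\mu_t}\text{ for a.e.\ }t\}$, respectively. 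Each $\tT_{\mu_t}$ is a closed subspace of $L^2(\mu_t)$, hence $\mathbf T$ is a closed subspace of $\mathbf H$ and the projection $v^p:=P_{\mathbf T}v$ exists and is unique. Because objective and constraint in \eqref{eq:loss_potential_weak} decouple over $t$, I would invoke a measurable selection (using measurability of $t\mapsto v_t$, clear from \eqref{eq:cond_exp}) to see that this projection is fiberwise, i.e.\ $v^p_t=\argmin_{w\in\tT_{\mu_t}}\|v_t-w\|_{L^2(\mu_t)}$ for a.e.\ $t$; this is (i).

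For (iii): since $v^p=P_{\mathbf T}v$, the field $v-v^p$ is orthogonal to $\tT_{\mu_t}$ for a.e.\ $t$, i.e.\ $\int\langle v_t-v^p_t,\nabla\varphi\rangle\,\d\mu_t=0$ for all $\varphi\in C_c^\infty(\R^d)$, so $\Div\bigl((v_t-v^p_t)\mu_t\bigr)=0$ distributionally for a.e.\ $t$. Since $v$ satisfies the continuity equation \eqref{eq:continuityequation}, testing against separated functions $\psi(t)\varphi(x)$ shows that $v^p$ does too. The curve $t\mapsto\mu_t$ is $W_2$-Lipschitz, since $W_2(\mu_s,\mu_t)\le\|X_s-X_t\|_{L^2}=|s-t|\,\|X_1-X_0\|_{L^2}$, hence absolutely continuous, so \cite[Thm.~8.3.1]{AGS2008} applies: the minimal-norm solution of the continuity equation along $\mu_t$ is the unique solution whose velocity lies in $\tT_{\mu_t}$ for a.e.\ $t$. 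As $v^p$ is such a solution, it is the minimal-norm one, giving (iii).

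The substance of (ii) — and what I expect to be \emph{the main obstacle} — is to show that the closed subspace $\mathbf T$ equals both $\overline G$ and the $\mathbf H$-closure
$$
\mathbf S:=\overline{\bigl\{\nabla_x\varphi:\varphi\in C_c^\infty([0,1]\times\R^d)\bigr\}}.
$$
The inclusion $\mathbf S\subseteq\mathbf T$ is easy (pass to a subsequence with $\nabla_x\varphi^n(t,\cdot)\to w_t$ in $L^2(\mu_t)$ for a.e.\ $t$). For $\mathbf T\subseteq\mathbf S$ I would use a measurable selection: for a.e.\ $t$ choose $\psi^{t,m}\in C_c^\infty(\R^d)$ with $\|\nabla\psi^{t,m}-w_t\|_{L^2(\mu_t)}<1/m$ measurably in $t$, then mollify in $t$ to get genuine $C_c^\infty([0,1]\times\R^d)$ functions. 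For $\overline G\subseteq\mathbf T$ one must show that every $L^2(\mu_t)$-gradient is an $L^2(\mu_t)$-limit of $C_c^\infty$-gradients, by mollifying in space and cutting off with an error controlled by the finite second moment, with enough uniformity in $t$; this relies on $\mu_t$ being absolutely continuous for a.e.\ $t\in(0,1)$, which holds as soon as $\mu_0$ is, and covers the cases of interest. Granting $\mathbf T=\overline G=\mathbf S$: since $v^p\in\mathbf T=\mathbf S$ there are $\varphi^n\in C_c^\infty([0,1]\times\R^d)$ with $\nabla_x\varphi^n\to v^p$ in $\mathbf H$, whence $\mathcal L(\nabla_x\varphi^n\mid X_0,X_1)\to\mathcal L(v^p\mid X_0,X_1)$ by continuity of $w\mapsto\|w-v\|_{\mathbf H}^2$; and $\overline G=\mathbf T$ identifies $\inf_{w_t=\nabla\varphi_t}\mathcal L(w_t\mid X_0,X_1)=\inf_{\mathbf T}\|w-v\|_{\mathbf H}^2+\mathcal L(v\mid X_0,X_1)=\mathcal L(v^p\mid X_0,X_1)$, which is (ii).

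Finally (iv) is soft, given (ii). The feasible set $G$ of \eqref{eq:loss_potential} is a linear (hence convex) subspace of $\mathbf H$ with $\inf_G\|w-v\|_{\mathbf H}^2=\|v^p-v\|_{\mathbf H}^2$ by the previous paragraph, and a standard Hilbert-space estimate shows that a minimizing sequence for a squared-distance functional over a convex set converges to the minimizer whenever one exists: if $w^\star$ minimizes $\|\cdot-v\|^2$ over a convex $C$, the variational inequality at $w^\star$ gives $\|w^n-v\|^2\ge\|w^n-w^\star\|^2+\|w^\star-v\|^2$ for $w^n\in C$, so $\|w^n-w^\star\|\to 0$ along a minimizing sequence. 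By (ii), $(\nabla_x\varphi^n)$ is a minimizing sequence for \eqref{eq:loss_potential}; hence if that problem has a minimizer it must equal $\lim_n\nabla_x\varphi^n=v^p$, so in particular $v^p$ is then a genuine gradient and coincides with the solution of \eqref{eq:loss_potential_weak}. I note that this last step never requires $v^p$ to be feasible for \eqref{eq:loss_potential}, only that a minimizing sequence converges.
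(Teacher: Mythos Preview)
Your overall strategy---reduce everything to an orthogonal projection in $\mathbf H=L^2(\d t\otimes\mu_t)$ via the conditional-expectation decomposition---is exactly right and matches the paper. Parts (i), (iii), (iv) are essentially the same as the paper's; in fact your route to (iii) via orthogonality of $v-v^p$ to $\tT_{\mu_t}$ is arguably cleaner than the paper's perturbation argument (which reaches the same identity $\Div(v^p_t\mu_t)=\Div(v_t\mu_t)$ by differentiating the minimum). The measurable-selection comment in (i) is harmless but unnecessary: the decomposition already reduces the problem to a fiberwise projection.

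The substantive divergence is in (ii). You propose to prove the set equality $\mathbf T=\mathbf S$ by measurable selection in $t$ followed by time-mollification, and you correctly flag this as the hard step; you also introduce an extra hypothesis ($\mu_t$ absolutely continuous) to make $\overline G\subseteq\mathbf T$ go through. The paper bypasses all of this: instead of proving $\mathbf T=\mathbf S$ in the abstract, it shows only that the \emph{particular} element $v^p$ lies in $\mathbf S$. Concretely, one takes any minimizing sequence $\nabla\varphi^n$ with $\varphi^n\in C_c^\infty([0,1]\times\R^d)$, extracts a weak limit $w$ in $\mathbf H$, and then uses Mazur's lemma to upgrade to a strongly convergent sequence of convex combinations $\nabla\tilde\varphi^n\to w$. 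Strong convergence in $\mathbf H$ gives $w_t\in\tT_{\mu_t}$ for a.e.\ $t$, so $w\in\mathbf T$; since smooth gradients sit in $\mathbf T$, uniqueness of the projection forces $w=v^p$, and the $\tilde\varphi^n$ furnish the required approximating sequence. This avoids measurable selection entirely, needs no regularity of $\mu_t$, and never uses joint smoothness of the selected $\psi^{t,m}$---which is precisely where your mollification step would become delicate (you get no control on the $C_c^\infty$-norms of the selected $\psi^{t,m}$, so the time-mollified function need not be jointly $C_c^\infty$). In short: your plan for (ii) can likely be carried out, but the paper's weak-compactness\,+\,Mazur argument is both simpler and more general.
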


The minimal-norm velocity field from part (iii) was defined in~\cite[Thm.~8.3.1]{AGS2008}, see also Ex.~8.5 in~\cite{Villani2003}. It minimizes the same objective as the Benamou-Brenier theorem. However, we stress that Benamou-Brenier also optimizes over the path $\mu_t$, which is here fixed as the distribution of $X_t$ such that $v_t^p$ does not directly lead to the optimal transport.
Following the proposition, we say that $v_t^p$ is a solution of \eqref{eq:loss_potential}, if it is the limit in
$L^2(\d t \otimes \mu_t)$ of a minimizing sequence of gradients of potentials $\varphi \in C_c^\infty([0,1]\times \R^d)$. Given the universal approximation theorem, this particularly implies that the solution of \eqref{eq:loss_potential} can be approximated by the gradient of a sufficiently large neural network.

If $X_0\sim\mathcal N(0,\mathrm{Id})$ and if $X_0$ and $X_1$ are independent, rectified flows are equivalent to denoising diffusion models as outlined in \cite[Section 3.5]{LCL2023}. In particular, the solutions of \eqref{eq:flow_matching_loss} and \eqref{eq:loss_potential} coincide by the next corollary. The proof is a direct consequence of \eqref{eq:cond_exp} and Tweedie's formula \cite{E2011} and can be found, e.g., in \cite[Proposition 4.11]{WS2025}.

\begin{corollary}\label{cor:score}
Let $(X_0,X_1)\sim\mu_0\otimes\mu_1$ with $\mu_0=\mathcal N(0,\mathrm{Id})$. Then, it holds that the velocity field $v_t$ from \eqref{eq:flow_matching_loss} is given by $v_t(x)=\frac{1-t}{t}s_{t}(x)+\frac1t x$, where $s_t(x)=-\nabla \log(p_t(x))$, is the Stein score for the density $p_t$ of $X_t$. In particular, $v_t$ admits the potential $v_t=\nabla \varphi_t$ for $\varphi_t(x)=-\frac{1-t}{t}\log(p_t(x))+\frac{1}{2t}\|x\|^2$ so that the solutions of \eqref{eq:flow_matching_loss} and \eqref{eq:loss_potential} coincide.
\end{corollary}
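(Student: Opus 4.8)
The plan is to combine the conditional-expectation representation \eqref{eq:cond_exp} with the observation that, under the stated hypotheses, the marginal density $p_t$ of $X_t$ is an explicit Gaussian mollification of the law of $X_1$ — exactly the setting of Tweedie's formula — and then to read off the potential directly from the resulting expression.

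Fix $t\in(0,1)$. Since $X_0\sim\mathcal N(0,\mathrm{Id})$ is independent of $X_1$, the conditional law of $X_t=(1-t)X_0+tX_1$ given $X_1=y$ is $\mathcal N(ty,(1-t)^2\mathrm{Id})$, so $p_t(x)=\int_{\R^d} g_{1-t}(x-ty)\,\d\mu_1(y)$, where $g_\sigma$ denotes the centered isotropic Gaussian density of variance $\sigma^2$. In particular $p_t\in C^\infty(\R^d)$ and $p_t>0$ everywhere, and since $\mu_1$ has a finite first moment the integrand and its $x$-gradient are dominated locally uniformly, so one may differentiate under the integral. Because the conditional density of $X_1$ given $X_t=x$ is proportional to $y\mapsto g_{1-t}(x-ty)$, this gives
\[
\nabla\log p_t(x)=\frac{\nabla_x p_t(x)}{p_t(x)}=-\frac{1}{(1-t)^2}\bigl(x-t\,\E[X_1\mid X_t=x]\bigr),
\]
i.e.\ $\E[X_1\mid X_t=x]=\tfrac1t\bigl(x+(1-t)^2\nabla\log p_t(x)\bigr)$, which is Tweedie's formula \cite{E2011}.

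Plugging this into $v_t(x)=\tfrac1{1-t}\bigl(\E[X_1\mid X_t=x]-x\bigr)$ from \eqref{eq:cond_exp} and simplifying — the factor $(1-t)^{-1}$ cancels an $(1-t)$ appearing in the numerator, so the apparent singularity at small $t$ is removable — yields $v_t(x)=\tfrac1t x+\tfrac{1-t}{t}\nabla\log p_t(x)$, the asserted identity in terms of the score $s_t$. Since $(0,1)$ has full Lebesgue measure in $[0,1]$, this holds for $\d t\otimes\mu_t$-a.e.\ $(t,x)$, which suffices because $v$ is only an element of $L^2(\d t\otimes\mu_t)$.

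Finally, the right-hand side above is the gradient of $\varphi_t(x)=\tfrac1{2t}\|x\|^2+\tfrac{1-t}{t}\log p_t(x)$, a well-defined $C^\infty$ function because $p_t>0$; hence the unconstrained minimizer $v_t$ of \eqref{eq:flow_matching_loss} is already feasible for the gradient-constrained problem \eqref{eq:loss_potential}. As $v_t$ minimizes $\mathcal L(\,\cdot\mid X_0,X_1)$ over all of $L^2(\mu_t)$ whereas the feasible set of \eqref{eq:loss_potential} is a subset of $L^2(\mu_t)$ containing $v_t$, we get $\mathcal L(v_t\mid X_0,X_1)\le\inf_{w_t=\nabla\varphi_t}\mathcal L(w_t\mid X_0,X_1)\le\mathcal L(v_t\mid X_0,X_1)$, so $v_t$ attains the minimum in \eqref{eq:loss_potential}; by Proposition~\ref{prop:projection}(iv) it then equals the weak solution $v_t^p$ of \eqref{eq:loss_potential_weak}. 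The one genuinely technical point is the differentiation under the integral and the attendant integrability that makes $\E[X_1\mid X_t=x]$ and Tweedie's identity legitimate — both routine consequences of the finite-moment assumption on $\mu_1$ and the non-degeneracy of $g_{1-t}$ for $t\in(0,1)$ — together with keeping track of the sign convention attached to $s_t$.
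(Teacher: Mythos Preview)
Your argument is correct and is precisely the route the paper indicates: combine the conditional-expectation formula \eqref{eq:cond_exp} with Tweedie's identity for a Gaussian-corrupted observation, then read off the potential. Your potential carries $+\tfrac{1-t}{t}\log p_t$ rather than the $-\tfrac{1-t}{t}\log p_t$ printed in the Corollary; your sign is the correct one (check e.g.\ $\mu_1=\delta_0$, where $v_t(x)=-x/(1-t)$), so the discrepancy reflects a typo in the statement's definition of $s_t$, not an error on your part.
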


\section{Counterexamples}\label{sec:main}

In the following, we study specific cases in which the equivalence in \eqref{eq:claim_liu} is not true. Here, we first consider an example where $\mu_0$ and $\mu_1$ have a disconnected support leading to a fixed point of $\mathcal R_p$ which is not the optimal transport plan. Afterwards, we construct a non-rectifiable coupling which has zero loss in \eqref{eq:loss_potential}, but is again not optimal. 

\subsection{Disconnected Supports}\label{sec:disconnected_support}

\begin{figure}
\centering
\begin{subfigure}[t]{.3\textwidth}
\resizebox{\textwidth}{!}{
\begin{tikzpicture}
    \fill[blue] (-1, .5) circle (0.15);
    \fill[blue] (1, -.5) circle (0.15);

    \fill[green] (1, .5) circle (0.15);
    \fill[green] (-1, -.5) circle (0.15);
    \draw[->] (-1, 0.3) -- (-1, -0.3);

    \draw[->] (1, -0.3) -- (1, 0.3);
\end{tikzpicture}
}
\caption{Optimal Coupling for \eqref{eq:measures_example}}
\label{fig:non-opt-opt}
\end{subfigure}
\hfill
\begin{subfigure}[t]{.3\textwidth}
\resizebox{\textwidth}{!}{
\begin{tikzpicture}
    \fill[blue] (-1, .5) circle (0.15);
    \fill[blue] (1, -.5) circle (0.15);

    \fill[green] (1, .5) circle (0.15);
    \fill[green] (-1, -.5) circle (0.15);
    \draw[->] (-.8, .5) -- (.8, .5);

    \draw[->] (.8, -.5) -- (-.8, -.5);
\end{tikzpicture}
}
\caption{Non-optimal Fixed Point of $\mathcal R_p$ for \eqref{eq:measures_example}}
\label{fig:non-opt-non-opt}
\end{subfigure}
\hfill
\begin{subfigure}[t]{.3\textwidth}
\centering
\scalebox{0.5}{
\begin{tikzpicture}

\fill[green, opacity=.3] 
    (-2, 1) -- plot[domain=1:3, samples=100, smooth] 
    ({-sqrt(1-(\x-2)*(\x-2))-2}, \x) -- (-2, 3.) -- cycle;
    
\fill[green, opacity=0.5] 
    (-2, 1.5) -- plot[domain=1.5:2.5, samples=100, smooth] 
    ({-sqrt(1/4-(\x-2)*(\x-2))-2}, \x) -- (-2, 2.5) -- cycle;

\fill[green, opacity=.7] 
    (-2, 1.75) -- plot[domain=1.75:2.25, samples=100, smooth] 
    ({-sqrt(1/16-(\x-2)*(\x-2))-2}, \x) -- (-2, 2.25) -- cycle;

\fill[blue, opacity=.3] 
    (0, -3) -- plot[domain=-1:1, samples=100, smooth] 
    ({-sqrt(1-\x*\x)}, \x) -- (0, 1.) -- cycle;
    
\fill[blue, opacity=0.5] 
    (0, -3) -- plot[domain=-.5:.5, samples=100, smooth] 
    ({-sqrt(1/4-\x*\x)}, \x) -- (0, 0.5) -- cycle;

\fill[blue, opacity=.7] 
    (0, -3) -- plot[domain=-.25:.25, samples=100, smooth] 
    ({-sqrt(1/16-\x*\x)}, \x) -- (0, 0.25) -- cycle;

\fill[blue, opacity=.3] 
    (0, -3) -- plot[domain=-1:1, samples=100, smooth] 
    ({sqrt(1-\x*\x)}, \x) -- (0, 1.) -- cycle;
    
\fill[blue, opacity=0.5] 
    (0, -3) -- plot[domain=-.5:.5, samples=100, smooth] 
    ({sqrt(1/4-\x*\x)}, \x) -- (0, 0.5) -- cycle;

\fill[blue, opacity=.7] 
    (0, -3) -- plot[domain=-.25:.25, samples=100, smooth] 
    ({sqrt(1/16-\x*\x)}, \x) -- (0, 0.25) -- cycle;

\fill[green, opacity=.3] 
    (2, -3) -- plot[domain=-3:-1, samples=100, smooth] 
    ({sqrt(1-(\x+2)*(\x+2))+2}, \x) -- (2, -1.) -- cycle;
    
\fill[green, opacity=0.5] 
    (2, -3) -- plot[domain=-2.5:-1.5, samples=100, smooth] 
    ({sqrt(1/4-(\x+2)*(\x+2))+2}, \x) -- (2, -1.5) -- cycle;

\fill[green, opacity=.7] 
    (2, -3) -- plot[domain=-2.25:-1.75, samples=100, smooth] 
    ({sqrt(1/16-(\x+2)*(\x+2))+2}, \x) -- (2, -1.75) -- cycle;

\draw[black, dashed] (0, -3) -- (0, 3);

\draw[-{>[scale=1.5]}, thick] (-0.1, 0) -- (-2.1, 2);
\draw[-{>[scale=1.5]}, thick] (0.1, 0) -- (2.1, -2);
\end{tikzpicture}
}
\caption{Non-optimal Fixed Point for Gaussian $\mu_0$ from Remark~\ref{rem:gauss_latent}}
\label{fig:non-opt-gauss}
\end{subfigure}
\caption{Construction of non-optimal couplings which are fixed points of $\mathcal R_p$.}
\label{fig:non-opt}
\end{figure}

We construct a simple example of a non-optimal velocity field which admits straight paths and a potential
contradicting \eqref{eq:claim_liu} and \cite[Theorem 5.6]{L2022}. 
To this end, let
$\eta\in\mathcal P(\R^2)$ be an arbitrary probability measure with $\mathrm{support}(\eta_0)\subseteq \{x\in\R^2:\|x\|\leq0.3\}$
and denote by $\eta^b=(\cdot+b)_\#\eta$ shifted versions of $\eta$.
Then, we define
\begin{equation}\label{eq:measures_example}
\mu_0=\frac12\left(\eta^{(-2,1)}+\eta^{(2,-1)}\right)\quad\text{and}\quad\mu_1=\frac12\left(\eta^{(-2,-1)}+\eta^{(2,1)}\right),
\end{equation}
see Figure~\ref{fig:non-opt-opt} and \ref{fig:non-opt-non-opt} for an illustration.
Now let $X_0=\tilde X_0\sim\mu_0$ and define
\begin{equation}\label{eq:counterexample1}
X_1=\begin{cases}X_0-(0,2)&\text{if }(X_0)_1<-1\\X_0+(0,2)&\text{if }(X_0)_1>1\end{cases}
\quad\text{and}\quad
\tilde X_1=\begin{cases}X_0-(4,0)&\text{if }(X_0)_2<-0.5\\X_0+(4,0)&\text{if }(X_0)_2>0.5\end{cases},
\end{equation}
see Figure~\ref{fig:non-opt-opt} and \ref{fig:non-opt-non-opt} for an illustration. Then the following proposition shows that the coupling $(\tilde X_0,\tilde X_1)$ is a counterexample to \eqref{eq:claim_liu}. The proof is a straightforward calculation. For completeness, we include it in Appendix~\ref{proofs:counterexamples}.

\begin{proposition}\label{prop:disconnected_support}
Let $(X_0,X_1)$ and $(\tilde X_0,\tilde X_1)$ be defined as above. Then, both couplings $(X_0,X_1)$ and $(\tilde X_0,\tilde X_1)$ are fixed points of $\mathcal R_p$ and have zero loss in \eqref{eq:loss_potential}, i.e., 
$$
\min_{w_t=\nabla\varphi_t}\mathcal L(w_t|X_0,X_1)=\min_{w_t=\nabla\varphi_t}\mathcal L(w_t|\tilde X_0,\tilde X_1)=0.
$$
Moreover, it holds $\E[\|\tilde X_1-\tilde X_0\|^2]>\E[\|X_1-X_0\|^2]$ such that the coupling $(\tilde X_0,\tilde X_1)$ is not optimal.
\end{proposition}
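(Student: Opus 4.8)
The plan is to establish the three claims in turn, for both couplings, exploiting the common feature that each of them transports the two ``blobs'' of $\mu_0$ by a constant translation, so that the interpolation paths are straight with constant velocity and the two strands of transported mass never collide. First I would check that $(X_0,X_1)$ and $(\tilde X_0,\tilde X_1)$ are genuine couplings: by \eqref{eq:counterexample1} we have $X_0=\tilde X_0\sim\mu_0$, and conditioning on which blob of $\mu_0$ contains $X_0$ shows that each blob $\eta^b$ of $\mu_0$ is carried by a translation onto a blob of $\mu_1$, hence $X_1\sim\mu_1$ and $\tilde X_1\sim\mu_1$. For $(X_0,X_1)$ I would then record the elementary geometry: if $X_0$ lies in the blob around $(-2,1)$ then $X_t=X_0-(0,2t)$ stays in the vertical strip $|x_1+2|\le 0.3$, and if $X_0$ lies in the blob around $(2,-1)$ then $X_t=X_0+(0,2t)$ stays in $|x_1-2|\le 0.3$; these strips are disjoint for every $t\in[0,1]$, which is precisely what the radius $0.3$ buys (cf.\ Figures~\ref{fig:non-opt-opt} and~\ref{fig:non-opt-non-opt}). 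Consequently $X_t$ determines which blob $X_0$ belongs to, so $X_1-X_0$ is a deterministic function of $X_t$, equal to $(0,-2)$ on the left strip and $(0,2)$ on the right strip; taking $v_t$ to be this blockwise constant vector field (note it does not depend on $t$) gives $v_t(X_t)=X_1-X_0$ a.s., hence $\mathcal L(v_t\mid X_0,X_1)=0$ in \eqref{eq:flow_matching_loss}, and by uniqueness this $v_t$ is the minimizer of \eqref{eq:flow_matching_loss}, consistent with \eqref{eq:cond_exp}.

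To obtain the same for the constrained problem I would exhibit a single $\varphi\in C_c^\infty(\R^d)$ with $\nabla\varphi=v_t$ on $\mathrm{supp}(\mu_t)$ for all $t$: take $\varphi(x)=g(x_1)\,x_2$ multiplied by a smooth cutoff equal to $1$ on a large ball containing all the interpolations, where $g$ is smooth with $g\equiv-2$ on $\{x_1\le-1\}$ and $g\equiv 2$ on $\{x_1\ge 1\}$, so that $\nabla\varphi=(g'(x_1)x_2,\,g(x_1))$ equals $(0,-2)$ on the left strip and $(0,2)$ on the right strip since $g'$ vanishes there. Then $\mathcal L(\nabla\varphi\mid X_0,X_1)=0$, so the infimum in \eqref{eq:loss_potential} is $0$; moreover $v_t\in{\rm T}_{\mu_t}$, hence by Proposition~\ref{prop:projection}(i) the weak solution of \eqref{eq:loss_potential_weak} equals $v_t^p=v_t$. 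Since $v_t^p=\nabla\varphi$ is globally Lipschitz and blockwise constant near $\mathrm{supp}(\mu_t)$, the ODE $\dot z_t=v_t^p(z_t)$ with $z_0=x$ has the unique solution $z_t(x)=x+t\,v_0(x)$, so $z_1(X_0)=X_0+(X_1-X_0)=X_1$ almost surely, which is exactly $\mathcal R_p(X_0,X_1)=(X_0,X_1)$. The same reasoning for $(\tilde X_0,\tilde X_1)$ — now the blobs remain in the horizontal strips $|x_2-1|\le 0.3$ and $|x_2+1|\le 0.3$, the velocity is $(4,0)$ and $(-4,0)$, and the potential is of the form $h(x_2)\,x_1$ — yields zero loss in \eqref{eq:loss_potential} for $(\tilde X_0,\tilde X_1)$ and $\mathcal R_p(\tilde X_0,\tilde X_1)=(\tilde X_0,\tilde X_1)$.

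Finally, the cost comparison is immediate from \eqref{eq:counterexample1}: $\|X_1-X_0\|\equiv 2$ and $\|\tilde X_1-\tilde X_0\|\equiv 4$, so $\E[\|X_1-X_0\|^2]=4$ whereas $\E[\|\tilde X_1-\tilde X_0\|^2]=16$; since $W_2^2(\mu_0,\mu_1)\le\E[\|X_1-X_0\|^2]=4<16$, the coupling $(\tilde X_0,\tilde X_1)$ cannot be optimal. The only place that needs care is the geometric bookkeeping showing that the two strands of $\mu_t$ (resp.\ $\tilde\mu_t$) stay disjoint for all $t\in[0,1]$ — this is what makes the velocity field blockwise constant and a global gradient — but with the chosen separation of the blob centres this is a one-line estimate rather than a real obstacle.
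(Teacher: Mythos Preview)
Your proof is correct and follows essentially the same route as the paper: both arguments exhibit a smooth potential whose gradient is the blockwise-constant velocity $(0,\pm 2)$ (resp.\ $(\pm 4,0)$) on the two disjoint strips carrying $\mathrm{supp}(\mu_t)$, conclude that the constrained loss \eqref{eq:loss_potential} vanishes and that the ODE reproduces the coupling, and finish with the cost computation $4<16$. Your explicit potential $\varphi(x)=g(x_1)x_2$ and the ODE solution $z_t(x)=x+t\,v_0(x)$ spell out what the paper summarizes as ``extend smoothly'' and ``$\dot X_t=v_t(X_t)$'', and your appeal to Proposition~\ref{prop:projection} is a harmless extra step, but there is no substantive difference in strategy.
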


The proof in \cite[Theorem 5.6]{L2022} fails, since in the direction ii)\textrightarrow iii) the author only shows that any velocity field
$v_t$ with $v_t=\nabla \varphi_t$ and $\mathcal L(v_t|X_0,X_1)=0$ has straight paths \emph{$X_t$-almost everywhere}.
However, they then use \cite[Lemma 5.9]{L2022} which requires that the velocity field has straight paths
\emph{everywhere}. Our example shows that this assumption cannot be neglected.
The statement of \cite[Theorem 5.6]{L2022} is correct if we assume that $\mathrm{supp}(X_t)=\R^d$ is connected in addition to the other assumptions of the theorem, although we conjecture that the smoothness assumptions on $\varphi$ (assumed in $C^{2,1}(\R^d\times [0,1])$ in~\cite[Theorem 5.6]{L2022}) can probably be lowered. The corrected statement of \cite[Theorem 5.6]{L2022} reads as follows. The proof is the same as in \cite{L2022}.
\begin{theorem}\label{thm:corrected}
Assume that $(X_0,X_1)$ is rectifiable and let $v_t=\nabla \varphi_t\in\mathrm{argmin}_{w_t=\nabla \psi_t}\mathcal L(w_t|X_0,X_1)$ fulfill that $\varphi_t\in C^{2,1}(\mathbb R^d\times[0,1])$. Moreover, suppose that $\mathrm{supp}(X_t)=\R^d$ for $X_t=(1-t)X_0+tX_1$. Then it holds
$$
\mathcal R_p(X_0,X_1)=(X_0,X_1)\quad \Leftrightarrow \quad
\mathcal L(v_t|X_0,X_1)=0 \quad\Leftrightarrow\quad
(X_0,X_1)\text{ is an optimal coupling.}
$$
\end{theorem}

In \cite[Section 6]{L2022}, the author raises the question, whether for $(Z_0^{(i+1)},Z_1^{(i+1)})=\mathcal R_p(Z_0^{(i)},Z_1^{(i)})$ the optimality gap
$$
\E[\|Z_1^{(i)}-Z_0^{(i)}\|^2]-\inf_{Z_0\sim\mu_0,Z_1\sim\mu_1}\E[\|Z_1-Z_0\|^2]
$$
converges to zero for $i\to\infty$. The above example gives a negative answer to this question, since it leads to a constant but strictly positive optimality gap.

\begin{remark}\label{rem:gauss_latent}
Many applications consider the case where $\mu_0$ is a standard normal distribution. However, we note that we can construct a similar counterexample for this case. To this end let $X_0\sim\mathcal N(0,\mathrm{Id})$ and define
\begin{equation}\label{eq:non-opt-fix-point2}
X_1=\begin{cases}
X_0+(-2,2)&\text{if }(X_0)_1<0,\\
X_0-(-2,2)&\text{if }(X_0)_1>0,
\end{cases}
\end{equation}
see Figure~\ref{fig:non-opt-gauss} for an illustration.
With similar arguments as for the previous example, we can observe that $(X_0,X_1)$ is a fixed point of $\mathcal R_p$ but is not optimal.
The full statement and a corresponding proof are included as Proposition~\ref{prop:gauss_latent} in Appendix~\ref{proofs:counterexamples}.
Note that also in this example, the support of $X_t$ is disconnected for any $t>0$. It is also possible to make a more complicated counterexample where the $X_t$ have full support but with very irregular transport and potential, see Example~\ref{ex:counter3} in  Appendix~\ref{proofs:counterexamples}.
\end{remark}

We verify numerically that the couplings discussed in this subsection are indeed fixed points of $\mathcal R_p$ in Appendix~\ref{app:num_very}.

\subsection{Non-Rectifiable Couplings}\label{sec:deg_velocity}

In this subsection, we consider the case of non-rectifiable couplings in more detail.
More precisely, we give an example of a non-rectifiable coupling $(X_0,X_1)$ such that the minimizer $v_t=\nabla \varphi_t$ 
of \eqref{eq:loss_potential} fulfills $\mathcal L(v_t|X_0,X_1)=0$, showing that \eqref{eq:claim_liu} is
again false in this case. Afterwards, we provide some sufficient condition for a coupling $\gamma$ to be rectifiable.

We consider $\mu_0=\mu_1=\mathcal N(0,\mathrm{Id})$ and define the coupling $(X_0,X_1)$ by $X_1=-X_0$, which is illustrated in Figure~\ref{fig:non-rectifiable}. 
Then, the next proposition shows that the loss from \eqref{eq:loss_potential} is indeed zero and that $(X_0,X_1)$ is nonoptimal. The proof is given in Appendix~\ref{proofs:counterexamples}.

\begin{proposition}\label{prop:non-rec}
Let $(X_0,X_1)$ be defined as above. Then, the following holds true.
\begin{itemize}
\item[(i)] The minimizer $v_t=\argmin_{w_t\in L^2(\mu_t)}\mathcal L(v_t|X_0,X_1)$ is given by $v_t(x)=-\frac{2}{1-2t}x=\nabla \varphi_t(x)$ for
$\varphi_t(x)=-\frac{1}{1-2t}\|x\|^2$ for $t\neq \frac12$ and by $v_t(x)=0$ for $t=\frac12$.
\item[(ii)] It holds $\mathcal L(v_t|X_0,X_1)=0$ even though $(X_0,X_1)$ is not optimal.
\item[(iii)] The coupling $(X_0,X_1)$ is not rectifiable, i.e., the ODE $\dot Z_t=v_t(Z_t)$ does not admit a unique solution.
\end{itemize}
\end{proposition}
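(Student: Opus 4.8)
The plan is to prove all three parts by direct, explicit computation; the one subtlety is the behaviour at the collapse time $t=\tfrac12$.

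\textbf{Setup.} First I would record that $X_1=-X_0$ gives $X_t=(1-t)X_0+tX_1=(1-2t)X_0$, hence $\mu_t=\mathcal N(0,(1-2t)^2\mathrm{Id})$ for $t\neq\tfrac12$ and $\mu_{1/2}=\delta_0$; moreover $X_1-X_0=-2X_0$ and, for $t\neq\tfrac12$, $X_0=\tfrac{1}{1-2t}X_t$.

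\textbf{Part (i).} By the conditional-expectation formula \eqref{eq:cond_exp}, $v_t(x)=\E[X_1-X_0\mid X_t=x]=\E[-2X_0\mid X_t=x]$. For $t\neq\tfrac12$ the event $\{X_t=x\}$ pins down $X_0=\tfrac{1}{1-2t}x$, so $v_t(x)=-\tfrac{2}{1-2t}x$, and a direct differentiation shows $v_t=\nabla\varphi_t$ with $\varphi_t(x)=-\tfrac{1}{1-2t}\|x\|^2$. For $t=\tfrac12$ one has $X_{1/2}\equiv0$, so only $x=0$ is $\mu_{1/2}$-relevant and $v_{1/2}(0)=\E[-2X_0]=0$; this single value is immaterial because $\{t=\tfrac12\}$ is Lebesgue-null in the loss integral. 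I would additionally note that the linear field $v_t$ lies in ${\rm T}_{\mu_t}$ (approximate it in $L^2(\mu_t)$ by gradients of cut-off versions of $\varphi_t$, via a standard cutoff argument using the Gaussian tail bound; trivial when $\mu_t=\delta_0$), so by Proposition~\ref{prop:projection}(i) it is also the minimizer of the constrained problem \eqref{eq:loss_potential_weak}, and being a genuine gradient it is feasible for \eqref{eq:loss_potential}.

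\textbf{Part (ii).} Substituting the formula from (i), for a.e.\ $t$ we get $v_t(X_t)=-\tfrac{2}{1-2t}\,(1-2t)X_0=-2X_0=X_1-X_0$ almost surely, so the integrand in \eqref{eq:flow_matching_loss} vanishes and $\mathcal L(v_t\mid X_0,X_1)=0$; the same value holds for \eqref{eq:loss_potential} by Part (i). On the other hand $\mu_0=\mu_1$, so the optimal coupling is $X_1=X_0$ with $W_2^2(\mu_0,\mu_1)=0$, whereas $\E[\|X_1-X_0\|^2]=4\,\E[\|X_0\|^2]>0$; hence $(X_0,X_1)$ is not optimal.

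\textbf{Part (iii).} For the ODE $\dot z_t=v_t(z_t)$ with $z_0(x)=x$: on every subinterval $[0,\tfrac12-\varepsilon]$ the map $z\mapsto-\tfrac{2}{1-2t}z$ is Lipschitz in $z$ with a $t$-uniform constant, so the solution there is unique, and solving explicitly gives $z_t(x)=(1-2t)x$ for $t<\tfrac12$; continuity forces $z_{1/2}(x)=0$. But the value $0$ at time $\tfrac12$ does not determine the continuation: for every $y\in\R^d$ the curve $t\mapsto(1-2t)y$ solves $\dot z_t=-\tfrac{2}{1-2t}z_t$ on $(\tfrac12,1]$ and passes through $0$ at $t=\tfrac12$, as does $t\mapsto0$. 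Thus the initial value problem from $z_0(x)=x$ admits infinitely many absolutely continuous solutions on $[0,1]$, so $(X_0,X_1)$ is not rectifiable (and ${z_1}_\#\mu_0$ is not well defined).

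\textbf{Main obstacle.} The only genuinely delicate point is the collapse time $t=\tfrac12$: one must justify that the degenerate data there ($\mu_{1/2}=\delta_0$, $v_{1/2}$ singular) are irrelevant to the loss because they sit on a time-null set, and one must make the non-uniqueness in Part (iii) precise despite $v_t$ blowing up like $(1-2t)^{-1}$ near $t=\tfrac12$ — which is exactly why exhibiting the explicit bounded family $t\mapsto(1-2t)y$, continuous through the singularity, is the cleanest route to the conclusion.
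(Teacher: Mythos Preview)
Your proof is correct and follows the same overall logic as the paper, but two of the three parts are argued slightly differently. For part~(i), the paper simply invokes Theorem~\ref{thm:Gauss}(i) with $\Sigma_0=\Sigma_1=\mathrm{Id}$ and $\Sigma_{01}=\Sigma_{10}=-\mathrm{Id}$, whereas you compute the conditional expectation directly from $X_t=(1-2t)X_0$; your route is more elementary and self-contained, while the paper's is one line given the earlier machinery. For part~(iii), the paper argues by contradiction: assuming uniqueness, the explicit formula $z_t(x)=(1-2t)x$ on $[0,\tfrac12)$ forces $z_{1/2}(x)=0$, and then the left derivative at $t=\tfrac12$ gives $-2x=\dot z_{1/2}(x)=v_{1/2}(0)=0$. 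You instead construct the non-uniqueness explicitly by exhibiting the one-parameter family $t\mapsto(1-2t)y$ of continuations past $t=\tfrac12$; this is arguably cleaner because it sidesteps any question about whether the ODE must hold pointwise at the singular time and shows directly that ${z_1}_\#\mu_0$ is ill-defined. Part~(ii) is identical in both. Your extra remarks on ${\rm T}_{\mu_t}$ and the link to \eqref{eq:loss_potential} go a bit beyond what the paper records but are correct and relevant.
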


The next theorem provides a sufficient condition which ensures that a given coupling is rectifiable. The proof is given in Appendix~\ref{proofs:counterexamples}.

\begin{wrapfigure}{r}{0.4\textwidth}
\centering
\scalebox{0.7}{
\begin{tikzpicture}
    \fill[blue!30, opacity=0.7] 
        (0, -3) -- plot[domain=-3:3, samples=100, smooth] 
        ({-exp(-\x*\x/2)}, \x) -- (0, 3) -- cycle;

    \draw[blue, thick, domain=-3:3, samples=100, smooth] 
        plot ({-exp(-\x*\x/2)}, \x);

    \fill[blue!30, opacity=0.7] 
        (5, -3) -- plot[domain=-3:3, samples=100, smooth] 
        ({exp(-\x*\x/2)+5}, \x) -- (5, 3) -- cycle;

    \draw[blue, thick, domain=-3:3, samples=100, smooth] 
        plot ({exp(-\x*\x/2)+5}, \x);

    \fill[green!30, opacity=0.7] 
        (0, -3) -- plot[domain=-3:3, samples=3] 
        ({2.5-2.5/3*abs(\x)}, \x) -- (0, 3) -- cycle;

    \fill[green!30, opacity=0.7] 
        (5, -3) -- plot[domain=-3:3, samples=3] 
        ({2.5+2.5/3*abs(\x)}, \x) -- (5, 3) -- cycle;

    \draw[red, dashed] (0, -3) -- (0, 3);
    \draw[red, dashed] (5, -3) -- (5, 3);
\end{tikzpicture}}
\caption{Illustration of the non-rectifiable coupling $(X_0,X_1)$ with $X_0=-X_1\sim\mathcal N(0,\mathrm{Id})$}
\label{fig:non-rectifiable}
\end{wrapfigure}

\begin{theorem}\label{thm:smoothness}
Let $(X_0,X_1)$ be a coupling between $\mu_0$ and $\mu_1$ and denote by $P_{X_0|X_1=x_1}$ the conditional distribution of $X_0$ given $X_1=x_1$.
If $P_{X_0|X_1=x_1}$ is absolutely continuous with a smooth and positive density $p_{X_0|X_1=x_1}(x_0)$, then $(X_0,X_1)$ is rectifiable and the solutions $v_t$ and $v_t^p$ of \eqref{eq:flow_matching_loss} and \eqref{eq:loss_potential} are smooth. 
\end{theorem}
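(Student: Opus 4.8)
The plan is to show that the smoothness and positivity of $p_{X_0|X_1=x_1}$ propagates to all the densities and conditional expectations appearing in the formula \eqref{eq:cond_exp} for $v_t$, and then to invoke Picard--Lindelöf. First I would write the joint law of $(X_0,X_1)$ as $\d P_{X_0|X_1=x_1}(x_0)\,\d\mu_1(x_1) = p_{X_0|X_1=x_1}(x_0)\,\d x_0\,\d\mu_1(x_1)$, and from this derive the density of $X_t=(1-t)X_0+tX_1$. Conditioning on $X_1=x_1$, the variable $X_t$ has density $\frac{1}{(1-t)^d}\,p_{X_0|X_1=x_1}\!\big(\tfrac{x-tx_1}{1-t}\big)$ for $t\in[0,1)$, so
\begin{equation}
p_t(x) = \int_{\R^d} \frac{1}{(1-t)^d}\,p_{X_0|X_1=x_1}\!\Big(\tfrac{x-t x_1}{1-t}\Big)\,\d\mu_1(x_1).
\end{equation}
Since the integrand is smooth and strictly positive in $x$ for each fixed $x_1$, and (with a little care about integrability, using that $p_{X_0|X_1=x_1}$ is a probability density so the total mass is controlled) one may differentiate under the integral sign, $p_t$ is smooth and strictly positive on $\R^d$ for every $t\in[0,1)$. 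The same computation gives a smooth expression for $\E[X_1\mid X_t=x] = p_t(x)^{-1}\int x_1\,\tfrac{1}{(1-t)^d}\,p_{X_0|X_1=x_1}(\tfrac{x-tx_1}{1-t})\,\d\mu_1(x_1)$, and hence by \eqref{eq:cond_exp} the velocity field $v_t(x)=\tfrac{1}{1-t}(\E[X_1\mid X_t=x]-x)$ is smooth in $(t,x)\in[0,1)\times\R^d$ (and in particular locally Lipschitz in $x$, locally uniformly in $t$).

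Next I would handle the endpoint $t=1$, which is the delicate point: the prefactor $\tfrac{1}{1-t}$ in \eqref{eq:cond_exp} blows up, and indeed near $t=1$ the paths $z_t(x)$ must all converge to points of $\mathrm{supp}(\mu_1)$. The standard remedy, which I expect to use, is to note that by construction the ODE solution $z_t(x)$ coincides with the linear interpolant between its own endpoints only for rectifiable couplings — so instead I would argue directly that for every $x$ the flow map $z_t(x)$ is defined and unique on $[0,1)$ by Picard--Lindelöf applied on each compact subinterval $[0,1-\varepsilon]$ (using the local Lipschitz bound from the previous paragraph plus a linear-growth estimate on $v_t$ to rule out finite-time blow-up), and then extend to $t=1$ by showing $\lim_{t\to 1}z_t(x)$ exists. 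For the latter I would use that $z_t(x)$ pushes forward $\mu_0$ to $\mu_t$ which converges to $\mu_1$ in $W_2$, together with the fact that along the flow $t\mapsto \|z_t(x)\|$ is controlled; a cleaner route is to observe that $v_t(z_t(x))=\E[X_1-X_0\mid X_t=z_t(x)]$ stays bounded in $L^2(\mu_t)$ uniformly in $t$ (its norm is nonincreasing is not needed; boundedness by $\E[\|X_1-X_0\|^2]$ suffices), so $t\mapsto z_t(x)$ is absolutely continuous up to $t=1$ for $\mu_0$-a.e.\ $x$, giving a well-defined endpoint. Uniqueness of the solution of $\dot Z_t=v_t(Z_t)$ on all of $[0,1]$ then follows because uniqueness on every $[0,1-\varepsilon]$ plus continuity at $1$ pins down the whole trajectory; this is exactly the statement that $(X_0,X_1)$ is rectifiable.

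Finally, for the claim that $v_t^p$ is also smooth, I would invoke Proposition~\ref{prop:projection}(i): $v_t^p$ is the $L^2(\mu_t)$-orthogonal projection of the smooth field $v_t$ onto $\mathrm{T}_{\mu_t}$. Since $\mu_t$ has a smooth positive density $p_t$, the projection onto gradient fields is governed by the weighted Laplace (Hodge) decomposition: $v_t = \nabla\varphi_t + u_t$ with $\Div(p_t u_t)=0$, i.e.\ $\varphi_t$ solves the elliptic PDE $\Div(p_t\nabla\varphi_t)=\Div(p_t v_t)$. Elliptic regularity for this divergence-form operator with smooth positive coefficient $p_t$ (and smooth right-hand side, since $v_t$ and $p_t$ are smooth) yields $\varphi_t\in C^\infty$, hence $v_t^p=\nabla\varphi_t$ is smooth; smoothness in $t$ follows from smoothness of the coefficients in $t$.

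The main obstacle I anticipate is the endpoint analysis at $t=1$ — establishing that the flow extends continuously (and uniquely) up to time $1$ rather than merely on $[0,1)$, and doing so without circular reasoning. Everything else (differentiation under the integral, local Picard--Lindelöf, elliptic regularity) is routine given the hypotheses; some attention is also needed to justify differentiating under the integral when $\mu_1$ is not compactly supported, which I would do via the decay of $p_{X_0|X_1=x_1}$ together with the finite-second-moment assumption on all measures.
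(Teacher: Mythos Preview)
Your proposal follows essentially the same route as the paper: deriving $p_{X_t\mid X_1=y}(x)=(1-t)^{-d}p_{X_0\mid X_1=y}\big(\tfrac{x-ty}{1-t}\big)$ to show $\E[X_1\mid X_t=x]$ and hence $v_t$ are smooth for $t<1$, and then obtaining smoothness of $v_t^p$ from elliptic regularity for the equation $\Div(p_t\nabla\varphi_t)=\Div(p_t v_t)$ (which the paper writes in the equivalent non-divergence form $\Delta\varphi_t=\nabla\ln p_t\cdot(v_t-\nabla\varphi_t)+\Div v_t$ and bootstraps). The paper is in fact \emph{less} detailed than your sketch at the endpoint $t=1$: it simply asserts that smoothness of $v_t$ on $[0,1)$ allows one to integrate the ODE and build a unique transport map, noting only that smoothness up to both endpoints would follow under the additional symmetric hypothesis on $P_{X_1\mid X_0=x_0}$.
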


We highlight two examples, where the assumptions of the theorem are fulfilled.

\begin{example}\label{ex:independent_rectifiable}
Let $\mu_0$ be absolutely continuous with smooth density and consider the independent coupling $(X_0,X_1)=\mu_0\otimes\mu_1$. Then, the conditional distribution $P_{X_0|X_1=x_1}=\mu_0$ has a smooth density. In particular, the assumptions of Theorem~\ref{thm:smoothness} are fulfilled
and $(X_0,X_1)$ is rectifiable.
\end{example}

Nevertheless, since it is not clear that $\mathcal R(X_0,X_1)$ is rectifiable when $(X_0,X_1)$ is rectifiable
it is still open whether any sequence generated by $(X_0^{(i+1)},X_1^{(i+1)})\in\mathcal R(X_0^{(i)},X_1^{(i)})$ remains rectifiable
if the initial coupling $(X_0^{(0)},X_1^{(0)})$ is rectifiable.
The next example shows that any coupling can be made rectifiable by injecting an arbitrary small amount of noise.

\begin{example}\label{ex:smoothing}
For any coupling $(X_0,X_1)$ between $\mu_0$ and $\mu_1$ and an independent noise variable $W\sim\mathcal N(0,\mathrm{Id})$, we can define a smoothed coupling $(X_0^c,X_1^c)\coloneqq (X_0+cW,X_1)$ between $\mu_0^c\coloneqq \mu_0*\mathcal N(0,c^2\mathrm{Id})$ and $\mu_1$. By Theorem~\ref{thm:smoothness} this coupling is guaranteed to be rectifiable. Note that this procedure alters the left marginal $\mu_0$ by the convolution with a Gaussian. But if $c$ becomes small, it becomes arbitrarily close to $\mu_0$.
\end{example}

As a consequence of Proposition ~\ref{prop:non-rec} and Example~\ref{ex:smoothing}, we obtain that a small loss value $\mathcal L(v_t|X_0,X_1)$ for $v_t=\argmin_{w_t=\nabla\varphi_t}\mathcal L(w_t|X_0,X_1)$ does not imply that the velocity field of $v_t$ and coupling $(X_0,X_1)$ are close to be optimal. We formalize this finding in the following corollary. The proof is given in Appendix~\ref{proofs:counterexamples}.
\begin{corollary}\label{cor:loss_small_nonopt}
     Let $(X_0,X_1)$ with $X_0=-X_1\sim\mathcal N(0,\mathrm{Id})$ be a coupling between $\mu_0=\mu_1=\mathcal N(0,\mathrm{Id})$. Denote by $(X_0^c,X_1^c)$ the smoothed coupling from Example~\ref{ex:smoothing} for $c>0$ and by $v_t^c=\argmin_{w_t=\nabla\varphi_t}\mathcal L(w_t|X_0^c,X_1^c)$ the corresponding velocity field. Then, for any $\epsilon>0$, there exists $c>0$ small enough such that $\mathcal L(v_t^c|X_0^c,X_1^c)<\epsilon$ and $W_2^2(\mu_0^c,\mu_1)<\epsilon$, but 
     $$
     \int_0^1\E[\|v_t^c(X_t^c)\|^2\d t>4-\epsilon\quad\text{and}\quad\E[\|X_1^c-X_0^c\|^2]>4-\epsilon.
     $$
\end{corollary}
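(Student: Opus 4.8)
The plan is to combine the explicit computations of Proposition~\ref{prop:non-rec} with the continuity/stability of the flow-matching construction under the small Gaussian perturbation of Example~\ref{ex:smoothing}. First I would record the base case $c=0$: for $X_0=-X_1\sim\mathcal N(0,\mathrm{Id})$ we have $\E[\|X_1-X_0\|^2]=\E[\|2X_0\|^2]=4d$ (or $4$ in the normalization of the statement), and by Proposition~\ref{prop:non-rec}(i) the optimal velocity field on the interpolating measures is $v_t(x)=-\tfrac{2}{1-2t}x$, so $\int_0^1\E[\|v_t(X_t)\|^2]\d t$ can be computed directly from $\mathrm{Cov}(X_t)=(1-2t)^2\mathrm{Id}$, giving exactly $\int_0^1 \tfrac{4}{(1-2t)^2}(1-2t)^2 \,\d t\cdot(\text{dim factor})$, i.e.\ the value $4$ claimed (the singularities at $t=1/2$ cancel). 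At the same time $W_2^2(\mu_0,\mu_1)=0$ and $\mathcal L(v_t|X_0,X_1)=0$ by Proposition~\ref{prop:non-rec}(ii). So all four target quantities are already at their limiting values when $c=0$; the corollary is a statement that nothing jumps as $c\downarrow 0$.

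Next I would handle the two ``small'' quantities. The bound $W_2^2(\mu_0^c,\mu_1)<\epsilon$ is immediate: $\mu_0^c=\mu_0*\mathcal N(0,c^2\mathrm{Id})$ with $\mu_0=\mathcal N(0,\mathrm{Id})$, so $\mu_0^c=\mathcal N(0,(1+c^2)\mathrm{Id})$, and $W_2^2(\mathcal N(0,(1+c^2)\mathrm{Id}),\mathcal N(0,\mathrm{Id}))=d(\sqrt{1+c^2}-1)^2\to 0$. For $\mathcal L(v_t^c|X_0^c,X_1^c)<\epsilon$, the cleanest route is to bound the loss of the \emph{optimal} gradient field by the loss of a convenient suboptimal one, or to compute it exactly: since $(X_0^c,X_1^c)=(X_0+cW, -X_0)$ is jointly Gaussian, Theorem~\ref{thm:Gauss}(i) (or its gradient-constrained analogue via Corollary-type reasoning) gives a closed form for $v_t^c$; alternatively, by continuity of the whole construction in the covariance parameter $c$, $\mathcal L(v_t^c|X_0^c,X_1^c)\to\mathcal L(v_t|X_0,X_1)=0$. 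I would prefer the Gaussian closed-form since it is self-contained and avoids delicate continuity arguments in $L^2(\d t\otimes\mu_t)$ as the measure itself varies.

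For the two ``large'' quantities: $\E[\|X_1^c-X_0^c\|^2]=\E[\|2X_0+cW\|^2]=4d+c^2 d>4d-\epsilon$ trivially (in fact it increases). For $\int_0^1\E[\|v_t^c(X_t^c)\|^2]\d t$, note this equals $\int_0^1\int\|v_t^c\|^2\d\mu_t^c\,\d t$, which by Proposition~\ref{prop:projection}(iii) is the minimal-norm solution of the continuity equation along the fixed path $\mu_t^c$; regardless of which characterization we use, the honest approach is again to use the Gaussian formula, compute $\Sigma_t^c=\mathrm{Cov}(X_t^c)$ as a quadratic in $t$ with leading behavior $(1-2t)^2\mathrm{Id}+O(c^2)$, plug into the analogue of \eqref{eq:velocity_gaussian}, and evaluate the integral. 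The point is that the integrand is $\approx \tfrac{4}{(1-2t)^2}\cdot$ something $\approx (1-2t)^2$ away from $t=1/2$, and near $t=1/2$ the $c^2$ regularization bounds $\Sigma_t^c$ away from $0$ but the contribution of a shrinking neighborhood of $t=1/2$ is small; a dominated-convergence / explicit-integration argument then shows the integral $\to 4$ as $c\to 0$, hence exceeds $4-\epsilon$ for small $c$.

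The main obstacle is the last integral: one must show that injecting $O(c^2)$ noise, which regularizes the otherwise non-integrable-looking velocity near $t=1/2$, nonetheless does not destroy the total energy $4$ in the limit --- i.e.\ the energy is genuinely concentrated away from the degeneracy, and the regularization only perturbs it by $O(c)$. I expect this to reduce to an explicit one-variable integral after diagonalizing (all matrices are scalar multiples of the identity here, so $d$ decouples and one can even take $d=1$ and rescale), making it a routine but slightly delicate calculus estimate: split $[0,1]$ into $|t-\tfrac12|>\delta$ and $|t-\tfrac12|\le\delta$, use uniform convergence of the integrand on the former and a crude $O(c^{2}/ (c^2+\delta^2))\cdot 2\delta$-type bound on the latter, then send $c\to 0$ and $\delta\to 0$. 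Everything else is bookkeeping with Gaussian covariances and the formulas already established in Theorems~\ref{thm:Gauss} and the propositions above.
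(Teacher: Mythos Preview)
Your approach matches the paper's: both use the fact that $(X_0^c,X_1^c)$ is jointly Gaussian, apply Theorem~\ref{thm:Gauss}(i) to obtain $v_t^c$ in closed form, and then verify each of the four limits directly. The one point you should make explicit (the paper does) is that the unconstrained minimizer from Theorem~\ref{thm:Gauss}(i) is a scalar multiple of $x$, hence the gradient of a quadratic potential; this is exactly why the constrained minimizer in~\eqref{eq:loss_potential} coincides with it, so no separate ``gradient-constrained analogue'' or appeal to Corollary~\ref{cor:score} (which requires independence) is needed. Your splitting/dominated-convergence treatment of the energy integral is actually more careful than the paper's bare assertion of the limit, and your observation that $\E[\|X_1^c-X_0^c\|^2]=4+c^2>4$ holds strictly (not merely in the limit) is a small simplification.
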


\section{Smoothed Rectification}\label{sec:noise_injection}

The rectification process proposed by Liu in~\cite{L2022} never ensures that the iterates remain rectifiable, although we understand from the previous section that this property can be crucial.  
In the following, we propose a smoothing procedure of rectified couplings  in order to ensure that the iterates always remain rectifiable. To this end, we require that $\mu_0\sim\mathcal N(0,\mathrm{Id})$.
More precisely, starting with some initial coupling $(Z_0^{(0)},Z_1^{(0)})$ we define a sequence of couplings $(Z_0^{(i)},Z_1^{(i)})$ by defining $(Z_0^{(i+1)},Z_1^{(i+1)})=\mathcal R(X_0^{(i)},X_1^{(i)})$, where $X_0^{(i)}=\sqrt{1-c_i}Z_0^{(i)}+\sqrt{c_i}W^{(i)}$ with $W^{(i)}\sim\mathcal N(0,\mathrm{Id})$ independent of $(Z_0^{(i)},Z_1^{(i)})$ and $X_1^{(i)}=Z_1^{(i)}$ and some $c_i\in(0,1)$.
The following theorem proves that the iteration still optimizes the loss function up to some error which depends on the injected noise levels $c_i$. The proof is given in Appendix~\ref{proof:loss_rate}.
\begin{theorem}\label{thm:loss_rate}
Let $(Z_0^{(i)},Z_1^{(i)})$ and $(X_0^{(i)},X_1^{(i)})$ be defined as above. Denote by $L^{(i)}\coloneqq \inf_{w_t} \mathcal L(w_t|X_0^{(i)},X_1^{(i)})$ the loss values in the rectification steps and by $V_1=\int \|x\|^2\,\d \mu_1(x)$ the second moment of $\mu_1$. Then, the following holds true.
\begin{itemize}
    \item[(i)] We have that $(X_0^{(i)},X_1^{(i)})$ is rectifiable and that $(X_0^{(i)},X_1^{(i)})$ and $(Z_0^{(i)},Z_1^{(i)})$ are couplings between $\mu_0$ and $\mu_1$ for all $i$, i.e., that $X_0^{(i)},Z_0^{(i)}\sim\mu_0$ and $X_1^{(i)},Z_1^{(i)}\sim\mu_1$;
    \item[(ii)] For $\bar c_K=\frac{1}{K}\sum_{i=0}^{K-1}c_i$, it holds
    $
    \min\limits_{i=0,...,K-1} L^{(i)}\in\mathcal O\left(\frac{1}{K}+\bar c_K\right)
    $.
\end{itemize}
\end{theorem}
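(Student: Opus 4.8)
The plan is to establish (i) first, as it is mostly bookkeeping, and then to derive the loss-decay estimate (ii) by combining the one-step contraction inequality of the (unconstrained) rectification with a quantitative control of the perturbation introduced by the Gaussian noise injection. For (i), the key observation is that since $\mu_0 = \mathcal N(0,\mathrm{Id})$ and $Z_0^{(i)}\sim\mu_0$, the variable $X_0^{(i)} = \sqrt{1-c_i}\,Z_0^{(i)} + \sqrt{c_i}\,W^{(i)}$ is again $\mathcal N(0,\mathrm{Id})$-distributed (sum of independent centered Gaussians with variances $1-c_i$ and $c_i$), so $X_0^{(i)}\sim\mu_0$; and $X_1^{(i)} = Z_1^{(i)}\sim\mu_1$ by induction. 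Rectifiability of $(X_0^{(i)},X_1^{(i)})$ follows from Example~\ref{ex:smoothing} (equivalently Theorem~\ref{thm:smoothness}): conditionally on $X_1^{(i)} = x_1$, the law of $X_0^{(i)}$ is a Gaussian convolution of the conditional law of $\sqrt{1-c_i}Z_0^{(i)}$, hence has a smooth positive density. Since $\mathcal R$ preserves both marginals, $(Z_0^{(i+1)},Z_1^{(i+1)}) = \mathcal R(X_0^{(i)},X_1^{(i)})$ is again a coupling of $\mu_0$ and $\mu_1$, closing the induction.

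For (ii), I would track the transport cost $D^{(i)} \coloneqq \E[\|Z_1^{(i)} - Z_0^{(i)}\|^2]$ and $\tilde D^{(i)} \coloneqq \E[\|X_1^{(i)} - X_0^{(i)}\|^2]$. The rectification inequality of \cite{LCL2023} gives $D^{(i+1)} \le \tilde D^{(i)}$, and more precisely the loss satisfies $L^{(i)} \le \tilde D^{(i)} - D^{(i+1)}$ (this is the standard identity: the drop in squared transport cost under one rectification step equals, up to the relevant normalization, the flow-matching loss; see the argument behind Theorem~\ref{thm:invariances}'s ambient framework in \cite{LCL2023}). Next I would compare $\tilde D^{(i)}$ with $D^{(i)}$: writing $X_0^{(i)} = \sqrt{1-c_i}Z_0^{(i)} + \sqrt{c_i}W^{(i)}$ and expanding, using independence of $W^{(i)}$ from $(Z_0^{(i)},Z_1^{(i)})$ and $\E\|W^{(i)}\|^2 = d$, one gets
\begin{equation}
\tilde D^{(i)} = \E\big[\|Z_1^{(i)} - \sqrt{1-c_i}Z_0^{(i)}\|^2\big] + c_i d,
\end{equation}
and then $\|Z_1^{(i)} - \sqrt{1-c_i}Z_0^{(i)}\|^2 = \|(Z_1^{(i)} - Z_0^{(i)}) + (1-\sqrt{1-c_i})Z_0^{(i)}\|^2$, so by $\|a+b\|^2 \le 2\|a\|^2 + 2\|b\|^2$ (or a sharper expansion), and using $1 - \sqrt{1-c_i} \le c_i$ together with $\E\|Z_0^{(i)}\|^2 = d$, one obtains a bound of the form $\tilde D^{(i)} \le D^{(i)} + C c_i$ for a constant $C$ depending only on $d$ and on $\sup_i \E\|Z_1^{(i)}\|^2 = V_1$ (the cross term $\E\langle Z_1^{(i)}-Z_0^{(i)}, Z_0^{(i)}\rangle$ is controlled by Cauchy–Schwarz in terms of $V_1$ and $d$). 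Combining, $L^{(i)} \le D^{(i)} - D^{(i+1)} + C c_i$. Summing over $i = 0,\dots,K-1$ telescopes the $D$-terms:
\begin{equation}
\sum_{i=0}^{K-1} L^{(i)} \le D^{(0)} - D^{(K)} + C\sum_{i=0}^{K-1} c_i \le D^{(0)} + C K \bar c_K,
\end{equation}
and since $D^{(0)}$ is a fixed finite constant (both marginals have finite second moment), dividing by $K$ gives $\min_{i} L^{(i)} \le \frac{1}{K}\sum_i L^{(i)} \le \frac{D^{(0)}}{K} + C\bar c_K \in \mathcal O\!\big(\tfrac1K + \bar c_K\big)$, as claimed.

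The main obstacle I anticipate is making the one-step inequality $L^{(i)} \le \tilde D^{(i)} - D^{(i+1)}$ precise and checking that the constant $C$ in $\tilde D^{(i)} \le D^{(i)} + C c_i$ is genuinely uniform in $i$ — the latter requires knowing that $\E\|Z_1^{(i)}\|^2$ stays bounded (it is constant $=V_1$ since $Z_1^{(i)}\sim\mu_1$ for all $i$ by part (i), so this is fine) and that $\E\|Z_0^{(i)}\|^2 = d$ likewise (again immediate from part (i)), so in fact $C$ can be taken explicitly as, say, $2(d + \sqrt{V_1 d})$ or similar. A secondary subtlety is ensuring the flow-matching loss identity is applied to the \emph{smoothed} coupling $(X_0^{(i)},X_1^{(i)})$ and that $L^{(i)}$ as defined in the statement (the infimum over $w_t$ of $\mathcal L(w_t|X_0^{(i)},X_1^{(i)})$) is exactly the quantity appearing in that identity — which it is, by \eqref{eq:flow_matching_loss}. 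Everything else is routine Gaussian second-moment algebra.
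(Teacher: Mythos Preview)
Your proposal is correct and follows essentially the same route as the paper: part (i) is handled identically (Gaussian stability of $\mu_0$ under the noise injection, rectifiability via Theorem~\ref{thm:smoothness}/Example~\ref{ex:smoothing}, marginal preservation of $\mathcal R$), and part (ii) uses exactly the same two ingredients---the one-step inequality $L^{(i)}\le \tilde D^{(i)}-D^{(i+1)}$ from \cite{L2022} and a perturbation bound $\tilde D^{(i)}\le D^{(i)}+Cc_i$ obtained by expanding the square, using independence of $W^{(i)}$, the estimate $1-\sqrt{1-c_i}\le c_i$, and Cauchy--Schwarz for the cross term---followed by telescoping. The only cosmetic difference is that you track the dimension $d$ in $\E\|W^{(i)}\|^2$ and $\E\|Z_0^{(i)}\|^2$, whereas the paper absorbs it into the constants.
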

For constant noise levels $c_i=c$, Theorem~\ref{thm:loss_rate} states that the minimal loss value of the iterates tends to zero up to an error which depends linearly on the variance $c$ of the injected noise. However, as soon as the noise levels tend to zero, also the averages $\bar c_K$ converge to zero. In particular, for summable noise levels with $\sum_{i=1}^\infty c_i=C<\infty$, we have that \smash{$\min_{i=0,...,K-1}L_k\in\mathcal O(1/K)$}, which is the same rate as in \cite{L2022} without noise injection.
By the same proof, Theorem~\ref{thm:loss_rate} also holds true if we replace the step \smash{$(Z_0^{(i+1)},Z_1^{(i+1)})=\mathcal R(X_0^{(i)},X_1^{(i)})$} by \smash{$(Z_0^{(i+1)},Z_1^{(i+1)})=\mathcal R_p(X_0^{(i)},X_1^{(i)})$}.

Additionally, we note that we always have $\mathrm{supp}(X_t)=\R^d$ within the smoothed rectification. Therefore, also the counterexamples from Section~\ref{sec:disconnected_support} do not apply in this case. However, it remains still unclear whether the smoothed rectification converges to optimal transport. Numerically, we test the approach in Appendix~\ref{app:numerics_smoothed} on the example from Remark~\ref{rem:gauss_latent}, where convergence to the optimal transport indeed seems to be fulfilled.

\section{Conclusions}\label{sec:discussion}

Even though rectified flows have shown to define efficient generative models in the literature, we have seen in this paper that they are not a suitable tool for computing  optimal transport maps between two distributions. In particular, we have identified the following two main reasons for that:
\begin{itemize}
    \item[-] \textbf{Non-optimal fixed points of $\mathcal R_p$:} In the case where the distributions $\mu_t$ have disconnected support, we showed in Section~\ref{sec:disconnected_support} that there exist fixed points of the rectification step, where the resulting velocity field has a potential but does not lead to the optimal coupling. In particular, \cite[Theorem 5.6]{L2022} is not true without additional assumptions. Given that datasets in applications are often disconnected this limits the applicability of rectified flows for computing optimal couplings significantly.
    \item[-] \textbf{Vanishing loss does not imply optimality:} All convergence guarantees in \cite{L2022,LCL2023} state that the loss function \eqref{eq:flow_matching_loss} or \eqref{eq:loss_potential} becomes arbitrary small. However, already in the simple case that $(X_0,X_1)$ follows a Gaussian distribution, we showed in Corollary~\ref{cor:loss_small_nonopt} that there exist couplings with an arbitrary small loss function which are arbitrary far away from the optimal coupling.
\end{itemize}

Moreover, we studied the assumption that a coupling is rectifiable. While we can indeed give a simple example where this assumption is violated, they are heavily based on symmetry such that they probably will not appear in practice. Indeed, we show that injecting a small amount of noise in each step ensures that couplings remain rectifiable and do not alter the theoretical guarantees.

\paragraph{Limitations and Open Questions}

While we have shown the existence of non-optimal fixed points of $\mathcal R_p$ it is unclear to us, to which fixed point the iterative rectification converges. This question will heavily depend on the initial coupling. Here, interesting cases to consider would be the independent coupling and couplings defined by minibatch optimal transport \cite{PBDALC2023,TFMH2024}. Additionally, a noisy version of rectified flows was considered in \cite{ABV2023,DKMD2024,SDCD2023} in connection with Schrödinger bridges. While in this case convergence to the entropic optimal transport plan is guaranteed, we show in Appendix~\ref{app:entropic} numerically that this convergence can become arbitrary slow for small regularization parameters. Similar as for rectified flowsm it could be interesting to consider the convergence behavior locally around the (regularized) optimal transport plan.

The same limitation applies to non-rectifiable couplings. Even though we showed the existence of non-rectifiable couplings, we do not know so far whether they appear during the iterative rectification when we start with a ``smooth'' coupling like the independent one.
Moreover, the noise injection from Section~\ref{sec:noise_injection} guaranteeing the rectifiability throughout the iterations is so far limited to the case where $\mu_0$ is Gaussian. While we can show the same convergence result for the rectification with and without noise-injection, these result only states that the loss converges to zero. However, we have seen in Section~\ref{sec:disconnected_support} and Corollary~\ref{cor:loss_small_nonopt} that this is not sufficient to show convergence to the optimal coupling or even to a fixed point of $\mathcal R_p$.

Finally, we restricted our considerations to the case of the quadratic cost function, while \cite{L2022} considers more general choices. However, the counterexamples from Section~\ref{sec:deg_velocity} are independent of the cost function as the issue arises from the fact that the interpolation $X_t$ degenerates. Also the counterexamples from Section~\ref{sec:disconnected_support} can be transferred to more general cost functions with very similar arguments.

\section*{Acknowledgements}
JH is funded by the Deutsche Forschungsgemeinschaft (DFG, German Research Foundation) within project no 530824055. AC and JD acknowledge the support of the ``France 2030'' funding ANR-23-PEIA-0004 (``PDE-AI''). 
AC also acknowledges the support of the ``France 2030'' funding ANR-23-IACL-0008 (``PR[AI]RIE-PSAI''). 

\bibliographystyle{abbrv}
\bibliography{ref}

\newpage
\appendix

\section{Proofs from Section~\ref{sec:backgrounds} and \ref{sec:gradient_conditions}}\label{proofs:invariances}

\begin{proof}[Proof of Theorem~\ref{thm:invariances}]
For the proof, we use the representation of $v_t$ via the conditional expectation from \eqref{eq:cond_exp}.
If $(X_0,X_1)$ is rectifiable, we use the notation $Z_t$ for the random variable defined as $Z_0=X_0$ and $\dot Z_t=v_t(Z_t)$.
\begin{itemize}
\item[(i)] By definition it holds that
\begin{align}
v_t^{A,b}(x)&=\E[AX_1+b-(AX_0+b)|AX_t+b=x]=\E[A(X_1-X_0)|AX_t+b=x]\\
&=A\E[X_1-X_0|X_t=A^{-1}(x-b)]=Av_t(A^{-1}(x-b)).
\end{align}
If $(X_0,X_1)$ is rectifiable, we observe for $Z^{A,b}_t=AZ_t+b$ that $Z^{A,b}_0=AZ_0+b=AX_0+b=Z_0$ and
$$
\dot Z^{A,b}_t(x)=A\dot Z_t +b=Av_t(Z_t)+b=Av_t(A^{-1}(Z_t^{A,b}-b))=v_t^{A,b}(Z_t^{A,b}).
$$
Thus, we have $(AZ_0+b,AZ_1+b)=\mathcal R(AX_0+b,AX_1+b)$.

\item[(ii)] We have by definition that
\begin{align}
v_t^{b}(x)&=\E[X_1+b-X_0|X_t+tb=x]\\
&=\E[X_1-X_0|X_t=x-tb]+b=v_t(x-tb)+b.
\end{align}
If $(X_0,X_1)$ is rectifiable, we observe for $Z^{b}_t=Z_t+tb$ that $Z^{b}_0=Z_0=X_0$ and
$$
\dot Z^{b}_t(x)=\dot Z_t +b=Av_t(Z_t)+b=v_t(Z^b_t-tb)+b=v_t^{b}(Z_t^{b}).
$$
Thus, we have $(Z_0,Z_1+b)=\mathcal R(X_0,X_1+b)$.

\item[(iii)]
Denote $(X_0^c,X_1^c)=(X_0,cX_1)$. Then, it holds 
\begin{align}
X_t^{c}&\coloneqq(1-t)X^c_0+tX^c_1=(1-t)X_0+tcX_1\\
&=(1-t+tc) \left(\frac{1-t}{1-t+tc}X_0+\frac{tc}{1-t+tc}X_1\right)=(1-t+tc)X_r
\end{align}
with $r=\frac{tc}{1-t+tc}$. Thus, we have
\begin{align}
v_t^c(x)&=\frac1{1-t}\left(\E[X_1^{c}|X_t^{c}=x]-x\right)=\frac1{1-t}\left(\E[cX_1|(1-t+tc)X_r=x]-x\right)\\
&=\frac1{1-t}\left(c\E\left[X_1|X_r=\frac{x}{1-t+tc}\right]-x\right)\\
&=\frac{c}{1-t}\left(\E\left[X_1|X_r=\frac{x}{1-t+tc}\right]-\frac{x}{1-t+tc}\right)+\frac{c-1}{1-t+tc}x\\
&=\frac{c}{1-t+tc}v_r\left(\frac{x}{1-t+tc}\right)+\frac{c-1}{1-t+tc}x
\end{align}
If $(X_0,X_1)$ is rectifiable, we observe for $Z^c_t=(1-t+tc)Z_r$ that
\begin{align}
\dot Z^c_t&=(1-t+tc)\dot r\dot Z_r+(c-1)Z_r=(1-t+tc)\dot rv_r(Z_r)+(c-1)Z_r\\
&=\frac{c}{1-t+tc}v_r\left(\frac{Z_t^c}{1-t+tc}\right)+\frac{c-1}{1-t+tc}Z_t^c=v_t^c(Z_t^c),
\end{align}
where we used $\dot r=\frac{c}{(1-t+tc)^2}$.
Thus, we have $(Z_0,cZ_1)=\mathcal R(X_0,cX_1)$.
\end{itemize}
\end{proof}

\begin{proof}[Proof of Theorem~\ref{thm:Gauss}]
We obtain by the calculation rules for Gaussian distributions that
$(X_t,X_1)\sim\mathcal N(0,\Sigma^t)$ with
\begin{align*}
\Sigma^t&=\left(\begin{array}{cc}\Sigma_{t}&\Sigma_{1t}\\\Sigma_{t1}&\Sigma_{1}\end{array}\right)\coloneqq\left(\begin{array}{cc}(1-t)\mathrm{Id}&t\mathrm{Id}\\0&\mathrm{Id}\end{array}\right)\left(\begin{array}{cc}\Sigma_{0}&\Sigma_{10}\\\Sigma_{01}&\Sigma_{1}\end{array}\right)\left(\begin{array}{cc}(1-t)\mathrm{Id}&0\\t\mathrm{Id}&\mathrm{Id}\end{array}\right)\\
&=\left(\begin{array}{cc}(1-t)^2\Sigma_{0}+(1-t)t(\Sigma_{01}+\Sigma_{10})+t^2\Sigma_{1}&(1-t)\Sigma_{10}+t\Sigma_{1}\\(1-t)\Sigma_{01}+t\Sigma_{1}&\Sigma_{1}\end{array}\right).
\end{align*}
Thus, we have that $\E[X_1|X_t=x]=\Sigma_{t1}\Sigma_t^{-1}x$. Inserting this formula in
$
v_t(x)=\frac1{1-t}\left(\E[X_1|X_t=x]-x\right)
$
proves the first part.

For part (ii), note that we will see later in Example~\ref{ex:independent_rectifiable} that $(X_0,X_1)$ is rectifiable. Then, the claim was proven for $\Sigma_0=\mathrm{Id}$ in \cite[Proposition 4.12]{RBSR2024}. The general case 
follows from Theorem~\ref{thm:invariances} (i).
\end{proof}

\begin{proof}[Proof of Proposition~\ref{prop:1d}]
Let $v_t=\argmin_{w_t}\mathcal L(w_t|X_0,X_1)$ and define by $z_t(x)$ the solution of the ODE $\dot z_t(x)=v_t(z_t(x))$ with initial condition $z_0(x)=x$. By Brenier's theorem \cite{B1987}, it suffices to show that $z_1\colon\R\to\R$ has a convex potential, which is equivalent to $z_1$ being monotone increasing. Assume in contrary that there exist $\tilde x<x$ such that $z_1(\tilde x)>z_1(x)$. Since the mapping $f(t)\coloneqq z_t(\tilde x)-z_t(x)$ is continuous and $f(0)=\tilde x-x<0<z_1(\tilde x)-z_1(x)=f(1)$, there exists some $t\in(0,1)$ such that $f(t)=0$. This implies that $z_t(\tilde x)=z_t(x)$ which means that the ODE $\dot z_t(x)=v_t(z_t(x))$ admits crossing paths and contradicts the uniqueness of the solution.
\end{proof}

\begin{proof}[Proof of Theorem~\ref{thm:GMM}]
Writing $Z$ the latent variable for the mixture, we can always write
$\E[X_1|X_t=x] = \E[\E[X_1|X_t=x,Z]|X_t = x] =  \sum_{k=1}^K \mathbb{P}[Z=k|X_t = x]\E[X_1|X_t=x,Z=k]$. Using $
v_t(x)=\frac1{1-t}\left(\E[X_1|X_t=x]-x\right)
$, it follows that 
$$v_t(x) = \sum_{k=1}^K \mathbb{P}[Z=k|X_t = x] v_t^k(x)= \sum_{k=1}^K \alpha^k(x) v_t^k(x).$$
\end{proof}

\begin{proof}[Proof of Proposition~\ref{prop:projection}]
For part (i), let $w_t\in{\rm T}_{\mu_t}$ and $v_t$ be the solution of~\eqref{eq:flow_matching_loss}. Then we have that
\begin{align}
\mathcal L(w_t|X_0,X_1)&=\int_0^1\E[\|w_t(X_t)-X_1+X_0\|^2]\,\d t\\
&=\int_0^1\E[\|w_t(X_t)-v_t(X_t)+v_t(X_t)-X_1+X_0\|^2]\,\d t\\
&=\int_0^1\E[\|w_t(X_t)-v_t(X_t)\|^2]\,\d t+\int_0^1\E[\|v_t(X_t)-X_1+X_0\|^2]\,\d t\\
&\quad+2\int_0^1\E[\langle w_t(X_t)-v_t(X_t),v_t(X_t)-X_1+X_0\rangle]\,\d t
\end{align}
It remains to show that the last term is zero. 
To this end, we note that
\begin{align}
&\quad \E[\langle w_t(X_t)-v_t(X_t),v_t(X_t)-X_1+X_0\rangle]
\\&=\E\left[\E[\langle w_t(X_t)-v_t(X_t),v_t(X_t)-X_1+X_0\rangle|X_t]\right]\\
&=\E\left[\langle w_t(X_t)-v_t(X_t),\E[v_t(X_t)-X_1+X_0|X_t]\rangle\right]\\
&=\E\left[\langle w_t(X_t)-v_t(X_t),v_t(X_t)-\E[X_1-X_0|X_t]\rangle]\right]=0,
\end{align}
where the first step comes from the properties of conditional expectations, the second and third step uses that $w_t(X_t)$ and $v_t(X_t)$ are $X_t$-measurable and the last step uses that $v_t(X_t)=\E[X_1-X_0|X_t]$.

For part (ii) and (iii), 
we prove here that one can define, in a weak form, solutions to~\eqref{eq:loss_potential}.
We introduce the measure $dt\otimes \mu_t$, defined by
\[
\int \varphi(t,x) dt\otimes\mu_t = \int_0^1 \int\varphi(t,x)d\mu_t dt
\]
for any test function $\varphi\in C_c([0,1]\times \R^d)$, and the
space $L^2(dt\otimes\mu_t) \simeq L^2([0,1];L^2(\mu_t))$.
Consider now a minimizing sequence for~\eqref{eq:loss_potential},
of the form $(\nabla\varphi^n)_{n\ge 1}$
where each function $\varphi^n\in C_c^\infty([0,1]\times\R^d)$.
Thanks to part (i), we know that $(\nabla\varphi^n)_{n\ge 1}$ is also a minimizing sequence for
\begin{equation}\label{eq:loss_potential_smooth}
\inf\left\{ \int_0^1\int_{\R^d} \|\nabla\varphi_t - v_t\|^2d\mu_t dt
: \varphi \in C_c^\infty([0,1]\times\R^d) \right\}
\end{equation}
where $v_t$ is the solution of~\eqref{eq:flow_matching_loss}, which
is such that~\eqref{eq:continuityequation} holds (in the distributional sense).
Up to a subsequence, $\nabla\varphi^n$ has a weak limit $w$
in $L^2(dt\otimes\mu_t)$, which is such that
\[
\int_0^1\int_{\R^d} \|w_t - v_t\|^2d\mu_t dt
\le \liminf_n 
\int_0^1\int_{\R^d} \|\nabla\varphi^n_t - v_t\|^2d\mu_t dt.
\]
Now, thanks to Mazur's lemma (see, e.g., the text book~\cite[Lem 8.14]{A2012}), $w$ is also the strong
limit of convex combinations of the $\nabla\varphi^n$: there 
exist $\theta_{n,m} \in [0,1]$ such that for each $n$, $\sum_m\theta_{n,m}=1$
and all $\theta_{n,m}$ vanish but a finite number, and defining
$\tilde\varphi^n := \sum_m \theta_{n,m}\varphi^m$ one has that
$\nabla\tilde\varphi^n \to  w$ strongly in $L^2(dt\otimes\mu_t)$.
Obviously, $(\nabla\tilde\varphi^n)_{n\ge 1}$ is also a minimizing sequence
for~\eqref{eq:loss_potential}, and by construction, one has:
\[
\int_0^1\int_{\R^d} \|w_t - v_t\|^2d\mu_t dt
= \lim_n 
\int_0^1\int_{\R^d} \|\nabla\tilde \varphi^n_t - v_t\|^2d\mu_t dt
= \text{(value of~\eqref{eq:loss_potential_smooth})}
\]
Let now $\psi\in C_c^\infty([0,1]\times \R^d)$ and $s\in \R$, $\|s\|\ll 1$:
one has
\[
\int_0^1\int_{\R^d} \| w_t + s\nabla\psi_t- v_t\|^2d\mu_t dt
= \lim_n 
\int_0^1\int_{\R^d} \|\nabla(\tilde \varphi^n+s\psi)_t - v_t\|^2d\mu_t dt
\ge \text{(value of~\eqref{eq:loss_potential_smooth})}.
\]
Differentiating at $s=0$ we deduce that for any test function $\psi$,
\[
\int_0^1 \int_{\R^d} \nabla\psi\cdot( w_t - v_t) \mu_t dt = 0
\]
which precisely means that in the distributional sense,
\begin{equation}\label{eq:elliptic_potential}
\Div ( w_t \mu_t) = \Div (v_t\mu_t),
\end{equation}
showing that the continuity equation~\eqref{eq:continuityequation}
holds with the speed $v_t$ replaced with $w_t$.
In addition, for a.e.~$t\in [0,1]$, by construction, $w_t$ belongs to ${\rm T}_{\mu_t}$,
so that by uniqueness it is the vector field with
minimal norm defined in~\cite[Thm.~8.3.1]{AGS2008} (see also equation
(8.0.1)). Observe that without any knowledge on the regularity of
$w$, it is unclear how to associate a unique transport map $X(1,x)$ satisfying
$\dot X = w(X)$, $X(0,x)=x$ for all $x$.

Part (iv) follows from the facts that any $w_t=\nabla\varphi\in L^2(\mu_t)$ belongs to ${\rm T}_{\mu_t}$ and that $v_t^p$ is the limit of a minimizing sequence in \eqref{eq:loss_potential}.
\end{proof}

\section{Proofs from Section~\ref{sec:main}}\label{proofs:counterexamples}

\begin{proof}[Proof of Proposition~\ref{prop:disconnected_support}]
Define the potentials $\varphi_t$ and $\tilde \varphi_t$ such that
$$
\varphi_t(x)=\begin{cases}
\langle x,(0,-2)\rangle,&\text{if }x_1<-1,\\
\langle x,(0,2)\rangle,&\text{if }x_1>1,
\end{cases}
\quad\text{and}\quad
\tilde \varphi_t(x)=\begin{cases}
\langle x,(-4,0)\rangle,&\text{if }x_2<-0.5,\\
\langle x,(4,0)\rangle,&\text{if }x_2>0.5,
\end{cases}
$$
and extend them smoothly to the full $\R^2$.
Then, the velocity fields $v_t=\nabla \varphi_t$ and $\tilde v_t=\nabla \tilde\varphi_t$ are given by
$$
v_t(x)=\begin{cases}
(0,-2),&\text{if }x_1<-1,\\
(0,2),&\text{if }x_1>1,
\end{cases}
\quad\text{and}\quad
\tilde v_t(x)\begin{cases}
(-4,0),&\text{if }x_2<-0.5,\\
(4,0),&\text{if }x_2>0.5,
\end{cases}
$$
Plugging in the definitions of $(X_0,X_1)$ and $(\tilde X_0,\tilde X_1)$ this yields that
$$
\mathcal L(v_t|X_0,X_1)=\mathcal L(\tilde v_t|\tilde X_0,\tilde X_1)=0
$$
and that $\dot X_t=v_t(X_t)$ and $\dot{\tilde X}_t=\tilde v_t(\tilde X_t)$ such that $(X_0,X_1)$ and $(\tilde X_0,\tilde X_1)$ are fixed points of $\mathcal R_p$.
On the other side, we directly obtain that 
$$
\E[\|\tilde X_1-\tilde X_0\|^2]=16>4=\E[\|X_1-X_0\|^2].
$$
\end{proof}

\begin{proposition}[Formal Statement of Remark~\ref{rem:gauss_latent}]\label{prop:gauss_latent}
Let $d=2$, $X_0\sim\mathcal N(0,\mathrm{Id})$ and define
$$
X_1=\begin{cases}
X_0+(-2,2)&\text{if }(X_0)_1<0,\\
X_0-(-2,2)&\text{if }(X_0)_1>0.
\end{cases}
$$
Then it holds $(X_0,X_1)=\mathcal R_p(X_0,X_1)$, but $(X_0,X_1)$ is not an optimal coupling.
\end{proposition}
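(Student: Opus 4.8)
The plan is to mirror the structure of the proof of Proposition~\ref{prop:disconnected_support}: exhibit an explicit time-independent potential $\varphi_t=\varphi$ whose gradient reproduces the linear interpolation velocity of the coupling, conclude that $(X_0,X_1)$ has zero loss in \eqref{eq:loss_potential} and hence is a fixed point of $\mathcal R_p$, and then compare transport costs against the optimal coupling to show non-optimality. Concretely, set $b=(-2,2)$ and split $\R^2$ along the hyperplane $\{x_1=0\}$. On the half-space $\{x_1<0\}$ the interpolation is $X_t=X_0+tb$, so the natural velocity is the constant $b$; on $\{x_1>0\}$ it is $X_0-tb$, with velocity $-b$. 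Define $\varphi(x)=\langle x,b\rangle$ for $x_1<0$ and $\varphi(x)=-\langle x,b\rangle$ for $x_1>0$; since the interpolated measures $\mu_t$ have support contained in the union of the two shifted half-spaces (which stay disjoint: starting from $\{x_1<0\}$ the trajectory moves to $\{x_1<2t\cdot(-1)\}$... actually the first coordinate decreases, staying negative, and symmetrically on the other side), the potential only needs to be defined near $\mathrm{supp}(\mu_t)$ and can be extended to a smooth function on all of $\R^2$ without affecting the loss. Then $v_t=\nabla\varphi$ satisfies $v_t(X_t)=X_1-X_0$ almost surely, so $\mathcal L(\nabla\varphi\,|\,X_0,X_1)=0$, and since $\dot X_t=v_t(X_t)$ the ODE is solved by the straight paths themselves, giving $(X_0,X_1)=\mathcal R_p(X_0,X_1)$.

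For non-optimality, I would compute $\E[\|X_1-X_0\|^2]=\|b\|^2=8$ for this coupling, and then compare with the optimal coupling between $\mu_0=\mathcal N(0,\mathrm{Id})$ and $\mu_1$, where $\mu_1$ is the pushforward of $\mathcal N(0,\mathrm{Id})$ under $x\mapsto x+\mathrm{sgn}(-x_1)b$ — i.e.\ $\mu_1$ is a mixture of two half-Gaussians translated by $\pm b$. The cleanest way is to exhibit a strictly cheaper coupling: for instance, the coupling that maps $x\mapsto x$ on a suitable region and translates only along the part of mass that must move, or more simply observe that the coupling $X_0 \mapsto X_0 + (0,2)\,\mathrm{sgn}(-x_1)$ combined with a reflection/relabeling can be beaten; alternatively, by symmetry of $\mu_1$ under $x\mapsto -x$, the identity-like coupling matching the two half-Gaussians by a horizontal-only shift has cost $\E[(2\,\mathrm{sgn}(-X_1))^2] = 4 < 8$. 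It suffices to produce one coupling with cost strictly below $8$, which yields $\E[\|X_1-X_0\|^2] > W_2^2(\mu_0,\mu_1)$ and hence non-optimality.

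The main obstacle is making the geometric claim rigorous that the two branches of $\mathrm{supp}(\mu_t)$ remain separated for all $t\in(0,1)$, so that the piecewise-linear potential $\varphi$ is well-defined and smooth on a neighborhood of $\mathrm{supp}(\mu_t)$ — unlike the bounded-support example of Proposition~\ref{prop:disconnected_support}, here the Gaussian tails are unbounded, so the two shifted half-spaces $\{x_1<0\}+tb$ and $\{x_1>0\}-tb$ must be checked to stay disjoint (they do, since the first translates to $\{x_1<-2t\}$ and the second to $\{x_1>2t\}$). Once that separation is established the extension of $\varphi$ to a global smooth function is routine, and the conditional-expectation characterization \eqref{eq:cond_exp} confirms that $\nabla\varphi$ is genuinely the minimizer of \eqref{eq:loss_potential}, not merely a competitor achieving zero loss. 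The remaining cost comparison is an elementary second-moment computation. For completeness I would also remark that, exactly as noted after Proposition~\ref{prop:disconnected_support}, the support of $X_t$ is disconnected for every $t>0$, which is precisely the hypothesis whose omission makes \cite[Theorem~5.6]{L2022} fail.
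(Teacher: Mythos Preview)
Your argument for the fixed-point half is essentially the paper's: you exhibit a potential whose gradient equals $\pm b$ on the two pieces of $\mathrm{supp}(\mu_t)$, note that these pieces are the disjoint half-planes $\{x_1<-2t\}$ and $\{x_1>2t\}$ for $t>0$, and extend smoothly in between. The only quibble is calling $\varphi$ ``time-independent'': the values of $\nabla\varphi$ on $\mathrm{supp}(\mu_t)$ are indeed time-independent, but any smooth extension to all of $\R^2$ has to depend on $t$ (at $t=0$ the two pieces touch along $\{x_1=0\}$ and the piecewise-linear $\varphi$ is not even continuous there). The paper writes $\varphi_t$ from the start for exactly this reason. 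This is cosmetic; your mathematics is correct.

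The non-optimality half has a real gap. None of the alternative couplings you sketch actually has marginal $\mu_1$. A ``horizontal-only shift'' $x\mapsto x+(\pm 2,0)$ leaves the second coordinate untouched, whereas $\mu_1$ is the push-forward of $\mathcal N(0,\mathrm{Id})$ under $x\mapsto x+\mathrm{sgn}(-x_1)(-2,2)$, so on each piece the second coordinate is shifted by $\pm 2$; the second-coordinate marginals therefore do not agree, and the coupling you propose with cost $4$ is not a coupling of $\mu_0$ and $\mu_1$ at all. Finding an explicit coupling with cost strictly below $8$ is in fact not easy here (the obvious swaps and reflections all cost at least $8$). The paper sidesteps this entirely and argues by contradiction via Brenier's theorem: if $(X_0,X_1)$ were optimal then $\mathcal T=\nabla\psi$ for some convex $\psi$, forcing $\psi(x)=\tfrac12\|x\|^2+\langle b,x\rangle+c^-$ on $\{x_1<0\}$ and $\psi(x)=\tfrac12\|x\|^2-\langle b,x\rangle+c^+$ on $\{x_1>0\}$; continuity of a finite convex function across $\{x_1=0\}$ then requires $\langle b,x\rangle$ to be constant on that line, i.e.\ $b$ normal to $(0,1)$, which it is not. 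That Brenier-based argument is short and avoids the need to produce a competitor coupling.
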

\begin{proof}
For $t>0$ we define the potentials 
$$
\varphi_t(x)=
\begin{cases}
\langle x,(-2,2)\rangle&\text{if }x_1<-t,\\
\langle x,(2,-2)\rangle&\text{if }x_1>t,
\end{cases}
$$
and extend them smoothly to the full $\R^d$. Then, the velocity fields $v_t=\nabla \varphi_t$ are given by
$$
v_t(x)=
\begin{cases}
(-2,2)&\text{if }x_1<-t,\\
(2,-2)&\text{if }x_1>t.
\end{cases}
$$
Noting that $(X_0,X_t,X_1)$ is given by
plugging in the definition of $(X_0,X_1)$ this yields
$
\mathcal L(v_t|X_0,X_1)=0
$
and $\dot X_t=v_t(X_t)$ such that $(X_0,X_1)=\mathcal R_p(X_0,X_1)$.

Next, we show that $(X_0,X_1)$ is not optimal by contradiction. To this end, note that $X_1=\mathcal T(X_0)$ where $\mathcal T(x)$ is defined for $x=(x_1,x_2)$ as
$$
\mathcal T(x)=
\begin{cases}
x+(-2,2)&\text{if }x_1<0,\\
x-(-2,2)&\text{if }x_1>0.
\end{cases}
$$
Assuming that $(X_0,X_1)$ is optimal, there exists by Brenier's theorem \cite{B1987} some convex function $\psi\colon\R^2\to\R$ such that $\mathcal T=\nabla \psi$ almost everywhere.

Then, we should have $\psi(x) =\psi^+(x):= \|x\|^2/2+\langle (-2,2),x\rangle + c^-$ for $x_1<0$ and $\psi(x)=\psi^-(x):= \|x\|^2/2 -\langle(-2,2),x\rangle + c^+$ for $x_1>0$, for some constants $c^+,c^-\in \R$. Yet if we want $\psi$ to
be continuous across the interface $\{x_1=0\}$, we need that 
$\psi^+(x)=\psi^-(x)$ there, which boils down to $\langle (-2,2),x\rangle=\text{constant}$:  this means that the interface should be
normal to $(-2,2)$, which clearly is not the case.
\end{proof}

\begin{example}[Counterexample regularity/support]\label{ex:counter3}
We show here that it is also possible to find a path $(\mu_t)_{t\in [0,1]}$ of
measures with \textit{full support} at all time, which are fixed
point for the rectification. However, the corresponding potential
are not regular (or rather, do not really exist) and we do not
expect the existence of a transport map in this case.
Let $(x_n)_{n\ge 0}$ be such that $\mathbb{Q}^2= \{ x_n : n\in\N\}$, and such that both $\{x_n: n\in 2\N\}$ and $\{x_n:n\in 2\N+1\}$ are
dense in $\R^2$.
Let then $(a_n)_{n\ge 0}$ a sequence of positive
numbers with $\sum_n a_n=1$, $e_0,e_1$ two vectors (specified later), and
define:
\[
\mu_0 = \sum_{n=0}^{+\infty} a_n\delta_{x_n},\quad 
\mu_1 = \sum_{n\in 2\mathbb{N}} a_n\delta_{x_n+e_{0}} + 
\sum_{n\in 2\mathbb{N}+1} a_n \delta_{x_n+ e_1}.
\]
By a slight abuse of notation we denote $n[2]$ the remainder $n \mod 2$, and we see that the straight trajectory between $\mu_0$ and $\mu_1$ is simply
\[ \mu_t = \sum_{n=0}^{+\infty} a_n \delta_{x_n + te_{n[2]}}, \quad t\in[0,1],\]
which satisfies the continuity equation with the speed $v_t(x) = e_{n[2]}$ if $x=x_n+te_{n[2]}$, $n\ge 0$. Actually,
this is true only if the paths do not cross, that is,
\[
\forall n\in 2\mathbb{N},\forall m\in 2\mathbb{N}+1,\forall t\in (0,1),
\quad x_m-x_n \neq t(e_0-e_1).
\]
Choosing $e_0,e_1$ such that $e_0-e_1 = (1,\xi)$ with $\xi\not\in\mathbb{Q}$ clearly ensures this property, since if
$x_m-x_n = t(e_0-e_1)$,
the first coordinate imposes that $t\in\mathbb{Q}$ and then the second that $\xi\in\mathbb{Q}$, a contradiction.

Since for all $t\in [0,1]$, $(x_n+te_{n[2]})_{n\ge 0}$ is dense in $\R^2$, 
the support (defined classically as the complement of the largest open
negligible set) of $\mu_t$ is $\mathbb{R}^2$. Then, solving~\eqref{eq:flow_matching_loss} will return the same
speed. In addition, we can check that $v_t\in T_{\mu_t}$ at
all time, so that solving~\eqref{eq:loss_potential} will also return the same speed and $\mu_t$ is a fixed point of the rectification process.

To see that $v_t\in T_{\mu_t}$ (for a given time $t\in [0,1]$), we observe that for fixed $N$, one can find small radii $\rho^N_n$, $n=0,\dots,N$, such that the balls $B(x_n+te_{n[2]},\rho^N_n)$ are disjoint for $n\le N$. Then, if $\eta\in C_c^\infty(B(0,1); [0,1])$ is a
smooth cut-off function with compact support, with $\eta\equiv 1$ in a
neighborhood of $0$, the function
\[\varphi_t^N (x) = \sum_{n=0}^N \eta\left(\frac{x-x_{n,t}}{\rho^N_n}\right)
e_{n[2]}\cdot (x-x_{n,t}) \in C_c^\infty(\R^2)\]
(where we denoted $x_{n,t}:=x_n+te_{n[2]}$ for $n\ge 0$)
is such that $v_t(x)= \nabla\varphi_t^N(x)$ for $x=x_{n,t}$, $n=0,\dots, N$.
One has, for $x\in\R^2$,
\[
\nabla\varphi_t^N(x) = e_{n[2]}\eta\left(\frac{x-x_{n,t}}{\rho^N_n}\right)
+ e_{n[2]}\cdot \frac{x-x_{n,t}}{\rho^N_n}\nabla\eta\left(\frac{x-x_{n,t}}{\rho^N_n}\right)
\]
and since $\|x-x_{n,t}\|\le\rho^N_n$ where $\nabla\eta((x-x_{n,t})/\rho^N_n)$ is
not vanishing, $\|\nabla\varphi_t^N(x)\|\le \max\{\|e_1\|,\|e_2\|\}(1+\|\nabla\eta\|_\infty)$.
Hence, one has:
\[
\int \|\nabla\varphi_t^N(x)-v_t(x)\|^2 d\mu_t \le \sum_{n>N} a_n
\max\{\|e_1\|,\|e_2\|\}^2 (2+\|\nabla\eta\|_\infty)^2 \to 0
\]
as $N\to+\infty$. Notice that possibly reducing the radii
$\rho^N_n$, the same construction will
produce a function $\varphi_t\in C_c^\infty([0,1]\times \R^d)$
with $\nabla \varphi_t\to v_t$  in $L^2(dt\otimes \mu_t)$ (in both
time and space).
\end{example}

\begin{proof}[Proof of Proposition~\ref{prop:non-rec}]
Part (i) follows from Theorem~\ref{thm:Gauss} with $\Sigma_0=\Sigma_1=\mathrm{Id}$ and $\Sigma_{01}=\Sigma_{10}=-\mathrm{Id}$.
Moreover, it holds that
\begin{align}
\mathcal L(v_t|X_0,X_1)&=\E[\|v_t((1-t)X_0+tX_1)-X_1+X_0\|^2]\\
&=\E[\|v_t((1-2t)X_0)+2X_0\|^2]=\E[\|-2X_0+2X_0\|^2]=0,
\end{align}
which shows (ii).
For (iii) assume that the ODE $\dot z_t(x)=v_t(z_t(x))$ with $z_t(x)=x$ admits a unique solution.
Since $v_t$ is locally Lipschitz continuous on $t\in[0,\frac12)$, this solution is determined by $z_t(x)=(1-2t)x$
for $t\in[0,\frac12)$ which implies that $z_{\frac12}(x)=0$ for all $x$. However, since $z_t$ solves the ODE, we
have for all $x\in\R^d$ that
$$
-2x=\lim_{\epsilon\to 0+}\frac{z_{\frac12}(x)-z_{\frac12-\epsilon}(x)}{\epsilon}=\dot z_{\frac{1}{2}}(x)=v_{\frac12}(z_{\frac12}(x))=v_{\frac12}(0)=0.
$$
This is a contradiction.
\end{proof}

\begin{proof}[Proof of Theorem~\ref{thm:smoothness}]
For the solution of \eqref{eq:flow_matching_loss} it suffices by \eqref{eq:cond_exp} to show that $\E[X_1|X_t=x]$ is smooth in $x$ for all $t<1$.
Using the transformation 
$$
\left(\begin{array}{c}X_t\\X_1\end{array}\right)=\left(\begin{array}{cc}(1-t)\mathrm{Id}&\mathrm{Id}\\0&\mathrm{Id}\end{array}\right)\left(\begin{array}{c}X_0\\X_1\end{array}\right),
$$ 
we obtain by the transformation formula the conditional distribution $P_{X_t|X_1=y}$ has the density 
$$
p_{X_t|X_1=y}(x)=\frac{1}{(1-t)^d}p_{X_0|X_1=y}\left(\frac{x-ty}{1-t}\right).
$$
Thus, the distribution $\mu_t$ of $X_t$ is absolutely continuous with density
$$
p_{X_t}(x)=\int p_{X_t|X_1=y}\left(\frac{x-ty}{1-t}\right)\d \mu_1(y).
$$
Consequently, we obtain by Bayes' theorem that for $x$ within the support of $X_t$ it holds
\begin{align}
\E[X_1|X_t=x]
&=\frac{\int y\, p_{X_t|X_1=y}(x)\d \mu_1(y)}{p_{X_t}(x)}
=\frac{\int y\, p_{X_t|X_1=y}(x)\d \mu_1(y)}{\int p_{X_t|X_1=y}(x)\d\mu_1(y)}\\
&=\frac{\int y\, p_{X_0|X_1=y}\left(\frac{x-ty}{1-t}\right)\d \mu_1(y)}{\int p_{X_0|X_1=y}\left(\frac{x-ty}{1-t}\right)\d\mu_1(y)},
\end{align}
which admits the same smoothness as $p_{X_0|X_1=y}(x)$.

For the solution of \eqref{eq:loss_potential}, note that by the first part $v_t$ is smooth for $t<1$ (by symmetry,
if $P_{X_1|X_0=x_0}$ is smooth with full support for $\mu_0$-a.e.~$x_0$, $v_t$ is smooth for $t>0$).
In this case, using the projection property (i) from Proposition~\ref{prop:projection}, we easily derive
(1) that for a.e.~$t$, $\varphi_t\in H^1_{\text{loc}}(\R^d)$; 
(2) that $w_t=\nabla\varphi_t$ for such $t$;
(3) that $\varphi_t$, as a solution of
\[
\Delta \varphi_t = \nabla\ln\mu_t\cdot(v_t-\nabla\varphi_t) + \Div v_t 
\]
is $C^\infty$ in $\R^d$, by a standard bootstrap argument
(since for any $k\ge 1$, $\varphi_t\in H^k_{\text{loc}}$ implies
$\varphi_t\in H^{k+1}_{\text{loc}}$).
In particular, we can
integrate $\dot{X}=w_t(X)$ for any $X(0,x)=x$ and build a unique corresponding
transport map. This map itself is smooth if both
$P_{X_0|X_1=x_1}$ and $P_{X_1|X_0=x_0}$ are smooth and positive.
\end{proof}

\begin{proof}[Proof of Corollary~\ref{cor:loss_small_nonopt}]
Note that $(X_0^c,X_1^c)=\mathcal N(0,\Sigma)$ for $\Sigma=\left(\begin{array}{cc}\Sigma_{0}&\Sigma_{10}\\\Sigma_{01}&\Sigma_{1}\end{array}\right)
$ with $\Sigma_0=(1+c^2)\mathrm{Id}$, $\Sigma_1=\mathrm{Id}$ and $\Sigma_{01}=\Sigma_{10}=-\mathrm{Id}$.
Hence, we have by Theorem~\ref{thm:Gauss} (i) that the minimizer $v_t^c$ in \eqref{eq:flow_matching_loss} is given by
$$
v_t^c(x)=\frac{(2t-1)-c^2(1-t)^2-(2t-1)^2}{(1-t)(c^2(1-t)^2+(2t-1)^2)} x\to-\frac{2}{1-2t}x
$$
for $c\to0$.
Since $v_t^c$ is a multiple of the identity, it has a potential such that $v_t^c$ is the minimizer in \eqref{eq:loss_potential}. Moreover we have that for $c\to 0$ it holds
\begin{align}
\int_0^1\E[\|v_t^c(X_t^c)\|^2]\d t\to
\int_0^1\E\left[\left\|-\frac{2}{1-2t}X_t\right\|^2\right]\d t=\int_0^1\E\left[\left\|2X_0\right\|^2\right]\d t=4.
\end{align}
Thus for $c$ small enough we have that $\int_0^1\E[\|v_t^c(X_t^c)\|^2]\d t>4-\epsilon$. Similarly, we have for $c\to0$ that
$$
\E[\|X_1^c-X_0^c\|^2]=\E[\|X_1-X_0-cW\|^2]=\E[\|-2X_0-cW\|^2]\to\E[\|2X_0\|^2]=4
$$
such that for $c$ small enough it holds $\E[\|X_1^c-X_0^c\|^2]>4-\epsilon$.
On the other side it holds that $\mu_0^c=\mathcal N(0,(1+c^2)\mathrm{Id})$ and $\mu_1=\mathcal N(0,\mathrm{Id})$ such that by the explicit form of the Wasserstein distance of two Gaussians we have that $W_2^2(\mu_0^c,\mu_1)\to0$ for $c\to0$. Finally, the loss function fulfills
\begin{align}
\mathcal L(v_t^c|X_0^c,X_1^c)=\int_0^1\E[\|v_t(X_t^c)-X_1^c+X_0^c\|^2]\d t&\to\int_0^1\E[\|-\frac{2}{1-2t}X_t-X_1+X_0\|^2]\d t\\&=\int_0^1\E[\|-2X_0+2X_0\|^2]\d t=0.
\end{align}
Thus for $c$ small enough the loss is smaller than $\epsilon$.
\end{proof}

\section{Proof of Theorem~\ref{thm:loss_rate}}\label{proof:loss_rate}

\begin{proof}[Proof of part (i)]
We have that $(X_0^{(i)},X_1^{(i)})$ is rectifiable by Theorem~\ref{thm:smoothness} and Example~\ref{ex:smoothing}. Moreover, we have by the definition, that $X_0^{(i)}$ is the sum of two independent Gaussian random variables with zero mean and covariances $(1-c)\mathrm{Id}$ and $c\mathrm{Id}$. Thus, $X_0^{(i)}\sim\mathcal N(0,\mathrm{Id})$. Since $\mathcal R$ preserves the marginals, this yields the claim.
\end{proof}
For the proof of the second part, we need the following lemma.
\begin{lemma}\label{lem:perturb}
We have $\E[\|X_1^{(i)}-X_0^{(i)}\|^2]\leq \E[\|Z_1^{(i)}-Z_0^{(i)}\|^2]+c_i+c_i^2V_1+c_i^2$.
\end{lemma}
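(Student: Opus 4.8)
The plan is to compute $\E[\|X_1^{(i)}-X_0^{(i)}\|^2]$ directly from the definitions and compare it term by term with $\E[\|Z_1^{(i)}-Z_0^{(i)}\|^2]$. To lighten notation I drop the index $(i)$ and write $c=c_i$, so that $X_1-X_0=Z_1-\sqrt{1-c}\,Z_0-\sqrt{c}\,W$ with $W\sim\mathcal N(0,\mathrm{Id})$ independent of $(Z_0,Z_1)$ and $\E[W]=0$. First I would split off the noise: since $A:=Z_1-\sqrt{1-c}\,Z_0$ is a function of $(Z_0,Z_1)$, hence independent of $W$, the mixed term $-2\sqrt{c}\,\E[\langle A,W\rangle]$ vanishes and $\E[\|X_1-X_0\|^2]=\E[\|A\|^2]+c\,\E[\|W\|^2]$.

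Next I would expand $\E[\|A\|^2]=\E[\|Z_1\|^2]-2\sqrt{1-c}\,\E[\langle Z_1,Z_0\rangle]+(1-c)\,\E[\|Z_0\|^2]$ and subtract $\E[\|Z_1-Z_0\|^2]=\E[\|Z_1\|^2]-2\,\E[\langle Z_1,Z_0\rangle]+\E[\|Z_0\|^2]$. The key point is that, by Theorem~\ref{thm:loss_rate}(i), $Z_0\sim\mu_0=\mathcal N(0,\mathrm{Id})$, so $Z_0$ and $W$ have the same second moment; hence $c\,\E[\|W\|^2]=c\,\E[\|Z_0\|^2]$, the two contributions $(1-c)\E[\|Z_0\|^2]$ and $c\,\E[\|W\|^2]$ add up to $\E[\|Z_0\|^2]$, and together with the cancellation of the $\E[\|Z_1\|^2]$ terms one is left with the identity $\E[\|X_1-X_0\|^2]=\E[\|Z_1-Z_0\|^2]+2\bigl(1-\sqrt{1-c}\bigr)\,\E[\langle Z_1,Z_0\rangle]$.

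It then remains to control the residual cross term $2\bigl(1-\sqrt{1-c}\bigr)\,\E[\langle Z_1,Z_0\rangle]$. I would use the elementary bound $0\le 1-\sqrt{1-c}\le c$, the Cauchy--Schwarz inequality $\E[\langle Z_1,Z_0\rangle]\le\sqrt{\E[\|Z_1\|^2]}\,\sqrt{\E[\|Z_0\|^2]}=\sqrt{V_1}\,\sqrt{\E[\|Z_0\|^2]}$ (using again that $Z_1\sim\mu_1$, so $\E[\|Z_1\|^2]=V_1$), and a Young-type splitting of the product to trade it against terms of order $c$, $c^2V_1$ and $c^2$, which assembles into the claimed estimate $\E[\|X_1^{(i)}-X_0^{(i)}\|^2]\le\E[\|Z_1^{(i)}-Z_0^{(i)}\|^2]+c_i+c_i^2V_1+c_i^2$. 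The computation is otherwise entirely routine; the only point that needs care is the algebraic bookkeeping in the second step, where the cancellation hinges on the variance-preserving form of the noise injection (so that $X_0$, $Z_0$ and $W$ all share the same second moment) and on keeping exact track of the coefficient $1-\sqrt{1-c}$, and, to a lesser extent, on choosing the Young split that produces precisely the constants $c_i$, $c_i^2V_1$, $c_i^2$.
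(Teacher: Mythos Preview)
Your approach follows the same outline as the paper's: split off the noise via independence, expand the square, and control the residual using Cauchy--Schwarz together with $1-\sqrt{1-c}\le c$. Your second step is in fact cleaner than the paper's: by observing that $c\,\E[\|W\|^2]=c\,\E[\|Z_0\|^2]$ (both are standard Gaussians) you obtain the exact identity
\[
\E[\|X_1-X_0\|^2]=\E[\|Z_1-Z_0\|^2]+2\bigl(1-\sqrt{1-c}\bigr)\,\E[\langle Z_1,Z_0\rangle],
\]
whereas the paper carries several terms that actually cancel (and its intermediate expansion contains a sign and a factor-of-$2$ slip).

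The one genuine gap is your final step. After Cauchy--Schwarz and $1-\sqrt{1-c}\le c$ you are left with $2c\sqrt{V_1\,\E[\|Z_0\|^2]}$ (equal to $2c\sqrt{V_1}$ under the paper's normalization $\E[\|Z_0\|^2]=1$), and no ``Young-type split'' turns this into $c+c^2V_1+c^2$: for small $c$ the latter is $\sim c$ while the former is $\sim 2c\sqrt{V_1}$, so the desired inequality fails whenever $V_1>\tfrac14$. The paper's proof stumbles at exactly the same place --- it effectively uses $(1-\sqrt{1-c_i})\le c_i^2$ where $\le c_i$ is meant --- so the precise constants in the lemma statement appear to be typos rather than something either argument can deliver. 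What is actually needed downstream in Theorem~\ref{thm:loss_rate}(ii) is any bound of the form $\E[\|X_1-X_0\|^2]\le\E[\|Z_1-Z_0\|^2]+C(V_1)\,c_i$, and this follows immediately from your identity via $2(1-\sqrt{1-c})\E[\langle Z_1,Z_0\rangle]\le c\bigl(V_1+\E[\|Z_0\|^2]\bigr)$, using $2\langle Z_1,Z_0\rangle\le\|Z_1\|^2+\|Z_0\|^2$. So your plan is sound; just replace the vague Young step by this direct estimate and note that the exact constants in the stated lemma are not attainable as written.
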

\begin{proof}
Since $W^{(i)}$ is independent of $(Z_0^{(i)},Z_1^{(i)})$, it holds
\begin{align*}
\E[\|X_1^{(i)}-X_0^{(i)}\|^2]&=\E[\|Z_1^{(i)}-\sqrt{1-c_i}Z_0^{(i)}-\sqrt{c_i}W^k\|^2]\\
&=\E[\|Z_1^{(i)}-\sqrt{1-c_i}Z_0^{(i)}\|^2]+c_i\E[\|W^k\|^2]\\
&=\E[\|Z_1^{(i)}-\sqrt{1-c_i}Z_0^{(i)}\|^2]+c_i.
\end{align*}
We bound the remaining term as
\begin{align*}
\E[\|Z_1^{(i)}-\sqrt{1-c_i}Z_0^{(i)}\|^2]
&=\E[\|Z_1^{(i)}-Z_0^{(i)}-(1-\sqrt{1-c_i})Z_0^{(i)}\|^2]\\
&=\E[\|Z_1^{(i)}-Z_0^{(i)}\|^2+(1-\sqrt{1-c_i})^2\E[\|Z_0^{(i)}\|^2]\\
&\quad-(1-\sqrt{1-c_i})\left(\E[(Z_1^{(i)})^T Z_0^{(i)}]]+\E[\|Z_0^{(i)}\|^2]\right)
\end{align*}
Using Hölder's inequality and the estimate $1-\sqrt{1-c_i}\leq c_i$, this is smaller or equal than
\begin{align*}
&\phantom{=}\E[\|Z_1^{(i)}-Z_0^{(i)}\|^2+(1-\sqrt{1-c_i})^2\E[\|Z_0^{(i)}\|^2]\\
&\phantom{=}+(1-\sqrt{1-c_i})\left(\sqrt{\E[\|Z_1^{(i)}\|^2]\E[\|Z_0^{(i)}\|^2]}+\E[\|Z_0^{(i)}\|^2]\right)\\
&\leq \E[\|Z_1^{(i)}-Z_0^{(i)}\|^2+c_i\E[\|Z_0^{(i)}\|^2]+c_i^2\left(\sqrt{\E[\|Z_1^{(i)}\|^2]\E[\|Z_0^{(i)}\|^2]}+\E[\|Z_0^{(i)}\|^2]\right)
\end{align*}
Since by definition it holds that $Z_0^{(i)}\sim\mu_0$ and $Z_1^{(i)}\sim\mu_1$, we have $\E[\|Z_1^{(i)}\|^2]=V_1^2$ and $\E[\|Z_0^{(i)}\|^2]=V_0^2=1$, such that
the above formula is is equal to
$$
\E[\|Z_1^{(i)}-Z_0^{(i)}\|^2+c_i+c_i^2V_1+c_i^2.
$$
\end{proof}
Now, we can proof the second part of Theorem~\ref{thm:loss_rate}.

\begin{figure}
\begin{figure}[H]
\centering
\begin{subfigure}{.3\textwidth}
\centering
\includegraphics[width=\textwidth]{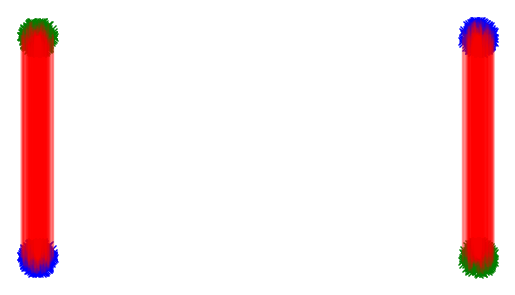}
\caption{Step 1}
\end{subfigure}
\begin{subfigure}{.3\textwidth}
\centering
\includegraphics[width=\textwidth]{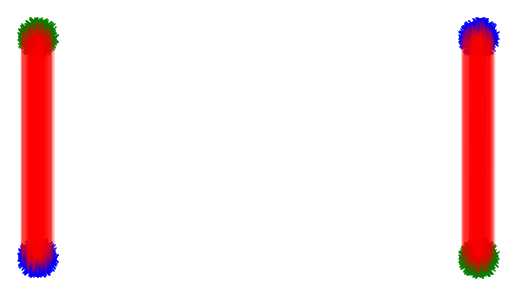}
\caption{Step 2}
\end{subfigure}
\begin{subfigure}{.3\textwidth}
\centering
\includegraphics[width=\textwidth]{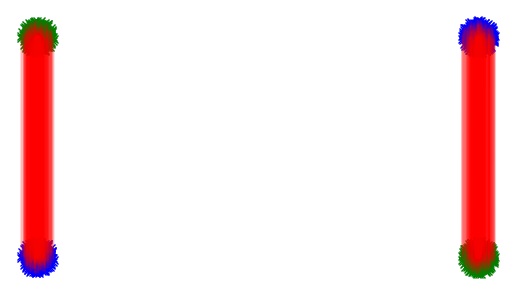}
\caption{Step 5}
\end{subfigure}
\caption{Iterative rectification for the example from \eqref{eq:counterexample1} initialized with the optimal coupling $(X_0,X_1)$.}\label{fig:opt}
\end{figure}

\begin{figure}[H]
\centering
\begin{subfigure}{.3\textwidth}
\centering
\includegraphics[width=\textwidth]{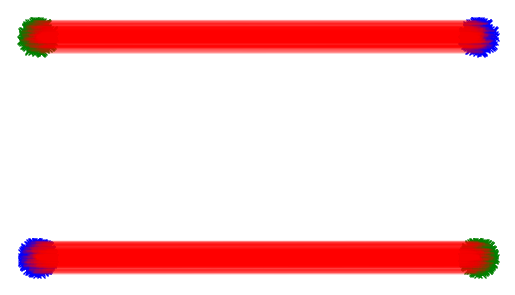}
\caption{Step 1}
\end{subfigure}
\begin{subfigure}{.3\textwidth}
\centering
\includegraphics[width=\textwidth]{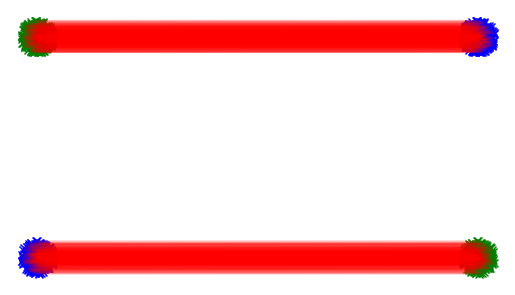}
\caption{Step 2}
\end{subfigure}
\begin{subfigure}{.3\textwidth}
\centering
\includegraphics[width=\textwidth]{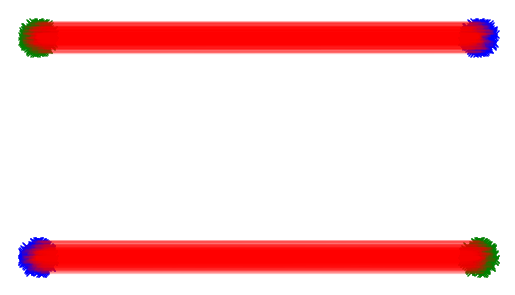}
\caption{Step 5}
\end{subfigure}
\caption{Iterative rectification for the example from \eqref{eq:counterexample1} initialized with the non-optimal fixed point $(\tilde X_0,\tilde X_1)$ of $\mathcal R_p$.}\label{fig:non-opt-exp}
\end{figure}

\begin{figure}[H]
\centering
\begin{subfigure}{.3\textwidth}
\centering
\includegraphics[width=\textwidth]{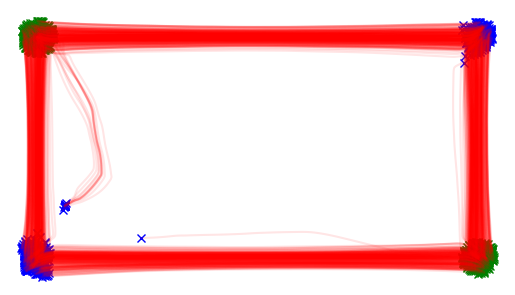}
\caption{Step 1}
\end{subfigure}
\begin{subfigure}{.3\textwidth}
\centering
\includegraphics[width=\textwidth]{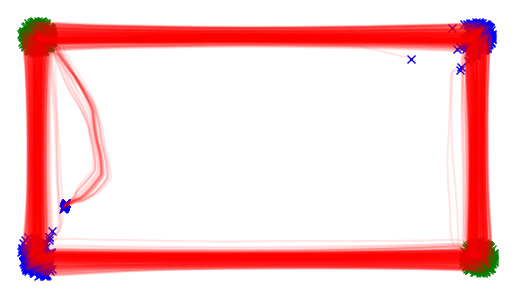}
\caption{Step 2}
\end{subfigure}
\begin{subfigure}{.3\textwidth}
\centering
\includegraphics[width=\textwidth]{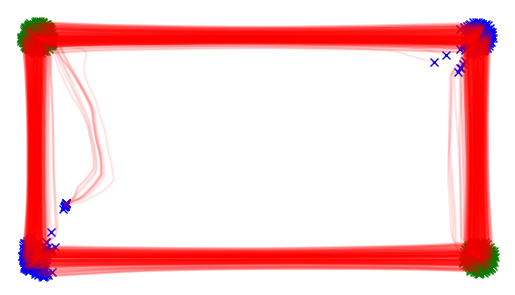}
\caption{Step 5}
\end{subfigure}
\caption{Iterative rectification for the example from Section~\ref{sec:disconnected_support} initialized with the independent coupling.}\label{fig:independent}
\end{figure}
\vspace{-3.5mm}
\end{figure}

\begin{proof}[Proof of Theorem~\ref{thm:loss_rate} (ii)]
    By \cite[eqt (28)]{L2022}, we have that 
    $$
    \E[\|X_1^{(i)}-X_0^{(i)}\|^2]-\E[\|Z_1^{(i+1)}-Z_0^{(i+1)}\|^2]\geq L^{(i)}.
    $$
    Thus, we get by Lemma~\ref{lem:perturb} that 
    \begin{align}
        L^{(i)}&\leq\E[\|Z_1^{(i)}-Z_0^{(i)}\|^2]-\E[\|Z_1^{(i+1)}-Z_0^{(i+1)}\|^2]+c_i+c_i^2V_1+c_i^2\\
        &\leq\E[\|Z_1^{(i)}-Z_0^{(i)}\|^2]-\E[\|Z_1^{(i+1)}-Z_0^{(i+1)}\|^2]+(2+V_1)c_i.
    \end{align}
    Summing this equation up for $i=0,...,K-1$, we obtain
    \begin{align}
    \sum_{i=0}^{K-1}L^{(i)}&\leq\E[\|Z_1^{(0)}-Z_0^{(0)}\|^2]-\E[\|Z_1^{(K)}-Z_0^{(K)}\|^2]+K(2+V_1)\bar c_K\\
    &\leq \E[\|Z_1^{(0)}-Z_0^{(0)}\|^2]+K(2+V_1)\bar c_K.
    \end{align}
    Noting that the minimum of the $L^{(i)}$ is always smaller or equal than the mean, this yields 
    $$
    \min\limits_{i=0,...,K-1} L^{(i)}\leq \frac{\E[\|Z_1^{(0)}-Z_0^{(0)}\|^2]}{K}+(2+V_1)\bar c_K\in\mathcal O\left(\frac{1}{K}+\bar c_K\right)
    .
    $$
\end{proof}

\section{Numerical Verification of the Counterexamples}\label{app:num_very}

\begin{figure}
\begin{figure}[H]
\centering
\begin{subfigure}{.3\textwidth}
\centering
\includegraphics[width=\textwidth]{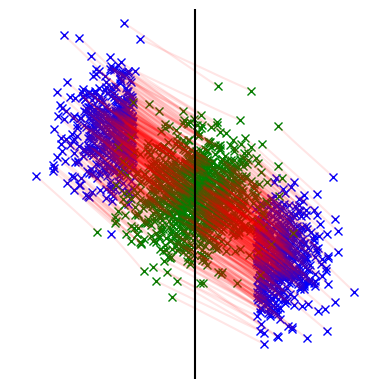}
\caption{Step 1}
\end{subfigure}
\begin{subfigure}{.3\textwidth}
\centering
\includegraphics[width=\textwidth]{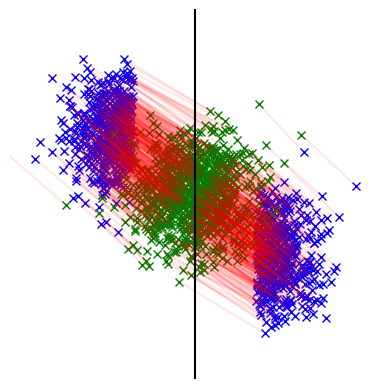}
\caption{Step 2}
\end{subfigure}
\begin{subfigure}{.3\textwidth}
\centering
\includegraphics[width=\textwidth]{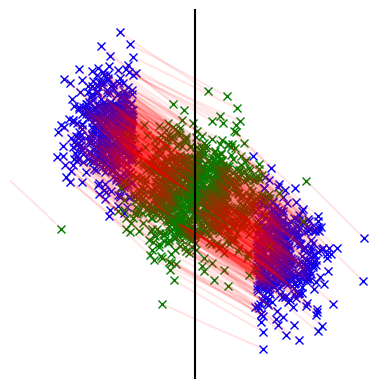}
\caption{Step 5}
\end{subfigure}
\caption{Iterative rectification for the example in Remark~\ref{rem:gauss_latent} initialized with the optimal coupling.}\label{fig:opt2}
\end{figure}

\begin{figure}[H]
\centering
\begin{subfigure}{.3\textwidth}
\centering
\includegraphics[width=\textwidth]{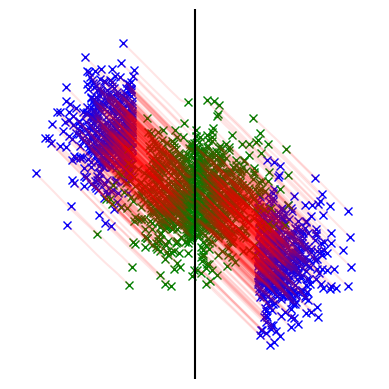}
\caption{Step 1}
\end{subfigure}
\begin{subfigure}{.3\textwidth}
\centering
\includegraphics[width=\textwidth]{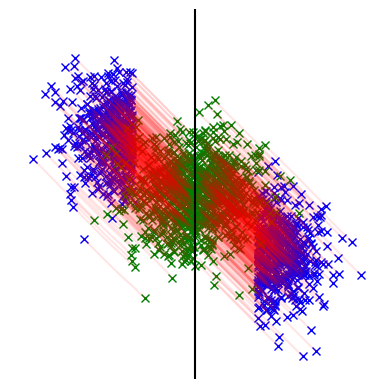}
\caption{Step 2}
\end{subfigure}
\begin{subfigure}{.3\textwidth}
\centering
\includegraphics[width=\textwidth]{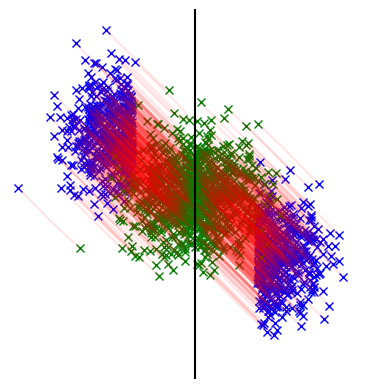}
\caption{Step 5}
\end{subfigure}
\caption{Iterative rectification for the example in Remark~\ref{rem:gauss_latent} initialized with the non-optimal fixed point of $\mathcal R_p$ from \eqref{eq:non-opt-fix-point2}.}\label{fig:non-opt2}
\end{figure}

\begin{figure}[H]
\centering
\begin{subfigure}{.3\textwidth}
\centering
\includegraphics[width=\textwidth]{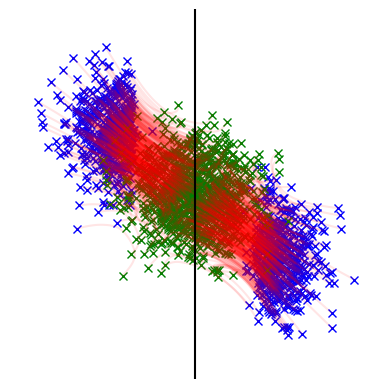}
\caption{Step 1}
\end{subfigure}
\begin{subfigure}{.3\textwidth}
\centering
\includegraphics[width=\textwidth]{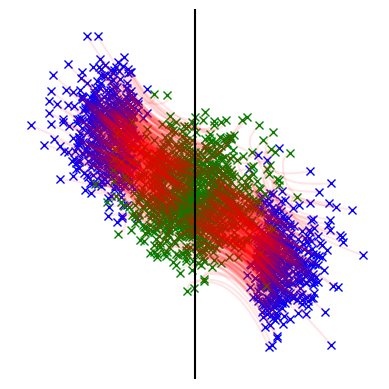}
\caption{Step 2}
\end{subfigure}
\begin{subfigure}{.3\textwidth}
\centering
\includegraphics[width=\textwidth]{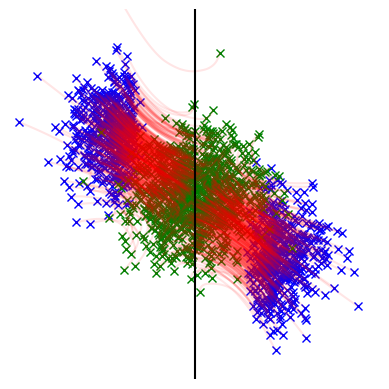}
\caption{Step 5}
\end{subfigure}
\caption{Iterative rectification for the example in Remark~\ref{rem:gauss_latent} initialized with the independent coupling.}\label{fig:independent2}
\end{figure}
\end{figure}

In this appendix, we numerically verify the findings from Section~\ref{sec:disconnected_support}. To this end, we consider the iteration $(X_0^{(i+1)},X_1^{(i+1)})=\mathcal R_p((X_0^{(i)},X_1^{(i)})$, which corresponds to minimizing
$$
\mathcal L(v_t^{(i)}|X_0^{(i)}, X_1^{(i)})=\int_0^1\E[\|v_t^{(i)}((1-t)X_0^{(i)}+tX_1^{(i)})-X_1^{(i)}+X_0^{(i)}\|^2].
$$
For the implementation, we parameterize the velocity fields $v_t^{(i)}$ as $v_t^{(i)}(x)=\varphi^{(i)}(t,x)$, where $\varphi^{(i)}$ is a fully connected ReLU neural network with three hidden layers and 512 neurons per hidden layer.
We minimize the loss functions $\mathcal L(v_t^{(i)}|X_0^{(i)}, X_1^{(i)})$ with the Adam optimizer for 40000 steps with batch size $256$ and initial step size $10^{-2}$ which is reduced by a factor of $0.995$ every $40$ steps.
For the initial coupling $(X_0^{(0)},X_1^{(0)})$, we consider three cases. These are
\begin{itemize}
    \item[-] the optimal coupling $(X_0,X_1)$ from \eqref{eq:counterexample1},
    \item[-] the non-optimal coupling $(\tilde X_0,\tilde X_1)$ from \eqref{eq:counterexample1}, and
    \item[-] the independent coupling, i.e., we choose $X_0^{(0)}$ and $X_0^{(1)}$ to be independent.
\end{itemize}
The results are given in Figure~\ref{fig:opt} (optimal coupling), Figure~\ref{fig:non-opt-exp} (non-optimal fixed point of $\mathcal R_p$) and Figure~\ref{fig:independent} (independent coupling), where we always plot samples from $X_0^{(i)}$, the trajectory $Z_t^{(i)}$ of $\dot Z_t^{(i)}=v_t^{(i)}(Z_t^{(i)})$ and the final samples $X_1^{(i)}$.
We plot these results for the first, second and fifth step, i.e., for $i=1$, $i=2$ and $i=5$.
The results verify that both couplings from \eqref{eq:counterexample1} are indeed fixed points of $\mathcal R_p$. If we start with the independent coupling, it does not converge to either of them within the first couple of steps. Instead, it splits the mass of both modes from $\mu_0$ and transports it to either of the modes from $\mu_1$. 
We observe that approximating this non-smooth velocity is hard for neural networks such that numerical errors appear.
Nevertheless, it seems to be close to a fixed point of $\mathcal R_p$, since the coupling does not change much throughout the iterations.

We redo the experiment for the example from Remark~\ref{rem:gauss_latent}. Note that here, we do not have access to the analytical form of the optimal coupling. As a remedy, we generate two datasets of $20000$ points from $\mu_0$ and $\mu_1$ and compute the discrete optimal coupling between them using the Python Optimal Transport (POT) package \cite{FCGA2021}.
The results are given in Figure~\ref{fig:opt2} (optimal coupling), Figure~\ref{fig:non-opt2} (non-optimal fixed point of $\mathcal R_p$) and Figure~\ref{fig:independent2} (independent coupling).
The black line indicates the separation between the left and right half-plane. We can see that the non-optimal coupling from Remark~\ref{rem:gauss_latent} is indeed a fixed point of $\mathcal R_p$ and transports the mass on the left and right half-plane separately. In contrast, the optimal coupling splits the mass differently. If we start with the independent coupling, the process seems to converge to the optimal coupling, even though the lines are not fully straight even after five iterations.

\section{Numerical Experiments for the Smoothed Rectification}\label{app:numerics_smoothed}

We consider the example of Remark~\ref{rem:gauss_latent} and apply the smoothed rectification with parameter $c_k=\frac{c_0}{k}$ for $c_0\in{0,0.01,0.05,0.1,0.2}$, where $c_0=0$ resembles the case of the standard rectification $\mathcal R_p$. The initial coupling \smash{$(X_0^{(0)},X_1^{(0)})$} is set to the non-optimal fixed point of $\mathcal R_p$ from Remark~\ref{rem:gauss_latent} (defined in eqt \eqref{eq:non-opt-fix-point2}, visualization in Figure~\ref{fig:non-opt-gauss}). We report the transport distance \smash{$(\mathbb E[\|X_0-X_1\|^2])^{1/2}$} versus the number of steps of the smoothed rectification procedure in Table~\ref{tab:exp}. A lower transport distance indicates that the coupling is closer to optimal transport. The optimal coupling admits a transport distance of $2.66$ (evaluated based on $20000$ samples using the POT package \cite{FCGA2021}). Since all experiments are initialized with the same coupling, the first step coincides over all runs, independent of the noise level. Due to numerical errors in the marginals, the transport distance is sometimes slightly smaller than the analytical optimum. Overall, we observe that the smoothed rectification indeed escapes the non-optimal fixed points from Section~\ref{sec:disconnected_support}. However, the convergence becomes slower if the noise level approaches zero. 
The generated couplings a visualized in for the different choices of $c_0$ in the Figures~\ref{fig:smoothed_0.01} to \ref{fig:smoothed_0.2}.

\begin{table}
    \caption{Caption}
    \centering
    \begin{tabular}{l|ccccc}
        $c_0$ & step 1 & step 2 & step 3 & step 4 & step 5 \\\hline
$0$ & $2.84$ & $2.81$ & $2.78$ & $2.80$ & $2.79$\\
$0.01$ & $2.79$ & $2.75$ & $2.73$ & $2.70$ & $2.68$\\
$0.05$ & $2.73$ & $2.68$ & $2.69$ & $2.68$ & $2.67$\\
$0.1$ & $2.70$ & $2.69$ & $2.70$ & $2.70$ & $2.71$\\
$0.2$ & $2.68$ & $2.64$ & $2.65$ & $2.67$ & $2.66$
    \end{tabular}
    \label{tab:exp}
\end{table}

\begin{figure}
\begin{figure}[H]
\centering
\begin{subfigure}{.19\textwidth}
\centering
\includegraphics[width=\textwidth]{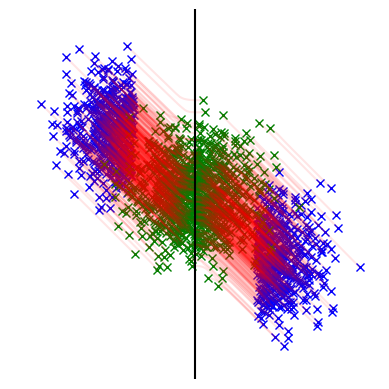}
\caption{Step 1}
\end{subfigure}
\begin{subfigure}{.19\textwidth}
\centering
\includegraphics[width=\textwidth]{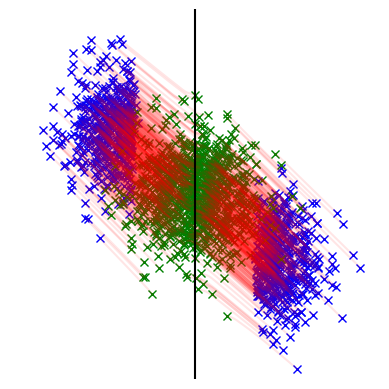}
\caption{Step 2}
\end{subfigure}
\begin{subfigure}{.19\textwidth}
\centering
\includegraphics[width=\textwidth]{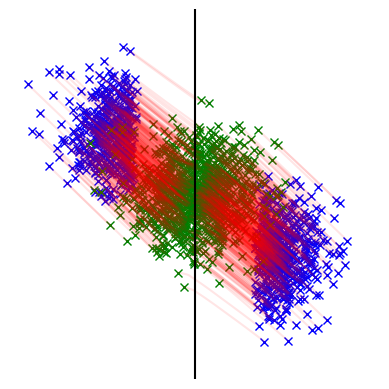}
\caption{Step 5}
\end{subfigure}
\begin{subfigure}{.19\textwidth}
\centering
\includegraphics[width=\textwidth]{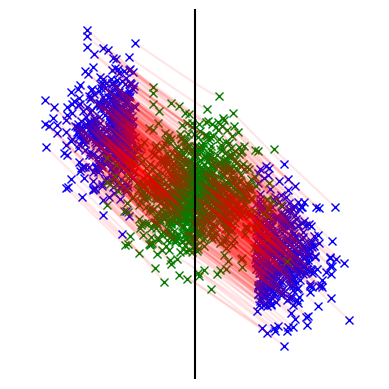}
\caption{Step 5}
\end{subfigure}
\begin{subfigure}{.19\textwidth}
\centering
\includegraphics[width=\textwidth]{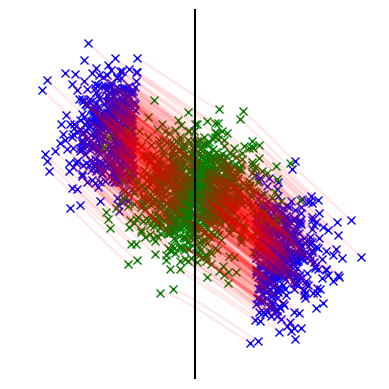}
\caption{Step 5}
\end{subfigure}
\caption{Smoothed rectification for the example in Remark~\ref{rem:gauss_latent} with $c_0=0.01$.}\label{fig:smoothed_0.01}
\end{figure}
\begin{figure}[H]
\centering
\begin{subfigure}{.19\textwidth}
\centering
\includegraphics[width=\textwidth]{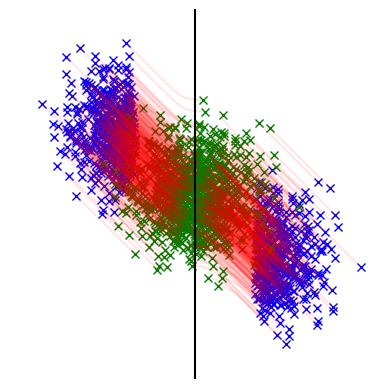}
\caption{Step 1}
\end{subfigure}
\begin{subfigure}{.19\textwidth}
\centering
\includegraphics[width=\textwidth]{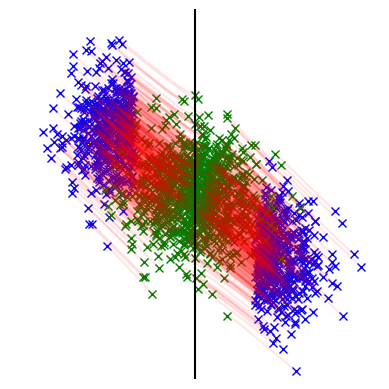}
\caption{Step 2}
\end{subfigure}
\begin{subfigure}{.19\textwidth}
\centering
\includegraphics[width=\textwidth]{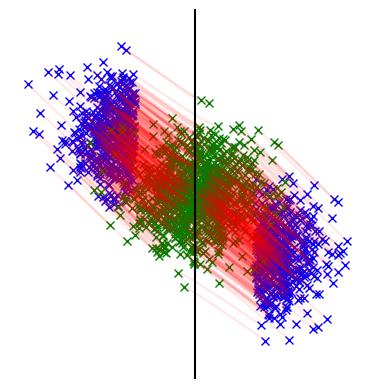}
\caption{Step 5}
\end{subfigure}
\begin{subfigure}{.19\textwidth}
\centering
\includegraphics[width=\textwidth]{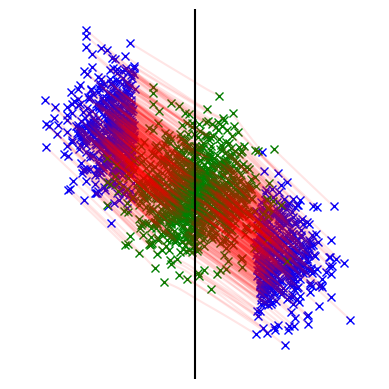}
\caption{Step 5}
\end{subfigure}
\begin{subfigure}{.19\textwidth}
\centering
\includegraphics[width=\textwidth]{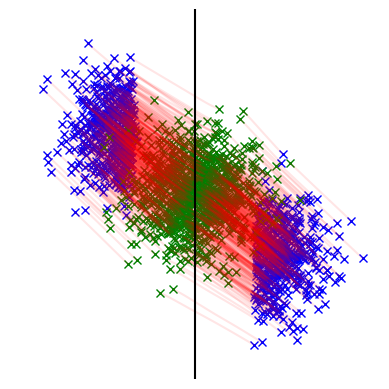}
\caption{Step 5}
\end{subfigure}
\caption{Smoothed rectification for the example in Remark~\ref{rem:gauss_latent} with $c_0=0.05$.}\label{fig:smoothed_0.05}
\end{figure}
\begin{figure}[H]
\centering
\begin{subfigure}{.19\textwidth}
\centering
\includegraphics[width=\textwidth]{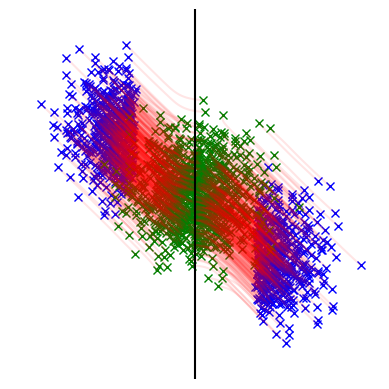}
\caption{Step 1}
\end{subfigure}
\begin{subfigure}{.19\textwidth}
\centering
\includegraphics[width=\textwidth]{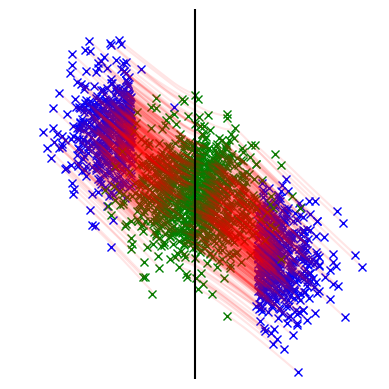}
\caption{Step 2}
\end{subfigure}
\begin{subfigure}{.19\textwidth}
\centering
\includegraphics[width=\textwidth]{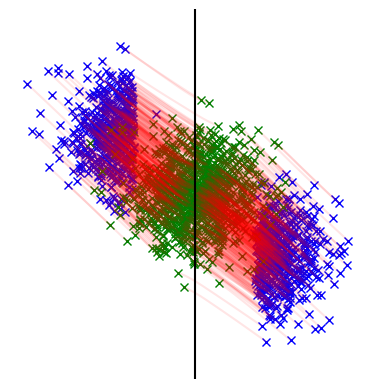}
\caption{Step 5}
\end{subfigure}
\begin{subfigure}{.19\textwidth}
\centering
\includegraphics[width=\textwidth]{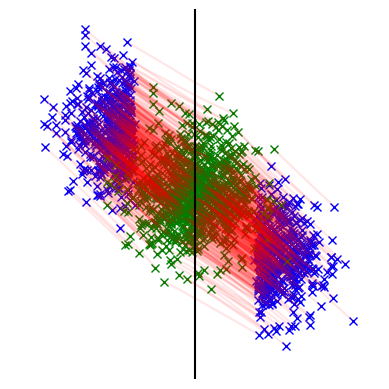}
\caption{Step 5}
\end{subfigure}
\begin{subfigure}{.19\textwidth}
\centering
\includegraphics[width=\textwidth]{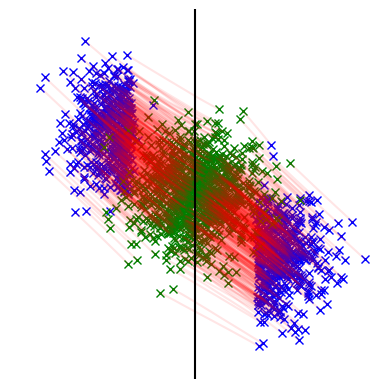}
\caption{Step 5}
\end{subfigure}
\caption{Smoothed rectification for the example in Remark~\ref{rem:gauss_latent} with $c_0=0.1$.}\label{fig:smoothed_0.1}
\end{figure}
\begin{figure}[H]
\centering
\begin{subfigure}{.19\textwidth}
\centering
\includegraphics[width=\textwidth]{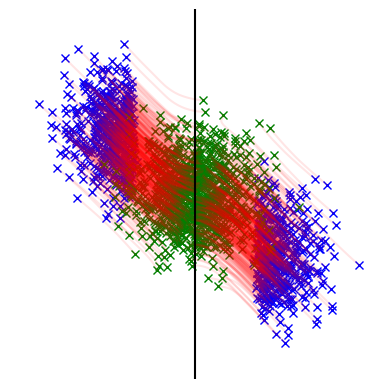}
\caption{Step 1}
\end{subfigure}
\begin{subfigure}{.19\textwidth}
\centering
\includegraphics[width=\textwidth]{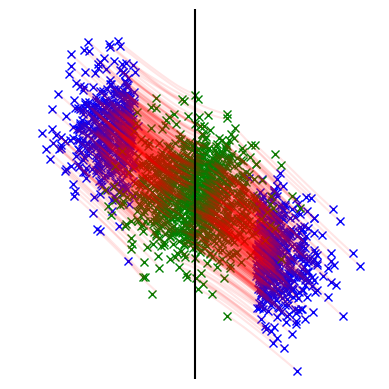}
\caption{Step 2}
\end{subfigure}
\begin{subfigure}{.19\textwidth}
\centering
\includegraphics[width=\textwidth]{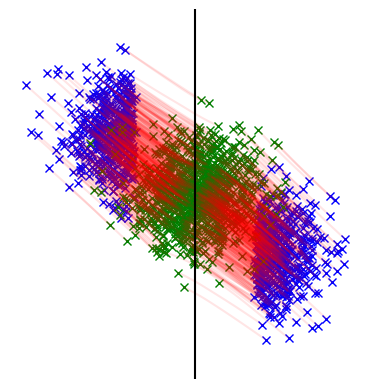}
\caption{Step 5}
\end{subfigure}
\begin{subfigure}{.19\textwidth}
\centering
\includegraphics[width=\textwidth]{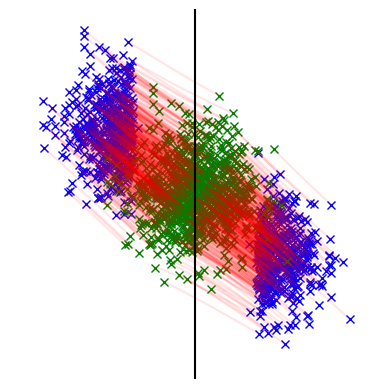}
\caption{Step 5}
\end{subfigure}
\begin{subfigure}{.19\textwidth}
\centering
\includegraphics[width=\textwidth]{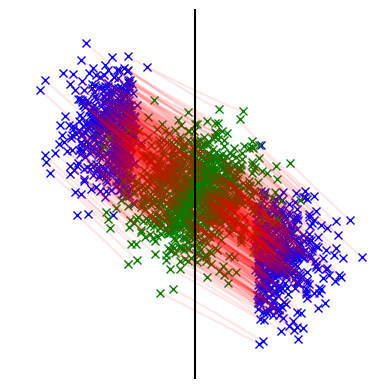}
\caption{Step 5}
\end{subfigure}
\caption{Smoothed rectification for the example in Remark~\ref{rem:gauss_latent} with $c_0=0.2$.}\label{fig:smoothed_0.2}
\end{figure}
\end{figure}

\section{Relation to Diffusion Schrödinger Bridge Matching}\label{app:entropic}

In this appendix, we first briefly describe the relation of rectified flows to Schrödinger bridge matching in Section~\ref{app:sbm}. While for this method, in contrast to rectified flows, the convergence to the \emph{regularized} optimal transport plan is guaranteed, our counterexamples suggest that the convergence speed becomes arbitrary slow for these examples when the regularization strength tends to zero. We justify this hypothesis numerically in Subsection~\ref{app:sbm_numerics}.

\subsection{Diffusion Schrödinger Bridge Matching}\label{app:sbm}

In \cite{ABV2023,SDCD2023}, the authors introduce a noisy version of rectified flows, where the interpolation variables $X_t$ are replaced by noisy versions given as
$$
X_t=(1-t)X_0+tX_1+\sqrt{\epsilon t (1-t)} Z,\quad Z\sim\mathcal N(0,I).
$$
By denoting the law of $X_t$ by $\mu_t$ and choosing the drift term $v_t(x)=\frac{\E[X_1-X_t|X_t=x]}{1-t}$ it can be shown that the Fokker-Planck equation
$$
\partial_t\mu_t+\Div(v_t\mu_t)=\sqrt{\epsilon} \Delta\mu_t
$$
is fulfilled, see \cite{ABV2023,SDCD2023} for details. In particular, samples from $X_1$ can be generated by sampling from the stochastic differential equation
\begin{equation}\label{eq:sde_dsbm}
\d Y_t=v_t(Y_t)\d t+\sqrt{\epsilon}\d W_t
\end{equation}
where $W_t$ denotes a Brownian motion.
Denoting by $(Z_t)_{t\in[0,1]}$ a solution of this SDE, we obtain the noisy rectification $\mathcal R_\epsilon(X_0,X_1)=(Y_0,Y_1)$, where 
$\epsilon=0$ recovers the standard rectification $\mathcal R$.

Now, the authors of \cite{SDCD2023} showed that the iterative noisy rectification given by $(X_0^{(i+1)},X_1^{(i+1)})=\mathcal R_\epsilon(X_0^{(i)},X_1^{(i)})$ converges to a solution of the Schrödinger bridge problem, which is equivalent to entropically regularized optimal transport, see \cite{L2014} for an overview. Based on this observation, they propose a variation of the rectified flows algorithm, called Diffusion Schrödinger Bride Matching (DSBM), where the velocity field $v_t$ is approximated by a neural network which is then trained by the loss function
\begin{equation}\label{eq:dsbm_loss}
v_t\in\argmin_{w_t\in L^2(\mu_t)}\mathcal L(w_t|X_0,X_1),\quad \mathcal L(w_t|X_0,X_1)\coloneqq\int_0^1\E\left[\left\|w_t(X_t)-\frac{X_1-X_t}{1-t}\right\|^2\right]\d t.
\end{equation}
Again, for $\epsilon=0$ this loss function coincides with the loss function \eqref{eq:flow_matching_loss} for rectified flows.

In practice, for the sake of numerical stability, the authors of \cite{SDCD2023} propose to train not only the drift of the SDE \eqref{eq:sde_dsbm}, but also the time-reversal which is has the drift $w_t(x)=\frac{\E[X_0-X_t|X_t=x]}{t}$ which leads to an analogous loss function as \eqref{eq:dsbm_loss}.
Note that related generative models for computing Schrödinger bridges were proposed in \cite{DTHD2021,P2023,VTLL2021}.

\begin{figure}
\begin{figure}[H]
\centering
\begin{subfigure}{.32\textwidth}
\centering
Optimal Coupling
\includegraphics[width=\textwidth]{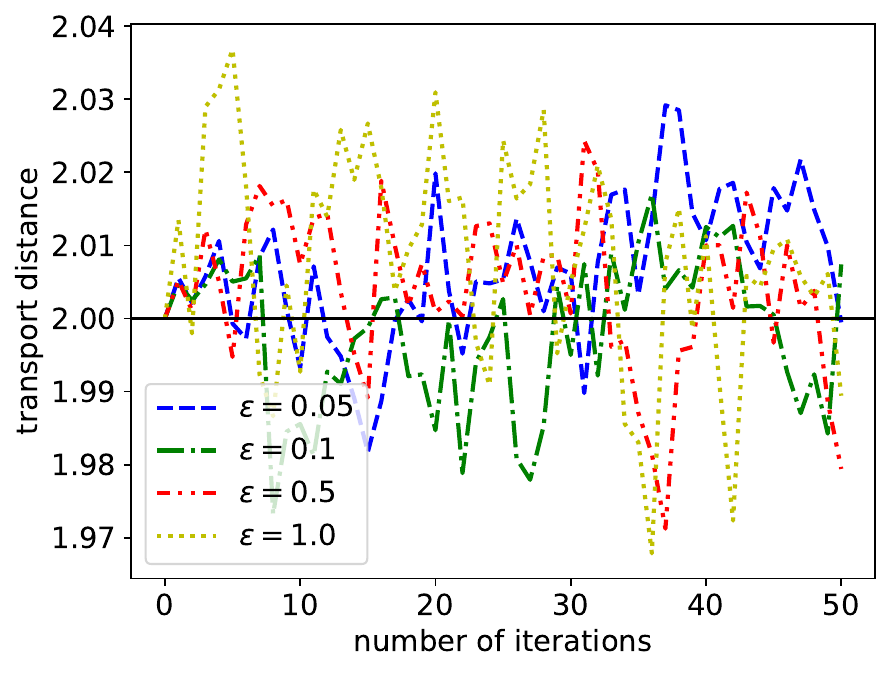}
\end{subfigure}
\begin{subfigure}{.32\textwidth}
\centering
Non-optimal Coupling
\includegraphics[width=\textwidth]{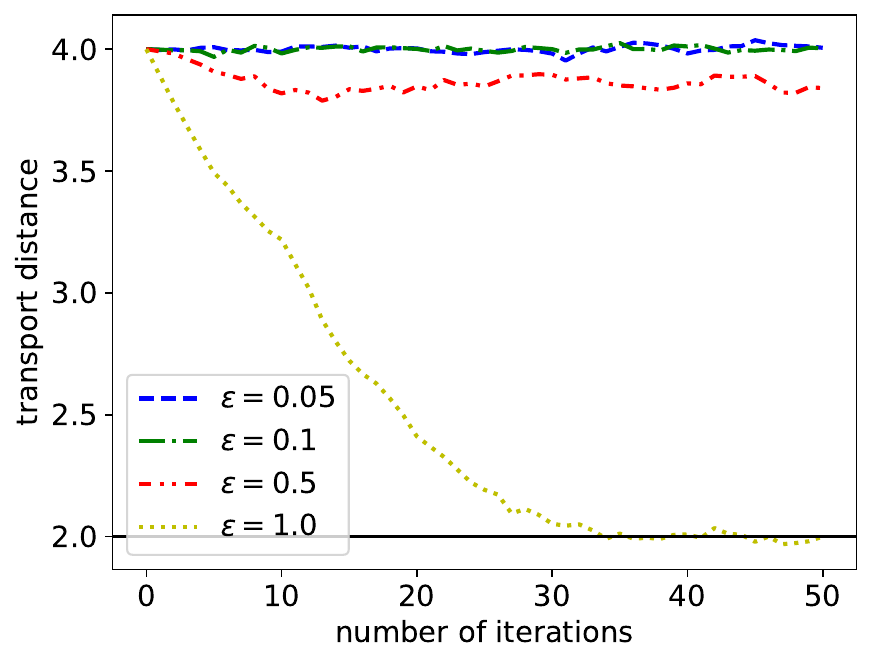}
\end{subfigure}
\begin{subfigure}{.32\textwidth}
\centering
Independent Coupling
\includegraphics[width=\textwidth]{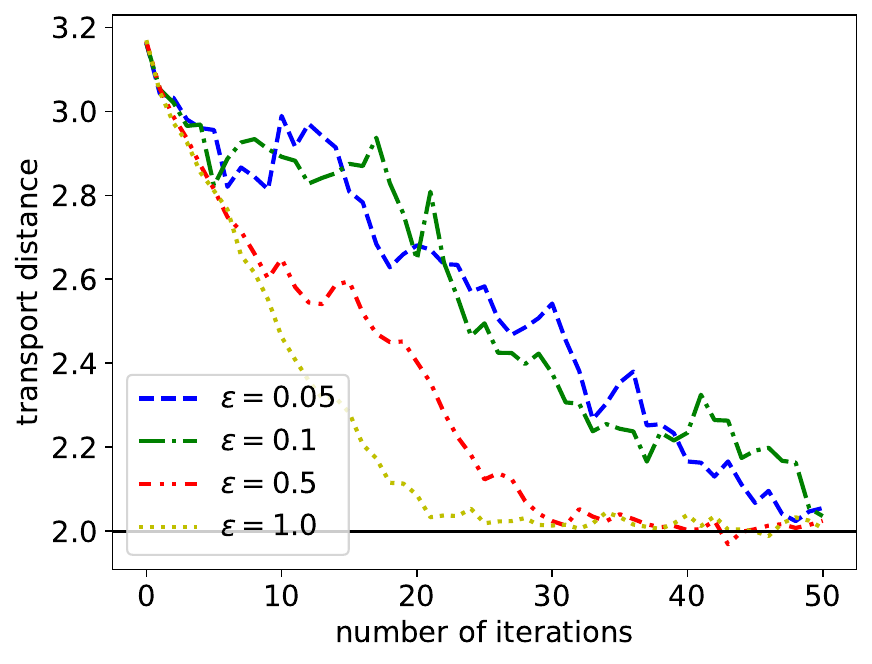}
\end{subfigure}
\caption{Transport distance for DSBM for the example from Section~\ref{sec:disconnected_support} initialized with the coupling $(\tilde X_0,\tilde X_1)$ from \eqref{eq:counterexample1} (left), the coupling $(\tilde X_0,\tilde X_1)$ from \eqref{eq:counterexample1} (middle) and the independent coupling (right).}
\label{fig:dsbm_convergence}
\vspace{-2mm}
\end{figure}

\begin{figure}[H]
\centering
\begin{subfigure}{.32\textwidth}
\centering
Optimal Coupling
\includegraphics[width=\textwidth]{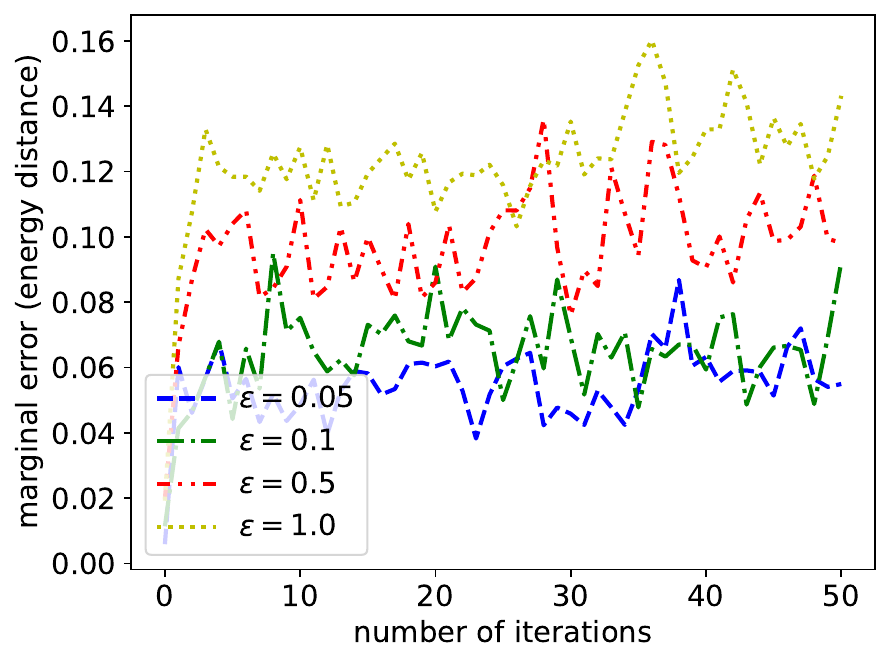}
\end{subfigure}
\begin{subfigure}{.32\textwidth}
\centering
Non-optimal Coupling
\includegraphics[width=\textwidth]{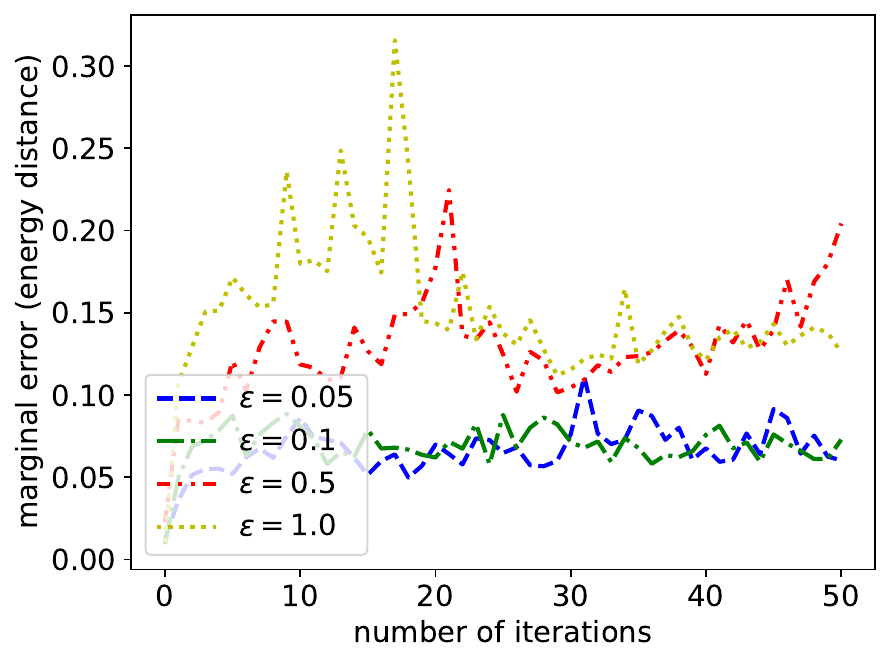}
\end{subfigure}
\begin{subfigure}{.32\textwidth}
\centering
Independent Coupling
\includegraphics[width=\textwidth]{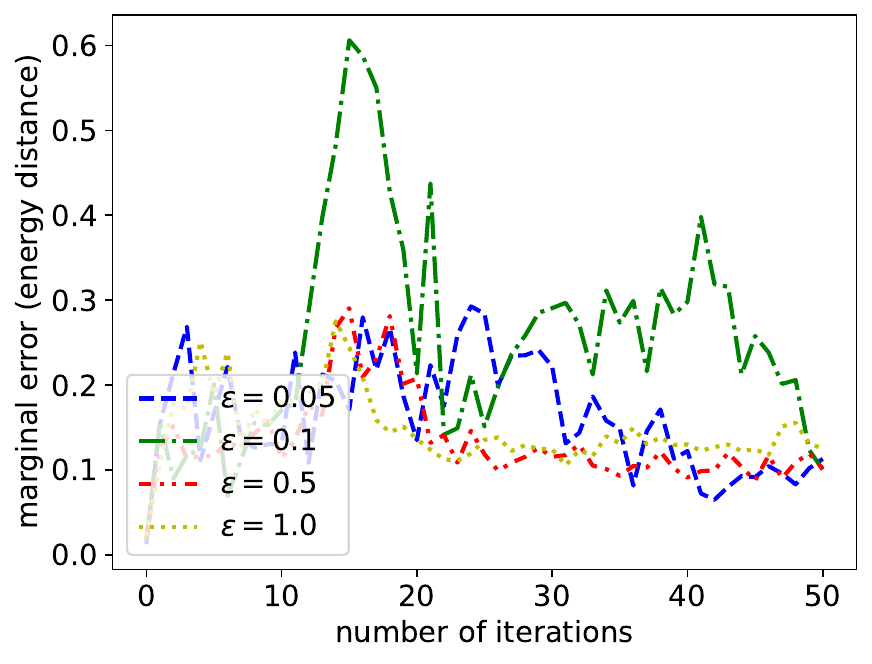}
\end{subfigure}
\caption{Error introduced in the marginals introduced by errors of DSBM measured in the energy distance for the example from Section~\ref{sec:disconnected_support} initialized with the coupling $(\tilde X_0,\tilde X_1)$ from \eqref{eq:counterexample1} (left), the coupling $(\tilde X_0,\tilde X_1)$ from \eqref{eq:counterexample1} (middle) and the independent coupling (right).}
\label{fig:dsbm_marginal_errors}
\end{figure}
\vspace{-3.5mm}
\end{figure}

\begin{figure}
\begin{figure}[H]
\centering
\begin{subfigure}{.24\textwidth}
\centering
\includegraphics[width=\textwidth]{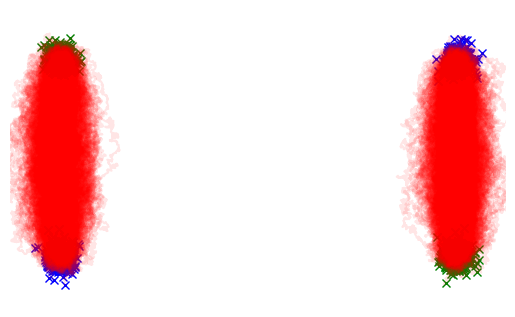}
\caption{Step 1}
\end{subfigure}
\begin{subfigure}{.24\textwidth}
\centering
\includegraphics[width=\textwidth]{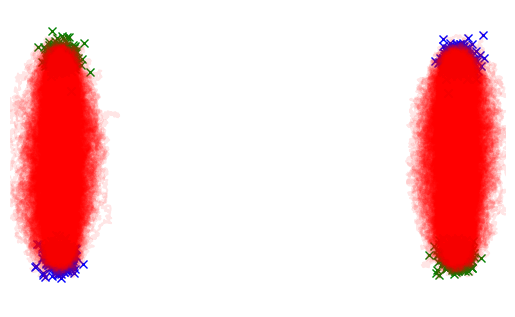}
\caption{Step 5}
\end{subfigure}
\begin{subfigure}{.24\textwidth}
\centering
\includegraphics[width=\textwidth]{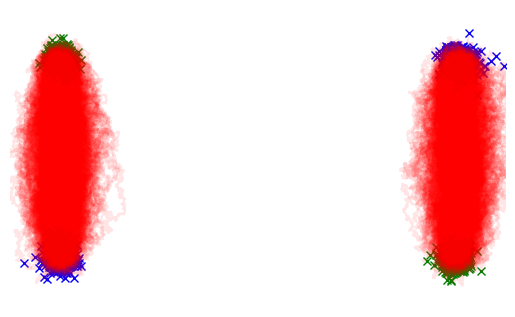}
\caption{Step 20}
\end{subfigure}
\begin{subfigure}{.24\textwidth}
\centering
\includegraphics[width=\textwidth]{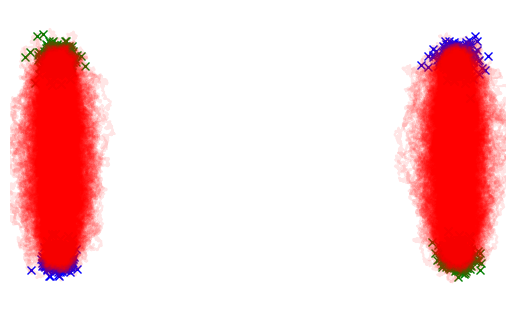}
\caption{Step 50}
\end{subfigure}
\caption{Iterative rectification with the DSBM algorithm \cite{SDCD2023} ($\epsilon=0.1$) for the example from Section~\ref{sec:disconnected_support} initialized with the optimal coupling.}\label{fig:opt_entropic}
\vspace{-2mm}
\end{figure}

\begin{figure}[H]
\centering
\begin{subfigure}{.24\textwidth}
\centering
\includegraphics[width=\textwidth]{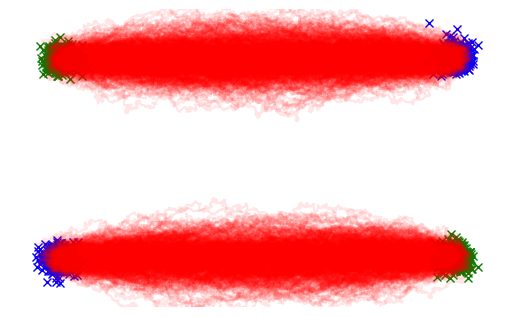}
\caption{Step 1}
\end{subfigure}
\begin{subfigure}{.24\textwidth}
\centering
\includegraphics[width=\textwidth]{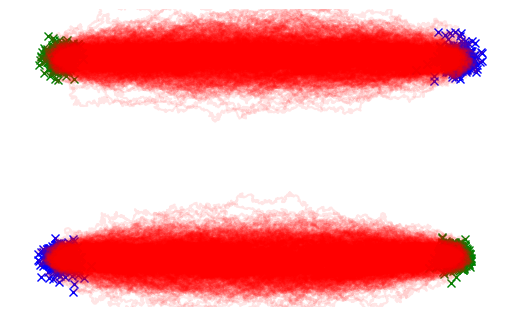}
\caption{Step 5}
\end{subfigure}
\begin{subfigure}{.24\textwidth}
\centering
\includegraphics[width=\textwidth]{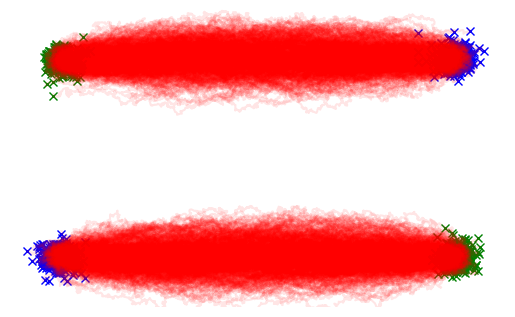}
\caption{Step 20}
\end{subfigure}
\begin{subfigure}{.24\textwidth}
\centering
\includegraphics[width=\textwidth]{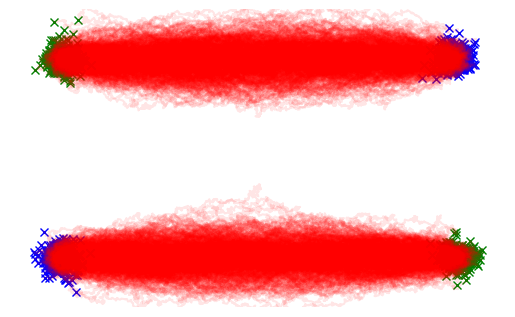}
\caption{Step 50}
\end{subfigure}
\caption{Iterative rectification with the DSBM algorithm \cite{SDCD2023} ($\epsilon=0.1$) for the example from Section~\ref{sec:disconnected_support} initialized with the coupling $(\tilde X_0,\tilde X_1)$ from \eqref{eq:counterexample1}.}\label{fig:nonopt_entropic}
\vspace{-2mm}
\end{figure}

\begin{figure}[H]
\centering
\begin{subfigure}{.24\textwidth}
\centering
\includegraphics[width=\textwidth]{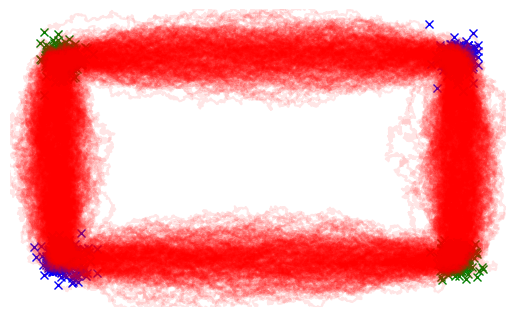}
\caption{Step 1}
\end{subfigure}
\begin{subfigure}{.24\textwidth}
\centering
\includegraphics[width=\textwidth]{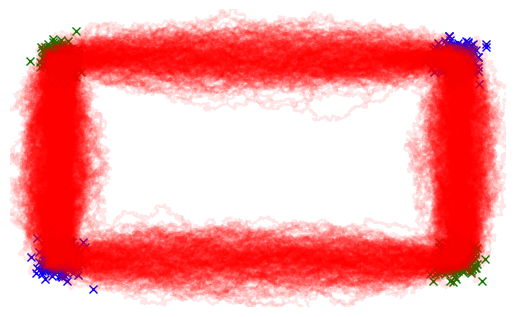}
\caption{Step 5}
\end{subfigure}
\begin{subfigure}{.24\textwidth}
\centering
\includegraphics[width=\textwidth]{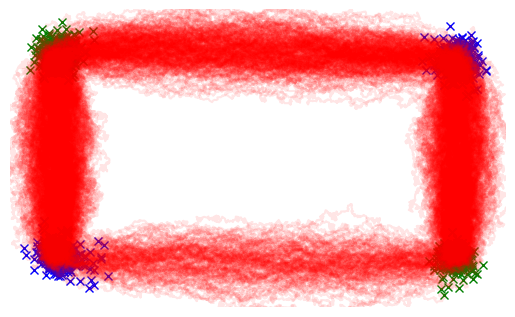}
\caption{Step 20}
\end{subfigure}
\begin{subfigure}{.24\textwidth}
\centering
\includegraphics[width=\textwidth]{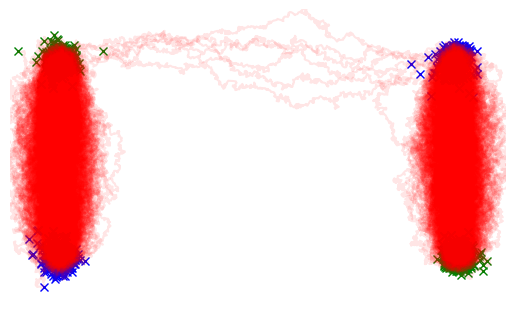}
\caption{Step 50}
\end{subfigure}
\caption{Iterative rectification with the DSBM algorithm \cite{SDCD2023} ($\epsilon=0.1$) for the example from Section~\ref{sec:disconnected_support} initialized with the independent coupling.}\label{fig:independent_entropic}
\vspace{-2mm}
\end{figure}
\end{figure}

\subsection{Numerical Examples with Diffusion Schrödinger Bridge Matching}\label{app:sbm_numerics}

Next, we numerically investigate the convergence speed of DSBM for our example \eqref{eq:counterexample1}. To this end, we run DSBM for 50 iterations where the initial coupling is given (as in Appendix~\ref{app:num_very}) by
\begin{itemize}
    \item[-] the optimal coupling $(X_0,X_1)$ from \eqref{eq:counterexample1},
    \item[-] the non-optimal coupling $(\tilde X_0,\tilde X_1)$ from \eqref{eq:counterexample1}, and
    \item[-] the independent coupling, i.e., we choose $X_0^{(0)}$ and $X_0^{(1)}$ to be independent.
\end{itemize}
We run this test for different regularization parameters $\epsilon\in\{0.05,0.1,0.5,1\}$. For evaluating the results, we plot two error measures versus the number of iterations:
\begin{itemize}
    \item[-] First, we consider the transport cost $\E[\|X_0^{(i)}-X_1^{(i)}\|^2]^{1/2}$ of the coupling $(X_0^{(i)},X_1^{(i)})$. This quantity measures how close the coupling is to the optimal transport plan.
    \item[-] Second, we consider the distance of the distributions of $\mu_0^{(i)}$ of $X_0^{(i)}$ and $\mu_1^{(i)}$ of $X_1^{(i)}$ to the original distributions $\mu_0$ and $\mu_1$. To this end, we evaluate the energy distance
    $$
    \mathcal E(\mu,\nu)=\left(\int\int\|x-y\|\d (\mu-\nu)(x)\d(\mu-\nu)(y)\right)^{1/2}
    $$
    between $\mu_0$ and $\mu_0^{(i)}$ and between $\mu_1$ and $\mu_1^{(i)}$.
    This quantity measures the error which is introduced by the neural network approximation of the drift terms, the optimization error in the loss function and the sampling error in the SDE simulation. 
\end{itemize}
For both evaluation metrics, we discretize the expectation by $50000$ samples.
The results are given in Figure~\ref{fig:dsbm_convergence} and \ref{fig:dsbm_marginal_errors}.
Additionally, we plot the corresponding coupling and trajectories for $\epsilon=0.1$ and iteration $i\in\{1,5,20,50\}$ in Figure~\ref{fig:opt_entropic} (optimal coupling from \eqref{eq:counterexample1}), Figure~\ref{fig:nonopt_entropic} (non-optimal coupling from \eqref{eq:counterexample1}) and Figure~\ref{fig:independent_entropic} (independent coupling).
We observe that for our counterexample from Section~\ref{sec:disconnected_support} a very large regularization parameter $\epsilon$ is required in order to converge to the entropic optimal transport plan. However, when initialized with the independent coupling DSBM seems to converge in a reasonable time even for moderate $\epsilon$. However, for larger $\epsilon$ also the error introduced in the distributions of $X_0^{(i)}$ and $X_1^{(i)}$ increases. In summary, the examples show that we cannot hope for reasonable \emph{global} convergence rates of the DSBM algorithm. Whether such rates can be derived \emph{locally}, remains an open question.

\end{document}